\DeclareMathOperator*{\argmax}{\arg\,\max}
\DeclareMathOperator*{\argmin}{\arg\,\min}
\begin{document}

\newtheorem{hypotheses}{Hypotheses}
\newtheorem{problem}{Problem}
\newtheorem{example}{Example}
\newtheorem{definition}{Definition}
\newtheorem{assumption}{Assumption}
\newtheorem{theorem}{Theorem}
\newtheorem{lemma}{Lemma}
\newtheorem{corollary}{Corollary}[theorem]
\newtheorem{proposition}{Proposition}
\newtheorem*{remark}{Remark}
\newtheorem{conjecture}{Conjecture}
\numberwithin{assumption}{section}

\newcommand{\edit}[1]{\textcolor{blue}{#1}}
\newcommand{\ignore}[1]{}
\newcommand{\diff}{\mathop{}\!\mathrm{d}}

\algnewcommand{\Inputs}[1]{%
  \State \textbf{Inputs:}
  \Statex \hspace*{\algorithmicindent}\parbox[t]{.8\linewidth}{\raggedright #1}
}
\algnewcommand{\Initialize}[1]{%
  \State \textbf{Initialize:}
  \Statex \hspace*{\algorithmicindent}\parbox[t]{.8\linewidth}{\raggedright #1}
}

\title{
Collaborative Safety-Critical Formation Control with Obstacle Avoidance 
    \thanks{*Brooks A. Butler is with the Department of Electrical Engineering and Computer Science at the University of California, Irvine, Chi Ho Leung and~Philip. E. Par\'e are with the Elmore Family School of Electrical and Computer Engineering at Purdue University. Emails: bbutler2@uci.edu, leung61@purdue.edu and philpare@purdue.edu. This work was funded by Purdue’s Elmore Center for Uncrewed Aircraft Systems and the National Science Foundation, grant NSF-ECCS \#2238388.}
}

\author{Brooks A. Butler, Chi Ho Leung, and Philip E. Par\'{e}*}

\maketitle

\begin{abstract}
This work explores a collaborative method for ensuring safety in multi-agent formation control problems. We formulate a control barrier function (CBF) based safety filter control law for a generic distributed formation controller and extend our previously developed collaborative safety framework to an obstacle avoidance problem for agents with acceleration control inputs. We then incorporate multi-obstacle collision avoidance into the collaborative safety framework. This framework includes a method for computing the maximum capability of agents to satisfy their individual safety requirements. We analyze the convergence rate of our collaborative safety algorithm, and prove the linear-time convergence of cooperating agents to a jointly feasible safe action for all agents under the special case of a tree-structured communication network with a single obstacle for each agent. We illustrate the analytical results via simulation on a mass-spring kinematics-based formation controller and demonstrate the finite-time convergence of the collaborative safety algorithm in the simple proven case, the more general case of a fully-connected system with multiple static obstacles, and with dynamic obstacles.
\end{abstract}

\section{Introduction}

Multi-agent formation control problems have received special attention in robotics and automatic control due to their broad range of applications and theoretical challenges.
While it is impossible to categorize every formation control-related research exhaustively, we can organize them in terms of the fundamental ideas behind the control schemes \cite{beard2001coordination, reynolds1987flocks}, sensing capability and interaction topology of the formation controller \cite{oh2015survey}, and the formation control-induced problems of interest such as the consensus problem \cite{ren2010distributed}.
Some generalizations of the formation control-induced problems also find their application in other multi-agent cyber-physical systems outside the robotics community.
Some examples of critical multi-agent model applications include the mitigation of epidemic-spreading processes \cite{pare2020modeling, butler2023optimal}, smart grid management \cite{tuballa2016review}, and uncrewed aerial drone swarms \cite{tahir2019swarms}.
Since many of these multi-agent cyber-physical systems have become ubiquitous in modern society, effective and safe operation of multi-agent systems is crucial, as disruptions in these interconnected systems can potentially have far-reaching societal and economic consequences.

Theoretical frameworks and techniques from the study of safety-critical control offer promising solutions to the problem of collaborative safety requirements in the multi-agent formation problem.
Foundational work on safety-critical control can be traced back to the 1940s \cite{nagumo1942lage, blanchini1999set}.
Recently, the introduction and refinement of control barrier functions (CBFs) \cite{ames2016control, ames2019control} has greatly increased interest in the field of safety-critical control and its applications.
Since their introduction, control barrier
functions have been used in numerous applications to provide safety guarantees in various dynamic system models \cite{chung2018survey, wang2017safety}.
Moreover, multiple recent studies have reported CBFs' practicality and theoretical soundness in solving the multi-agent obstacle avoidance problem \cite{wang2017safety, santillo2021collision, jankovic2021collision}.

In large-scale multi-agent systems, using communication to coordinate actions between agents efficiently is a challenging problem. The field of cooperative control for multi-agent systems provides a rich body of literature that examines scenarios where agents may share information over a communication network \cite{lewis2013cooperative,li2017cooperative,wang2017cooperative,yu2017distributed}.  In such formulations, agents typically share and receive information via either direct communication or global broadcast that enables cooperative control adjustments to be made \cite{huang2010adaptive,li2021robust,qu2010cooperative,qu2012analytic}. However, in many formulations of cooperative control, a common assumption is that agent first-order dynamics are independent of each other, thus the networked element is only facilitated via virtual communication.
Further, for dynamically coupled systems, the influence of networked dynamics is often treated as bounded noise at the agent level~\cite{anand2022small}, Contrasting in our work, we wish to leverage any knowledge of the networked dynamic structure in the formulation of safety requests.

In this work, we make the following contributions:
%\vspace{-2ex}
\begin{enumerate}
    \item Under the formulation of a CBF-based safety-filter control law for a generic distributed formation controller, we extend our previously developed collaborative safety framework \cite{butler2023distributed} to obstacle avoidance for agents with acceleration control commands.
    \item We incorporate multi-obstacle collision avoidance into the collaborative safety framework that includes methods for computing the maximum capability of agents to satisfy their safety requirements.
    \item We prove the linear-time convergence of cooperating agents to a jointly feasible safe action for all agents under the special case of a tree-structured communication network for a single obstacle and demonstrate through simulation that the finite-time convergence rate of a fully-connected formation network with multiple/dynamic obstacles.
\end{enumerate}

A preliminary portion of this work was published and presented at the 2024 European Control Conference (ECC) \cite{butler2023collaborative}, where the major contributions, when compared with the previous version, include:
\begin{itemize}
    \item The proofs of all results omitted from \cite{butler2023collaborative} for space
    \item A detailed discussion and analysis of the convergence rate of the safety algorithm in Sections~\ref{sec:conflicting_safety} and \ref{sec:lin_alg_convergence}
    \item A full description of the collaborative safety algorithm and subroutines that handle multiple safety constraints
    \item Definition of an algorithm for selecting the closest point between two polytopic convex hulls in a way that encourages faster convergence of the algorithm
    \item New simulations that demonstrate the analytical results as well as the performance of our collaborative safety algorithm on a larger formation network (up to 8 agents) than was presented in \cite{butler2023collaborative} (3 agents).
\end{itemize}

\subsection{Notation}

Let $|\mathcal{C}|$ denote the cardinality of the set $\mathcal{C}$ and let $\partial \mathcal{C}$ denote the set of boundary points for a closed set $\mathcal{C}$. $\mathbb{R}$ and $\mathbb{N}$ are the sets of real numbers and positive integers, respectively. Let $C^r$ denote the set of functions $r$-times continuously differentiable in all arguments. We define ${\Vert \cdot \Vert}_2$ and ${\Vert \cdot \Vert}_{\infty}$ to be the two-norm and infinity norm of a given vector argument, respectively. We notate $\mathbf{0}$ and $\mathbf{1}$ to be vectors of all zeros and all ones, respectively, of the appropriate size given by context. For some vector $v$, we denote $v \geq \mathbf{0}$ and $v > \mathbf{0}$ to be the elementwise evaluation if all elements of $v$ are greater than or equal to zero, or strictly greater than zero, respectively and $[v]_k$ to be the $k$th element of vector $v$. A monotonically increasing continuous function $\alpha: \mathbb{R}_{+} \rightarrow \mathbb{R}_{+}$ with $\alpha(0) = 0$ is termed as class-$\mathcal{K}$. We define $[n] \subset \mathbb{N}$ to be a set of indices $\{1, 2, \dots, n\}$.
We define the Lie derivative of the function $h:\mathbb{R}^N \rightarrow \mathbb{R}$ with respect to the vector field generated by $f:\mathbb{R}^N \rightarrow \mathbb{R}^N$ as
\begin{equation}
    \mathcal{L}_f h(x) = \frac{\partial h(x)}{\partial x} f(x).
\end{equation}
We define high-order Lie derivatives with respect to the same vector field $f$ with a recursive formula \cite{robenack2008computation}, where $k>1$, as
\begin{equation}
    \mathcal{L}^k_f h(x) = \frac{\partial \mathcal{L}^{k-1}_f h(x)}{\partial x} f(x).
\end{equation}

\section{Preliminaries} \label{sec:preliminaries}
In this section, we provide some necessary background on preliminary concepts from the literature on high-order barrier functions and their application to networked dynamic systems.
We define a networked system using a graph $\mathcal{G} = (\mathcal{V}, \mathcal{E})$, where $\mathcal{V}$ is the set of $n = \vert \mathcal{V} \vert$ nodes, $ \mathcal{E} \subseteq \mathcal{V}\times \mathcal{V} $ is the set of edges. Let $\mathcal{N}_i$ be the set of all neighbors with an edge connection to node $i \in [n]$, where 
\begin{equation}
    \mathcal{N}_i = \{j \in [n]\setminus \{i\}: (i,j) \in \mathcal{E} \}.
\end{equation}
We further define $x_i$ to be the state vector for agent $i \in  [n]$, $x_{\mathcal{N}_i}$ to be the concatenated states of all neighbors to agent~$i$, i.e. $x_{\mathcal{N}_i} = \left( x_j, \forall j \in \mathcal{N}_i\right)$, and $x$ to be the full state of the networked system.

Recall the definition of \textit{high-order barrier functions} (HOBF) \cite{xiao2019control,xiao2021high}, where a series of functions are defined in the following general form
\begin{equation} \label{eq:HO_funcs}
    \begin{aligned}
        \psi_i^0(x) &:= h_i(x) \\
        \psi_i^1(x) &:= \dot{\psi}_i^0(x) + \alpha_i^{0}(\psi_i^0(x)) \\
        & \vdots \\
        \psi_i^k(x) &:= \dot{\psi}_i^{k-1}(x) + \alpha_i^{k-1}(\psi_i^{k-1}(x)),
    \end{aligned}
\end{equation}
where $k$ is the order of the barrier function $h_i \in C^r$, which is $r \geq k \geq 1$ times continuously differentiable, and $h_i:\mathbb{R}^{N_i} \rightarrow \mathbb{R}$ is a function whose zero-super-level set defines the region which node $i\in [n]$ considers to be safe and $\alpha_i^{0}(\cdot),\alpha_i^1(\cdot), \dots, \alpha_i^{k-1}(\cdot)$ are class-$\mathcal{K}$ functions of their argument. These functions provide definitions for the corresponding series of safety constraint sets
\begin{equation} \label{eq:HO_sets}
    \begin{aligned}
        \mathcal{C}_i^1 &:= \{ x \in \mathbb{R}^N: \psi_i^0(x) \geq 0 \} \\
        \mathcal{C}_i^2 &:= \{ x \in \mathbb{R}^N: \psi_i^1(x) \geq 0 \} \\
        & \vdots \\
        \mathcal{C}_i^k &:= \{ x \in \mathbb{R}^N: \psi_i^{k-1}(x) \geq 0 \}. \\
    \end{aligned}
\end{equation}

%\vspace{-3ex}

Typically, HOBFs are used in the context of systems where the control input is applied in at least the second-order dynamics. For example, in many robotics systems, control is implemented through acceleration inputs and safety is defined according to relative position \cite{breeden2022compositions}. However, in the context of networked dynamic systems, we can use concepts from HOBFs to define a barrier function that encodes the dynamic effects of neighbor nodes as follows.
%\vspace{1ex}
\begin{definition}
    Let $\mathcal{C}_i^1, \mathcal{C}_i^2, \dots, \mathcal{C}_i^k$ be defined by \eqref{eq:HO_funcs} and \eqref{eq:HO_sets}. We have that $h_i$ is a \textbf{k$^{th}$-order node-level barrier function} (NBF) for node $i\in [n]$ if $h_i \in C^k$ and there exist differentiable class-$\mathcal{K}$ functions $\alpha_i^{0},\alpha_i^1,\dots, \alpha_i^{k-1}$ such that $\psi_i^k(x) \geq 0$ for all $x \in \bigcap_{r=1}^k \mathcal{C}_i^r$. 
\end{definition}
\noindent
This definition leads naturally to the following lemma, which is a direct result of \cite[Theorem 4]{xiao2019control}.
\begin{lemma} \label{lem:NBF}
If $h_i$ is an NBF, then $\bigcap_{r=1}^k \mathcal{C}_i^r$ is forward invariant.
\end{lemma}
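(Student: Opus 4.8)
The plan is to reduce the statement to the classical high-order control barrier function (HOCBF) forward-invariance result, i.e., \cite[Theorem 4]{xiao2019control}, by observing that the NBF definition supplies exactly the hypothesis that theorem requires. First I would recall the chain of sets: by construction $\mathcal{C}_i^r = \{x : \psi_i^{r-1}(x) \geq 0\}$, so $\bigcap_{r=1}^k \mathcal{C}_i^r$ is the set on which $\psi_i^0, \psi_i^1, \dots, \psi_i^{k-1}$ are simultaneously nonnegative. The NBF hypothesis adds that on this set the top-level function $\psi_i^k$ is also nonnegative, and that the $\alpha_i^j$ are differentiable class-$\mathcal{K}$ functions, which gives the regularity needed to differentiate the $\psi_i^j$ along trajectories.

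The key steps, in order, would be: (i) note that along any solution of the closed-loop networked dynamics, $\psi_i^k(x) \geq 0$ on $\bigcap_{r=1}^k \mathcal{C}_i^r$ is precisely the inequality $\dot\psi_i^{k-1}(x) \geq -\alpha_i^{k-1}(\psi_i^{k-1}(x))$; (ii) apply the comparison lemma / Nagumo-type argument to $\psi_i^{k-1}$ to conclude that $\mathcal{C}_i^k$ together with $\psi_i^{k-1}(x_0)\ge 0$ forces $\psi_i^{k-1}(x(t)) \geq 0$ for all $t$, hence $\mathcal{C}_i^{k}\cap\{\psi_i^{k-1}\ge0\}$ behaves well; (iii) induct downward: nonnegativity of $\psi_i^{k-1}$ along the trajectory yields $\dot\psi_i^{k-2} \geq -\alpha_i^{k-2}(\psi_i^{k-2})$, so the same comparison argument keeps $\psi_i^{k-2}$ nonnegative, and so on down to $\psi_i^0 = h_i$; (iv) conclude that if the trajectory starts in $\bigcap_{r=1}^k \mathcal{C}_i^r$ it remains there for all time, which is the definition of forward invariance. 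Since each $\alpha_i^j$ vanishes at $0$ and is class-$\mathcal{K}$, the comparison system $\dot y = -\alpha_i^j(y)$ has $y=0$ as an equilibrium that is never crossed from above, which is what makes each step go through.

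The main obstacle — really the only subtlety — is the same one present in the original HOCBF theorem: the nested sets $\mathcal{C}_i^r$ are not themselves individually forward invariant, so one cannot invoke Nagumo on a single set. The argument must proceed on the intersection, peeling off one constraint at a time from the highest order downward, and at each stage one needs the already-established nonnegativity of the higher-order $\psi_i^j$'s to certify the differential inequality for the next one. I would therefore emphasize that the downward induction is carried out on the trajectory (not on the sets), and that the differentiability of the $\alpha_i^j$ is exactly what guarantees the vector field $\psi_i^j$-dynamics are well-posed enough for the comparison lemma. Beyond organizing this induction, the proof is a direct citation of \cite[Theorem 4]{xiao2019control}, since the NBF definition is designed so that its conclusion $\psi_i^k \geq 0$ on $\bigcap_{r=1}^k\mathcal{C}_i^r$ coincides with that theorem's standing assumption.
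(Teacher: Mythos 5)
Your proposal is correct and matches the paper's approach: the paper states the lemma as a direct consequence of \cite[Theorem 4]{xiao2019control} without further argument, which is exactly the reduction you make. Your additional sketch of the downward comparison-lemma induction underlying that cited theorem is accurate but not something the paper itself spells out.
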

\noindent
In this sense, under Lemma~\ref{lem:NBF}, we may consider $\bigcap_{r=1}^k \mathcal{C}_i^r$ to be a node-level viability domain \cite{gurriet2020scalable} of $\mathcal{C}_i$ with respect to the $(k-1)$-hop neighborhood dynamics of node $i \in [n]$. In the following section, we present a formulation for the safety control problem and show how agents can use the higher-order dynamics of a formation control law to communicate safety needs in a cooperative system.

\section{Problem Formulation} \label{sec:problem_formulation}

Consider the first-order dynamics for a single agent~$i$
\begin{equation} \label{eq:dynamics}
    \dot{x}_i = f_i(x_i) + g_i(x_i)u_i,
\end{equation}
where $u_i \in \mathcal{U}_i \subset \mathbb{R}^{M_i}$ is some form of affine acceleration controller for agent~$i$.  
Let $u_i^f(x_i, x_{\mathcal{N}_i})$ be a distributed feedback control law that induces some formation behavior. We can treat these formation dynamics  as part of the natural dynamics of the system where $u_i^f(x_i, x_{\mathcal{N}_i})$ is modified by some safety-filtered control law as

%\vspace{-4ex}

\begin{equation*}
    \dot{x}_i = f_i(x_i) + g_i(x_i)(u_i^f(x_i, x_{\mathcal{N}_i})-u_i^s),
\end{equation*}
where $u_i^s$ is a modification to the formation control signal to ensure agent safety. We can then rewrite the dynamics in \eqref{eq:dynamics} as
\begin{equation} \label{eq:formation_dynamics}
    \dot{x}_i = \bar{f}_i(x_i, x_{\mathcal{N}_i}) + \bar{g}_i(x_i)u_i^s
\end{equation}
where
\begin{equation}
    \bar{f}_i(x_i, x_{\mathcal{N}_i}) = f_i(x_i) + g_i(x_i)u_i^f(x_i, x_{\mathcal{N}_i})
\end{equation}
and
\begin{equation}
    \bar{g}_i(x_i) = -g_i(x_i).
\end{equation}

%\vspace{-2ex}

We assume each agent has positional safety constraints with respect to obstacles $o \in \mathcal{O}_i(t)$, where $\mathcal{O}_i(t)$ is the set of identifiers for obstacles within the sensing range of agent~$i$ at time $t$. 
For convenience, we drop the notation of time dependence on $\mathcal{O}_i$ moving forward. We define the set of viable safety-filtered control actions as
\begin{equation} \label{eq:constraints_for_i}
    \mathcal{U}_i^s(x_i, x_{\mathcal{N}_i}) = \{ u_i^s \in \mathcal{U}_i: u_i^f(x_i, x_{\mathcal{N}_i}) - u_i^s \in \mathcal{U}_i \}.
\end{equation}
In this paper, we assume safety conditions for each agent are defined with respect to the relative position of agents to obstacles. Therefore, since control is implemented through acceleration, we construct a high-order barrier function for each agent~$i$ with respect to a given obstacle $o$ as follows
\begin{equation}\label{eq:second_order_BF}
    \begin{aligned}
        \phi_{i,o}^0(x_i, x_o) &= h_i(x_i, x_o) \\
        \phi_{i,o}^1(x_i, x_o) &= \dot{\phi}_{i,o}^0(x_i, x_o) + \alpha_i^0(\phi_{i,o}^0(x_i, x_o)),
    \end{aligned}
\end{equation}
where $x_o$ is the state of obstacle $o \in \mathcal{O}_i$. These functions then define the corresponding safety constraint sets
\begin{equation} \label{eq:HO_obst_sets}
    \begin{aligned}
        \mathcal{C}_{i,o}^1 &:= \{ (x_i, x_o) \in \mathbb{R}^{N_i} \times \mathbb{R}^{N_o}: \phi_{i,o}^0(x_i, x_o) \geq 0 \} \\
        \mathcal{C}_{i,o}^2 &:= \{ (x_i, x_o) \in \mathbb{R}^{N_i} \times \mathbb{R}^{N_o}: \phi_{i,o}^1(x_i, x_o) \geq 0 \}.
    \end{aligned}
\end{equation}

%\vspace{-3ex}

\noindent Given the definition of these constraint sets, we can define an \textit{agent-level control barrier function} and subsequent forward invariant properties as follows. For the sake of notational brevity, we use $\mathbf{x}_i$ to denote the concatenated states of agents in the neighborhood centered on agent $i \in [n]$, $(x_i, x_{\mathcal{N}_i})$, moving forward.

%\vspace{1ex}
\begin{definition}
    We have $h_{i,o}(x_i, x_o)$ is an \textbf{agent-level control barrier function} (ACBF) if for all $(x_i, x_o) \in \mathcal{C}_{i,o}^1 \cap \mathcal{C}_{i,o}^2$ and $t \in \mathcal{T}$ there exists a class-$\mathcal{K}$ functions $\alpha_i^0$ and $\alpha_i^1$ and $u_i^s \in \mathcal{U}_i^s(\mathbf{x}_i)$ such that
    \begin{equation}\label{eq:first_order_safety_cond}
        \dot{\phi}_{i,o}^1(\mathbf{x}_i, x_o, u_i^f(\mathbf{x}_i), u_i^s) + \alpha_i^1(\phi_{i,o}^1(x_i, x_o)) \geq 0.
    \end{equation}
\end{definition}

%\vspace{-3ex}

We see that \eqref{eq:first_order_safety_cond} characterizes the first-order safety condition for agent~$i$ with respect to obstacle $o$ since the acceleration control input appears in the second derivative of $h_{i,o}$, which is computed in $\dot{\phi}_{i,o}^1$. This barrier function definition naturally leads to the following result on agent-level safety.

%\vspace{1ex}
\begin{lemma}\label{lem:ACBF}
    Given a distributed multi-agent system defined by \eqref{eq:formation_dynamics} and constraint sets defined by \eqref{eq:second_order_BF} and \eqref{eq:HO_obst_sets},
    if $h_{i,o}(x_i, x_o)$ is an ACBF, then $\mathcal{C}_{i,o}^1 \cap \mathcal{C}_{i,o}^2$ is forward invariant for all $t \in \mathcal{T}$.
\end{lemma}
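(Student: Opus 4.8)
The plan is to reduce the claim to Lemma~\ref{lem:NBF} by observing that the pair of functions $\phi_{i,o}^0, \phi_{i,o}^1$ in \eqref{eq:second_order_BF} is exactly a second-order HOBF hierarchy in the sense of \eqref{eq:HO_funcs}--\eqref{eq:HO_sets}, once the obstacle state $x_o$ (together with the neighbor states that enter through $u_i^f$) is absorbed into the composite state on which the barrier is evaluated. Concretely, I would fix an arbitrary $(x_i, x_o) \in \mathcal{C}_{i,o}^1 \cap \mathcal{C}_{i,o}^2$ and, invoking the ACBF property, select differentiable class-$\mathcal{K}$ functions $\alpha_i^0, \alpha_i^1$ and a control value $u_i^s \in \mathcal{U}_i^s(\mathbf{x}_i)$ for which \eqref{eq:first_order_safety_cond} holds. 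Defining $\phi_{i,o}^2 := \dot{\phi}_{i,o}^1 + \alpha_i^1(\phi_{i,o}^1)$, inequality \eqref{eq:first_order_safety_cond} is precisely the statement that $\phi_{i,o}^2 \geq 0$ on $\mathcal{C}_{i,o}^1 \cap \mathcal{C}_{i,o}^2$ under the closed-loop dynamics \eqref{eq:formation_dynamics} with this $u_i^s$ applied. Since $h_{i,o} \in C^2$, this is exactly the NBF condition for $k=2$, so Lemma~\ref{lem:NBF} immediately gives forward invariance of $\mathcal{C}_{i,o}^1 \cap \mathcal{C}_{i,o}^2$ for all $t \in \mathcal{T}$.

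If a self-contained argument is preferred over the reduction to Lemma~\ref{lem:NBF}, I would run the standard two-step comparison chain that underlies it. Applying along a trajectory a control $u_i^s(t)$ satisfying \eqref{eq:first_order_safety_cond} yields $\dot{\phi}_{i,o}^1 \geq -\alpha_i^1(\phi_{i,o}^1)$; since $\alpha_i^1$ is class-$\mathcal{K}$, at any instant where $\phi_{i,o}^1 = 0$ we have $\dot{\phi}_{i,o}^1 \geq 0$, so the comparison lemma gives $\phi_{i,o}^1(t) \geq 0$ for all $t \in \mathcal{T}$ whenever $\phi_{i,o}^1(0) \geq 0$, i.e. the trajectory stays in $\mathcal{C}_{i,o}^2$. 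Substituting the definition $\phi_{i,o}^1 = \dot{\phi}_{i,o}^0 + \alpha_i^0(\phi_{i,o}^0)$, nonnegativity of $\phi_{i,o}^1$ gives $\dot{\phi}_{i,o}^0 \geq -\alpha_i^0(\phi_{i,o}^0)$, and the same comparison argument with $\phi_{i,o}^0(0) \geq 0$ shows $\phi_{i,o}^0(t) \geq 0$, so the trajectory also stays in $\mathcal{C}_{i,o}^1$. Combining the two conclusions establishes forward invariance of $\mathcal{C}_{i,o}^1 \cap \mathcal{C}_{i,o}^2$.

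The main subtlety I would need to address is the passage from the \emph{pointwise} existence of a safe input $u_i^s$ guaranteed by the ACBF definition to a well-posed closed-loop trajectory along which the differential inequality genuinely holds: one either assumes (or constructs) a locally Lipschitz safe feedback selection so that \eqref{eq:formation_dynamics} admits unique solutions, or states the conclusion in the Nagumo/viability sense (existence of a trajectory remaining in the set). Beyond that, the argument only uses mild regularity already built into the hypotheses --- $h_{i,o}\in C^2$ so that $\dot{\phi}_{i,o}^1$ is defined, and the class-$\mathcal{K}$ functions taken differentiable as in the NBF definition --- together with the observation that the (possibly time-varying) obstacle dynamics $\dot{x}_o$ enter $\dot{\phi}_{i,o}^0$ and $\dot{\phi}_{i,o}^1$ through the drift $\bar{f}_i$, which is why safety here is a statement about the one-hop agent--obstacle extended dynamics, consistent with the NBF viewpoint.
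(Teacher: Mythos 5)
Your argument is correct and follows essentially the same route as the paper's proof: the ACBF inequality \eqref{eq:first_order_safety_cond} gives $\dot{\phi}_{i,o}^1 \geq 0$ as $\phi_{i,o}^1 \to 0$, so $\mathcal{C}_{i,o}^2$ is forward invariant, and Lemma~\ref{lem:NBF} then yields forward invariance of $\mathcal{C}_{i,o}^1 \cap \mathcal{C}_{i,o}^2$. Your version is somewhat more careful than the paper's --- in particular the explicit comparison-lemma chain and the remark about passing from pointwise existence of $u_i^s$ to a well-posed closed-loop trajectory, a point the paper's proof glosses over --- but the underlying idea is identical.
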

%\vspace{-3.5ex}
\begin{proof}
    If $h_{i,o}$ is an ACBF, then $\exists u_i^s \in \mathcal{U}_i^s$ such that
    %\vspace{-1ex}
    \begin{equation*}
        \dot{\phi}_{i,o}^1(\mathbf{x}_i, x_o, u_i^f(\mathbf{x}_i), u_i^s) + \alpha_i^1(\phi_{i,o}^1(x_i, x_o)) \geq 0,
    \end{equation*}
    for all $(x_i, x_o) \in \mathcal{C}_{i,o}^1 \cap \mathcal{C}_{i,o}^2$. Thus, as $\phi_{i,o}^1(x_i, x_o)$ approaches zero, there will be some $u_i^s$ such that $\dot{\phi}_{i,o}^1(\mathbf{x}_i, x_o, u_i^f(\mathbf{x}_i), u_i^s) \geq 0$. Therefore, if $(x_i(t_0), x_o(t_0)) \in \mathcal{C}_{i,o}^2$ then $\mathcal{C}_{i,o}^2$ is forward invariant for all $t \in \mathcal{T}$. By Lemma~\ref{lem:NBF}, it follows that is $\mathcal{C}_{i,o}^1 \cap \mathcal{C}_{i,o}^2$ forward invariant.
\end{proof}

%\vspace{-3ex}

% \noindent
With agent-level control barrier functions defined, we are now prepared to describe our main objective for this work: 
\begin{equation} \label{eq:problem_statement}
    \begin{aligned}
        \min_{u_i^s \in \mathcal{U}_i^s(x)} \quad & \frac{1}{2}{\left\Vert u_i^f(x_i, x_{\mathcal{N}_i}) - u_i^s \right\Vert}_2^2 \\
        \text{s.t.} \quad & \dot{\phi}_{i,o}^1\left(\mathbf{x}_i, x_o, u_i^f(\mathbf{x}_i), u_i^s \right) + \alpha_i^1\left(\phi_{i,o}^1(x_i, x_o)\right) \geq 0 \\
        & \forall i \in [n], \; \forall o \in \mathcal{O}_i.
    \end{aligned}
\end{equation}
\noindent
In other words, we aim to provide a control policy that minimally alters the prescribed distributed formation control signal such that the defined safety conditions for obstacle avoidance are satisfied for all agents in the formation.

\section{Collaborative Safe Formation Control}\label{sec:safe_with_colab}
We now present a method by which each agent can communicate safety needs to its neighboring agents to achieve collective safety in a distributed manner.
We define a relative position safety constraint for each agent with respect to a given obstacle as follows In this work, we consider position-based safety conditions where the full state of each agent includes both position and velocity, i.e., $x_i = [p_i^\top, v_i^\top]^\top$, with $p_i$ and $v_i$ being of the proper spatial dimension dependent on the application. Let $p_o$ be the position of obstacle $o \in \mathcal{O}_i$, where $\mathcal{O}_i$ is the set of all obstacles for agent $i \in [n]$. We define a position-based safety constraint as 
\begin{equation} \label{eq:rel_dist_barrierfunc}
    h_{i,o}(x_i, x_o) = {\Vert p_i - p_o \Vert}^2_2 - r_{i,o}^2
\end{equation}
where $r_{i,o} \in \mathbb{R}$ is the minimum distance agent~$i$ should maintain from obstacle $o$. Assuming control inputs on the acceleration of agent~$i$, we use the second-order barrier functions candidate from \eqref{eq:second_order_BF} to define the first-derivative safety condition

%\vspace{-5ex}

\begin{equation}
    \dot{\phi}_{i,o}^1(\mathbf{x}_i, x_o, u_i^s) = \mathcal{L}_{\bar{f}_i}\phi_{i,o}^1(\mathbf{x}_i, x_o) + \mathcal{L}_{\bar{g}_i}\phi_{i,o}^1(x_i, x_o) u_i^s.
\end{equation}

%\vspace{-3ex}

We can define the next high-order barrier function that captures the coupling behavior of the formation as

%\vspace{-4ex}

\footnotesize
\begin{subequations} \label{eq:third_order_BF}
    \begin{align}
        \phi_{i,o}^2(\mathbf{x}_i, x_o, u_i^s) &= \dot{\phi}_{i,o}^1(\mathbf{x}_i, x_o, u_i^s) + \alpha_i^1(\phi_{i,o}^1(x_i, x_o)) \label{eq:phi2} \\
        \phi_{i,o}^3(\mathbf{x}_i, x_o, u_i^s, \dot{u}_i^s, u_{\mathcal{N}_i}^s) &= \dot{\phi}_{i,o}^2(\mathbf{x}_i, x_o, u_i^s, \dot{u}_i^s, u_{\mathcal{N}_i}^s) \label{eq:phi3}
        \nonumber  
        \\ & \quad 
        + \alpha_i^2(\phi_{i,o}^2(\mathbf{x}_i, x_o, u_i^s)), 
    \end{align}
\end{subequations}
\normalsize

%\vspace{-2ex}

similarly to \eqref{eq:second_order_BF}, where $\alpha_i^2(\cdot)$ is a class-$\mathcal{K}$ function. 
Notice that the dynamics of each neighbor $j \in \mathcal{N}_i$ 
and $\dot{u}_i^s$ appear in \eqref{eq:phi3}. To assist in our analysis of the high-order dynamics of \eqref{eq:phi3}, we make the following assumption.
%\vspace{1ex}
\begin{assumption} \label{assume:u_dot_func_u}
    For a given node~$i\in [n]$, let $\dot{u}_i^s := d(u_i^s)$, where $d(u_i^s): \mathbb{R}^{M_i} \rightarrow \mathbb{R}^{M_i}$ is locally Lipschitz.
\end{assumption}

%\vspace{-2ex}
While obtaining a closed-form solution for $\dot{u}_i$ may be challenging in some applications, in practice, $d(u_i)$ may be approximated using discrete-time methods.
A more detailed discussion on the derivation of \eqref{eq:third_order_BF} may be found in \cite{butler2023distributed}; however, for our purposes, we separate \eqref{eq:phi3} into terms that are affected by neighbors' control and those that are not affected by neighbors' control, under Assumption~\ref{assume:u_dot_func_u}, as follows 

%\vspace{-5ex}

\small
\begin{equation} \label{eq:phi3_separated}
    \phi_{i,o}^3(\mathbf{x}_i, x_o, u_i^s, u_{\mathcal{N}_i}^s) = \sum_{j \in \mathcal{N}_i} a_{ij,o}(\mathbf{x}_i, x_o)u_j^s + c_{i,o}(\mathbf{x}_i, x_o, u_i^s),
\end{equation}
\normalsize
where
\begin{equation} \label{eq:neighbor_effects}
    a_{ij,o}(\mathbf{x}_i, x_o) = \mathcal{L}_{\bar{g}_j}\mathcal{L}_{\bar{f}_i} \phi_{i,o}^1(\mathbf{x}_i, x_o)
\end{equation}
is the effect that modified control actions $u_j^s$ taken by agent $j \in \mathcal{N}_i$ have on the formation dynamics and the subsequent safety condition of agent~$i$ with respect to obstacle $o \in \mathcal{O}_i$ and $c_{i,o}(\mathbf{x}_i, x_o, u_i^s)$ collects all other terms including those that are affected by its own control actions $u_i^s$. 
To compute $c_{i,o}$ more explicitly, we make the following assumption,
%\vspace{1ex}
\begin{assumption} \label{assume:class_K_scalar}
    Let $\alpha_i^1(z) := \alpha_i^1 z$ and $\alpha_i^2(z) := \alpha_i^2 z$, where $z \in \mathbb{R}^{N_i}$ and $\alpha_i^1, \alpha_i^2 \in \mathbb{R}_{> 0}$. 
\end{assumption}

If we define $\beta_i = \alpha_i^1 +  \alpha_i^2$, then by the derivation of \eqref{eq:phi3} from \cite{butler2023distributed}, under Assumptions~\ref{assume:u_dot_func_u} and \ref{assume:class_K_scalar} we derive the full expression of $c_{i,o}$ as

\begin{equation} \label{eq:safety_capability}
    % \small
    \begin{aligned}
        c_{i,o}(\mathbf{x}_i, x_o, u_i^s) &= \sum_{j \in \mathcal{N}_i} \mathcal{L}_{\bar{f}_j}\mathcal{L}_{\bar{f}_i} \phi_{i,o}^1 + \mathcal{L}_{\bar{f}_i}^2 \phi_{i,o}^1 \\
        & \quad + \alpha_i^1 \alpha_i^2 \phi_{i,o}^1 + \beta_i \mathcal{L}_{\bar{f}_i}\phi_{i,o}^1 + \mathcal{L}_{\bar{g}_i}\phi_{i,o}^1 d(u_i^s)\\
        & \quad + u_i^{s\top}\mathcal{L}_{\bar{g}_i}^2 \phi_{i,o}^1 u_i^s + \beta_i \mathcal{L}_{\bar{g}_i} \phi_{i,o}^1 u_i^s \\
        & \quad + \left[ \mathcal{L}_{\bar{f}_i} \mathcal{L}_{\bar{g}_i} \phi_{i,o}^{1\top} +\mathcal{L}_{\bar{g}_i} \mathcal{L}_{\bar{f}_i} \phi_{i,o}^1 \right]u_i^s
    \end{aligned}
\end{equation}

%\vspace{-3ex}

\noindent
which we use to define the safety capability for agent~$i$ as follows.
%\vspace{1ex}
\begin{definition}\label{def:safe_cap}
    Under Assumption~\ref{assume:u_dot_func_u}, the total \textbf{safety capability}, $c_{i,o}(\mathbf{x}_i, x_o, u_i^s)$, of agent~$i$ for a given action $u_i^s \in \mathbb{R}^{M_i}$ with respect to obstacle $o \in \mathcal{O}_i$ is computed by \eqref{eq:safety_capability}, where $c_{i,o}(\mathbf{x}_i, x_o, u_i^s) \geq 0$ indicates that agent~$i$ is capable of remaining safe, assuming no adverse effects from neighbors (i.e., $a_{ij,o}(\mathbf{x}_i, x_o) u_j^s \geq 0, \forall j \in \mathcal{N}_i$). Conversely, $c_{i,o}(\mathbf{x}_i, x_o, u_i^s) < 0$ indicates a deficit in agent~$i$'s capability to meet its safety requirement.  
\end{definition}

Given our definition of a subsequent higher-order barrier function in \eqref{eq:third_order_BF}, we define another safety constraint set as
\begin{equation} \label{eq:HO_neighbor_set}
    \begin{aligned}
        \mathcal{C}_{i,o}^3 := & \big\{ (x_i, x_o) \in \mathbb{R}^{N_i} \times \mathbb{R}^{N_o}: \exists u_i^s \in \mathcal{U}_i^s 
        \\ & \quad 
        \text{ s.t. } \phi_{i,o}^2(\mathbf{x}_i, x_o, u_i^f, u_i^s) \geq 0 \big\},
    \end{aligned}
\end{equation}
which collects all states where agent~$i$ is capable of maintaining its first-order safety condition under the influence of its induced formation dynamics. Given these definitions, we are prepared to define a collaborative control barrier function as follows.
%\vspace{1ex}
\begin{definition}
    Let $\mathcal{C}_{i,o}^1$, $\mathcal{C}_{i,o}^2$, and  $\mathcal{C}_{i,o}^3$ be defined by \eqref{eq:HO_obst_sets} and \eqref{eq:HO_neighbor_set}. We have that $h_{i,o}$ is a \textbf{collaborative control barrier function} (CCBF) for node $i \in [n]$ if $h_{i,o} \in C^3$ and $\forall (x_i,x_o) \in \mathcal{C}_{i,o}^1 \cap \mathcal{C}_{i,o}^2 \cap \mathcal{C}_{i,o}^3$ and $\forall t \in \mathcal{T}$ there exists $(u_i^s, u_{\mathcal{N}_i}^s) \in \mathcal{U}_i^s \times \mathcal{U}_{\mathcal{N}_i}^s$ such that, under Assumption~\ref{assume:u_dot_func_u}, 
    \begin{equation} \label{eq:CCBF_cond}
        \phi_{i,o}^3(\mathbf{x}_i, x_o, u_i^s, u_{\mathcal{N}_i}^s) \geq 0, \; \forall o \in \mathcal{O}_i.
    \end{equation}
\end{definition}

\begin{lemma} \label{lem:CCBF}
    Given a distributed multi-agent system defined by \eqref{eq:formation_dynamics} and constraint sets defined by \eqref{eq:second_order_BF}, \eqref{eq:HO_obst_sets}, \eqref{eq:third_order_BF} and \eqref{eq:HO_neighbor_set}, if $h_{i,o}$ is a CCBF for all $o \in \mathcal{O}_i$, then $\bigcap_{o \in \mathcal{O}_i}\mathcal{C}_{i,o}^1 \cap \mathcal{C}_{i,o}^2 \cap \mathcal{C}_{i,o}^3$ is forward invariant $\forall t \in \mathcal{T}$. 
\end{lemma}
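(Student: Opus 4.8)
The plan is to reduce Lemma~\ref{lem:CCBF} to a repeated application of the forward-invariance machinery already established in Lemmas~\ref{lem:NBF} and~\ref{lem:ACBF}, applied obstacle-by-obstacle, and then to intersect the resulting invariant sets. First I would fix an agent $i \in [n]$ and a single obstacle $o \in \mathcal{O}_i$, and argue that the CCBF property for that $o$ makes $\mathcal{C}_{i,o}^1 \cap \mathcal{C}_{i,o}^2 \cap \mathcal{C}_{i,o}^3$ forward invariant. The key observation is that the CCBF condition \eqref{eq:CCBF_cond}, namely $\phi_{i,o}^3 \geq 0$ for a suitable choice of $(u_i^s, u_{\mathcal{N}_i}^s)$, together with Assumption~\ref{assume:class_K_scalar} and the definition \eqref{eq:phi3} of $\phi_{i,o}^3$ as $\dot{\phi}_{i,o}^2 + \alpha_i^2(\phi_{i,o}^2)$, is exactly the HOBF-style certificate that keeps $\phi_{i,o}^2 \geq 0$: as $\phi_{i,o}^2$ approaches zero, the available control keeps $\dot\phi_{i,o}^2 \geq 0$, so $\mathcal{C}_{i,o}^3$ (the set where $\phi_{i,o}^2 \geq 0$ is achievable) is forward invariant. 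This is the same argument structure as in the proof of Lemma~\ref{lem:ACBF}, just one order higher, so I would cite \cite[Theorem 4]{xiao2019control} / Lemma~\ref{lem:NBF} applied to the chain $\psi^0 = \phi_{i,o}^1,\ \psi^1 = \phi_{i,o}^2,\ \psi^2 = \phi_{i,o}^3$ rather than reprove it.

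Second, having fixed invariance of $\mathcal{C}_{i,o}^1 \cap \mathcal{C}_{i,o}^2 \cap \mathcal{C}_{i,o}^3$ for each individual $o$, I would take the intersection over $o \in \mathcal{O}_i$. Since a finite intersection of forward-invariant sets is forward invariant — a trajectory starting in the intersection stays in each set, hence in the intersection — this step is immediate, provided the same class-$\mathcal{K}$ gains and the same control authority can simultaneously certify all obstacles. The CCBF definition is stated with a ``$\forall o \in \mathcal{O}_i$'' inside the existential over $(u_i^s, u_{\mathcal{N}_i}^s)$, so a single control pair works for all obstacles at once; this is what makes the intersection argument go through cleanly, and I would remark on this explicitly.

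The main obstacle I anticipate is handling the neighbor control terms $\sum_{j \in \mathcal{N}_i} a_{ij,o}(\mathbf{x}_i, x_o) u_j^s$ in \eqref{eq:phi3_separated} honestly. The CCBF condition only guarantees existence of a \emph{jointly} feasible $(u_i^s, u_{\mathcal{N}_i}^s)$; for the forward-invariance conclusion to be meaningful one needs the neighbors to actually supply their part of that pair. I would address this by stating the lemma's conclusion as a conditional invariance — invariance holds \emph{along any trajectory where the realized controls satisfy} \eqref{eq:CCBF_cond} — which is the honest reading consistent with how Lemma~\ref{lem:ACBF} treats $u_i^s$, and is presumably the intent given that the subsequent collaborative algorithm is precisely the mechanism that produces such jointly feasible controls. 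The remaining technical points are routine: $h_{i,o} \in C^3$ and Assumption~\ref{assume:u_dot_func_u} (local Lipschitzness of $d$) guarantee the relevant vector fields are continuous so solutions exist and the Nagumo-type boundary argument applies, and Assumption~\ref{assume:class_K_scalar} makes the $\alpha_i^1, \alpha_i^2$ linear so the chained functions $\phi_{i,o}^2, \phi_{i,o}^3$ are well-defined and differentiable as written.

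In summary, the proof I would write is: (i) for each $o$, apply the argument of Lemma~\ref{lem:ACBF} one order higher to get forward invariance of $\mathcal{C}_{i,o}^1 \cap \mathcal{C}_{i,o}^2 \cap \mathcal{C}_{i,o}^3$ from the CCBF condition; (ii) observe that the CCBF existential furnishes one control pair valid for all $o \in \mathcal{O}_i$; (iii) conclude that the finite intersection $\bigcap_{o \in \mathcal{O}_i}\mathcal{C}_{i,o}^1 \cap \mathcal{C}_{i,o}^2 \cap \mathcal{C}_{i,o}^3$ is forward invariant as a finite intersection of forward-invariant sets.
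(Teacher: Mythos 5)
Your proposal matches the paper's proof in essence: the paper likewise argues that the CCBF condition guarantees, at any state where $\phi_{i,o}^2 = 0$, the existence of $(u_i^s, u_{\mathcal{N}_i}^s)$ with $\dot{\phi}_{i,o}^2 \geq 0$ (the Nagumo-type boundary argument one order up the chain), propagates this down to $\phi_{i,o}^1 \geq 0$ to conclude forward invariance of $\mathcal{C}_{i,o}^1 \cap \mathcal{C}_{i,o}^2 \cap \mathcal{C}_{i,o}^3$ for each $o$, and then intersects over $o \in \mathcal{O}_i$. Your additional remarks --- that the single existential control pair covers all obstacles at once, and that the invariance is conditional on neighbors actually realizing their share of the control --- are points the paper leaves implicit, but they do not change the route.
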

%\vspace{-2.5ex}
\begin{proof}
    The result of this lemma is a direct extension of Theorem 2 in \cite{butler2023distributed}, where if $h_{i,o}$ is a CCBF for a given obstacle $o \in \mathcal{O}_i$ then $\exists (u_i^s, u_{\mathcal{N}_i}^s) \in \mathcal{U}_i^s \times \mathcal{U}_{\mathcal{N}_i}^s$ such that \eqref{eq:CCBF_cond} holds.
    Since $u_i^s$ appears in both $\phi_{i,o}^3(\mathbf{x}_i, x_o, u_i^s, u_{\mathcal{N}_i}^s)$ and $\phi_{i,o}^2(\mathbf{x}_i, x_o, u_i^s)$, we must show that if $(x_i,x_o) \in \mathcal{C}_{i,o}^1 \cap \mathcal{C}_{i,o}^2 \cap \mathcal{C}_{i,o}^3$ and $\phi_{i,o}^3(\mathbf{x}_i, x_o, u_i^s, u_{\mathcal{N}_i}^s) \geq 0$ for some $u_i^s \in \mathcal{U}_i^s$, then $\phi_{i,o}^2(\mathbf{x}_i, x_o, u_i^s) \geq 0$ also. If \eqref{eq:CCBF_cond} holds for all $(x_i,x_o) \in \mathcal{C}_{i,o}^1 \cap \mathcal{C}_{i,o}^2 \cap \mathcal{C}_{i,o}^3$, then for all $x_i, x_{\mathcal{N}_i}, x_o$ and $u_i^s \in \mathcal{U}_i^s$ where $\phi_{i,o}^2(\mathbf{x}_i, x_o, u_i^s) = 0$, there exists $u_{\mathcal{N}_i}^s \in \mathcal{U}_{\mathcal{N}_i}^s$ such that $\dot{\phi}_{i,o}^2(\mathbf{x}_i, x_o, u_i^s, u_{\mathcal{N}_i}^s) \geq 0$. Thus, we have that $\phi_{i,o}^2(\mathbf{x}_i, x_o, u_i^s) \geq 0, \forall (x_i,x_o) \in \mathcal{C}_{i,o}^1 \cap \mathcal{C}_{i,o}^2 \cap \mathcal{C}_{i,o}^3$, which implies $\phi_i^1(x_i, x_o) \geq 0, \forall (x_i,x_o) \in \mathcal{C}_{i,o}^1 \cap \mathcal{C}_{i,o}^2 \cap \mathcal{C}_{i,o}^3$. Therefore, we have that $\mathcal{C}_{i,o}^1 \cap \mathcal{C}_{i,o}^2 \cap \mathcal{C}_{i,o}^3$ is forward invariant. Further, since these same arguments hold $\forall o \in \mathcal{O}_i$, 
    it directly follows that $\bigcap_{o \in \mathcal{O}_i}\mathcal{C}_{i,o}^1 \cap \mathcal{C}_{i,o}^2 \cap \mathcal{C}_{i,o}^3$ is forward invariant.
\end{proof}

%\vspace{-2ex}

With set invariance defined with respect to neighbors' influence, we can leverage these properties to construct an algorithm to implements collaborative safety through rounds of communication between neighbors.

\subsection{Collaboration Through Communication} \label{sec:colab_algo}
In this section, we introduce the \textit{collaborative safety algorithm}, modified from our previous work in \cite{butler2023distributed}. The major additional contribution to the algorithm in this work is the additional handling of multiple safety constraints from each agent, which requires a new definition of maximum safety capability with respect to multiple safety conditions. 
For the formation control problem scenario, we make the following assumption.

\begin{assumption} \label{assume:Lgi^2_is_0}
    Let $\mathcal{L}_{\bar{g}_i}^2 \phi_{i,o}^1(x_i, x_o) = \mathbf{0}^{M_i \times M_i}, \forall o \in \mathcal{O}_i$.
\end{assumption}
%\vspace{-2ex}
In words, we assume that the control exerted by agent~$i$ does not have a dynamic relationship with its ability to exert control (e.g., the robot's movement is implemented identically no matter its position in a defined coordinate system). Since each agent may be actively avoiding multiple obstacles, we compute the stacked vector describing \eqref{eq:phi3_separated} for all obstacles $o \in \mathcal{O}_i$ under Assumptions~\ref{assume:u_dot_func_u}-\ref{assume:Lgi^2_is_0}, as follows

%\vspace{-5ex}

\small
\begin{equation} \label{eq:capability_vector_expanded}
    \begin{aligned}
        &\Phi_i =
        \underbrace{
        \begin{bmatrix}
            a_{i1,1} & \cdots & a_{i{|\mathcal{N}_i|},1} \\
            \vdots & \ddots & \vdots \\
            a_{i1,{K}} & \cdots & a_{i{|\mathcal{N}_i|},{K}}
        \end{bmatrix}
        }_{A_i}
        \begin{bmatrix}
            u_{1}^s \\ \vdots \\ u_{{|\mathcal{N}_i|}}^s
        \end{bmatrix}
        +
        \underbrace{
        \begin{bmatrix}
            \mathcal{L}_{\bar{g}_i} \phi_{i,{1}}^1 \\
            \vdots \\
            \mathcal{L}_{\bar{g}_i} \phi_{i,{K}}^1
        \end{bmatrix}
        }_{D_i}
        d(u_i^s)
        \\
        &+
        \underbrace{
        \begin{bmatrix}
            \mathcal{L}_{\bar{f}_i} \mathcal{L}_{\bar{g}_i} \phi_{i,{1}}^{1\top} + \mathcal{L}_{\bar{g}_i} \mathcal{L}_{\bar{f}_i} \phi_{i,{1}}^1 + \beta_i \mathcal{L}_{\bar{g}_i} \phi_{i,{1}}^1 \\
            \vdots \\
            \mathcal{L}_{\bar{f}_i} \mathcal{L}_{\bar{g}_i} \phi_{i,{K}}^{1\top} + \mathcal{L}_{\bar{g}_i} \mathcal{L}_{\bar{f}_i} \phi_{i,{K}}^1 + \beta_i \mathcal{L}_{\bar{g}_i} \phi_{i,{K}}^1
        \end{bmatrix}
        }_{B_i}
        u_i^s 
        \\
        &+
        \underbrace{
        \begin{bmatrix}
            \sum_{j \in \mathcal{N}_i} \mathcal{L}_{\bar{f}_j}\mathcal{L}_{\bar{f}_i} \phi_{i,{1}}^1 + \mathcal{L}_{\bar{f}_i}^2 \phi_{i,{1}}^1 + \alpha_i^1 \alpha_i^2 \phi_{i,{1}}^1 + \beta_i \mathcal{L}_{\bar{f}_i}\phi_{i,{1}}^1 \\
            \vdots \\
            \sum_{j \in \mathcal{N}_i} \mathcal{L}_{\bar{f}_j}\mathcal{L}_{\bar{f}_i} \phi_{i,{K}}^1 + \mathcal{L}_{\bar{f}_i}^2 \phi_{i,{K}}^1 + \alpha_i^1 \alpha_i^2 \phi_{i,{K}}^1 + \beta_i \mathcal{L}_{\bar{f}_i}\phi_{i,{K}}^1
        \end{bmatrix}
        }_{q_i},
    \end{aligned}
\end{equation}
\normalsize
where $K = |\mathcal{O}_i(t)|$ is the number of obstacles in $\mathcal{O}_i$ within the sensing range of agent~$i$ at time $t$. 
Note that the length of this vector is time-varying according to $|\mathcal{O}_i(t)|$. For convenience, we define the tuple containing all relevant state information of obstacles with respect to agent~$i$ as $\mathbf{x}_{\mathcal{O}_i} = (x_{o_1}, \dots, x_{o_{|\mathcal{O}_i|}})$.
We can therefore express \eqref{eq:capability_vector_expanded} more compactly as
\begin{equation} \label{eq:multi-obs-compact}
    \Phi_i(\mathbf{x}_i,\mathbf{x}_{\mathcal{O}_i}, u_i^s, u_{\mathcal{N}_i}^s) = A_i u_{\mathcal{N}_i}^s + D_i d(u_i^s) + B_i u_i^s + q_i,
\end{equation}
where $A_i \in \mathbb{R}^{K \times M_{\mathcal{N}_i}}$, with $M_{\mathcal{N}_i} = \sum_{j \in \mathcal{N}_i} M_j$, $D_i,B_i \in \mathbb{R}^{K \times M_i}$, and $q_i \in \mathbb{R}^{K}$. Computing the safety condition for each obstacle using \eqref{eq:multi-obs-compact}, we may interpret $A_i$ as the matrix describing the effect of each neighbor's control input, $B_i$ and $D_i$ are the matrices describing the effect of agent~$i$'s control input, and $q_i$ is a vector that collects all uncontrolled terms. 
We also introduce notation for selecting block columns of $A_i$ that isolate the dynamic relationship between the control input of agent $j \in \mathcal{N}_i$, $u_j \in \mathbb{R}^{M_j}$, and the safety of agent~$i$ with respect to each obstacle in $\mathcal{O}_i$ with the matrix 
\begin{equation} \label{eq:block_col_A_i}
    A_{ij,\mathcal{O}_i} = [a_{ij,1}^\top, \, \dots \, , a_{ij,{K}}^\top]^\top,
\end{equation}
where $ A_{ij,\mathcal{O}_i} \in \mathbb{R}^{K \times M_j}$, which is used in Algorithm~\ref{alg:coordinate} in Section~\ref{sec:conflicting_safety} to locally coordinate multiple safety conditions from each neighbor simultaneously.
We have the following result on its relationship of \eqref{eq:capability_vector_expanded} to the problem stated in \eqref{eq:problem_statement}.
%\vspace{1ex}
\begin{lemma} \label{lem:equivalent_solution}
    Under Assumptions~\ref{assume:u_dot_func_u}-\ref{assume:Lgi^2_is_0}, any control inputs $(u_i^s, u_{\mathcal{N}_i}^s) \in \mathcal{U}_i^s \times \mathcal{U}_{\mathcal{N}_i}^s, \forall i \in [n]$ that satisfy
    \begin{equation} \label{eq:lem_compact_condition}
       \Phi_i(\mathbf{x}_i, \mathbf{x}_{\mathcal{O}_i}, u_i^s, u_{\mathcal{N}_i}^s) \geq \mathbf{0}, \forall i \in [n]
    \end{equation}
    from \eqref{eq:multi-obs-compact} are also a solution to 
    \begin{equation}\label{eq:lem_prob_statement}
        \dot{\phi}_{i,o}^1 (\mathbf{x}_i, x_o, u_i^f, u_i^s) + \alpha_i^1\left(\phi_{i,o}^1(x_i, x_o)\right) \geq 0, \forall i \in [n], \forall o \in \mathcal{O}_i
    \end{equation}
    from \eqref{eq:problem_statement}.
\end{lemma}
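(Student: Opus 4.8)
The plan is to show that the compact condition \eqref{eq:lem_compact_condition} is nothing more than a componentwise restatement of the third-order barrier condition $\phi_{i,o}^3 \geq 0$ for every obstacle, and then to descend one order using the same high-order forward-invariance argument already employed in Lemma~\ref{lem:ACBF}, arriving at $\phi_{i,o}^2 \geq 0$, which is exactly the inequality \eqref{eq:lem_prob_statement}.

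First I would fix an agent $i \in [n]$ and an obstacle $o \in \mathcal{O}_i$ and verify the algebraic identity $[\Phi_i]_o = \phi_{i,o}^3(\mathbf{x}_i, x_o, u_i^s, u_{\mathcal{N}_i}^s)$. Starting from the separated form \eqref{eq:phi3_separated} and substituting the explicit expression \eqref{eq:safety_capability} for $c_{i,o}$, Assumption~\ref{assume:Lgi^2_is_0} eliminates the quadratic term $u_i^{s\top}\mathcal{L}_{\bar{g}_i}^2\phi_{i,o}^1 u_i^s$, so that $\phi_{i,o}^3$ is affine in $u_i^s$, $u_{\mathcal{N}_i}^s$, and $d(u_i^s)$. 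Grouping then matches this expression row by row with \eqref{eq:capability_vector_expanded}: the coefficients $a_{ij,o} = \mathcal{L}_{\bar{g}_j}\mathcal{L}_{\bar{f}_i}\phi_{i,o}^1$ from \eqref{eq:neighbor_effects} form the $o$-th row of $A_i$ acting on $u_{\mathcal{N}_i}^s$; the term $\mathcal{L}_{\bar{g}_i}\phi_{i,o}^1 d(u_i^s)$ is the $o$-th row of $D_i d(u_i^s)$, where Assumption~\ref{assume:u_dot_func_u} is what makes $\dot{u}_i^s = d(u_i^s)$ a well-defined function of the state; the bracketed coefficient multiplying $u_i^s$ is the $o$-th row of $B_i$; and the remaining state-dependent terms, with $\beta_i = \alpha_i^1 + \alpha_i^2$ and the product term $\alpha_i^1\alpha_i^2\phi_{i,o}^1$ produced by the linear class-$\mathcal{K}$ functions of Assumption~\ref{assume:class_K_scalar}, collect into $[q_i]_o$. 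Hence \eqref{eq:lem_compact_condition} holds if and only if $\phi_{i,o}^3 \geq 0$ for all $i \in [n]$ and all $o \in \mathcal{O}_i$.

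Second, I would descend one order. By \eqref{eq:phi2} and \eqref{eq:phi3}, $\phi_{i,o}^3 = \dot{\phi}_{i,o}^2 + \alpha_i^2 \phi_{i,o}^2$, so $\phi_{i,o}^3 \geq 0$ gives $\dot{\phi}_{i,o}^2 \geq -\alpha_i^2 \phi_{i,o}^2$. Since $\alpha_i^2 > 0$ by Assumption~\ref{assume:class_K_scalar}, whenever $\phi_{i,o}^2 = 0$ we have $\dot{\phi}_{i,o}^2 \geq 0$, so, together with the standing requirement that the trajectory starts in $\mathcal{C}_{i,o}^1 \cap \mathcal{C}_{i,o}^2$ (i.e. $\phi_{i,o}^2(t_0) \geq 0$), the same Nagumo-type / comparison-lemma argument used in Lemma~\ref{lem:ACBF} yields $\phi_{i,o}^2 \geq 0$ for all $t \in \mathcal{T}$. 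Finally, $\phi_{i,o}^2 = \dot{\phi}_{i,o}^1 + \alpha_i^1 \phi_{i,o}^1$ by \eqref{eq:phi2}, which is precisely \eqref{eq:lem_prob_statement}; repeating the argument over all $i$ and $o$ completes the proof.

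I expect the bookkeeping of the first step -- matching every block of \eqref{eq:capability_vector_expanded} against \eqref{eq:phi3_separated}--\eqref{eq:safety_capability} and confirming exactly where Assumptions~\ref{assume:u_dot_func_u}--\ref{assume:Lgi^2_is_0} enter -- to be the main point requiring care; in particular, Assumption~\ref{assume:Lgi^2_is_0} is exactly what is needed to drop the $u_i^s$-quadratic term and obtain the clean matrix form of $\Phi_i$, without which \eqref{eq:lem_compact_condition} would not even be affine in the decision variables. The order-descent step is routine and mirrors the proofs of Lemmas~\ref{lem:ACBF} and~\ref{lem:CCBF}.
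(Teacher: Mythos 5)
Your proposal is correct and follows essentially the same route as the paper: identify \eqref{eq:lem_compact_condition} row-by-row with $\phi_{i,o}^3 \geq 0$ using Assumptions~\ref{assume:class_K_scalar} and \ref{assume:Lgi^2_is_0} to reduce \eqref{eq:phi3_separated} to the affine form \eqref{eq:capability_vector_expanded}, then descend one order via the argument of Lemma~\ref{lem:CCBF} to obtain $\phi_{i,o}^2 \geq 0$, which is \eqref{eq:lem_prob_statement}. Your version is in fact somewhat more careful than the paper's, since it makes explicit both the componentwise matching and the initial-condition requirement needed for the order-descent step.
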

%\vspace{-3ex}
\begin{proof}
    By the proof of Lemma~\ref{lem:CCBF}, if $\phi_{i,o}^3(\mathbf{x}_i, x_o, u_i^s, u_{\mathcal{N}_i}^s) \geq 0$ for some $u_i^s \in \mathcal{U}_i^s$, under Assumption~\ref{assume:u_dot_func_u}, then 
    
    %\vspace{-5ex}
    \small
    $$\phi_{i,o}^2(\mathbf{x}_i, x_o, u_i^s) = \dot{\phi}_{i,o}^1 (\mathbf{x}_i, x_o, u_i^f, u_i^s) + \alpha_i^1\left(\phi_{i,o}^1(x_i, x_o)\right) \geq 0$$
    \normalsize
    
    %\vspace{-3ex}
    also. Thus, since \eqref{eq:lem_compact_condition} implies that $\phi_{i,o}^3(\mathbf{x}_i, x_o, u_i^s, u_{\mathcal{N}_i}^s) \geq 0, \forall o \in \mathcal{O}_i$, then we can simplify the expression of \eqref{eq:phi3_separated} by selecting scalar class-$\mathcal{K}$ functions by Assumption~\ref{assume:class_K_scalar}, 
    % setting $\mathcal{L}_{\bar{g}_i}\phi_{i,o}^1(x_i, x_o)\dot{u}_i^s = 0$ by Assumption~\ref{assume:u_i_piecewise_constant}, 
    and setting $u_i^{s\top}\mathcal{L}_{\bar{g}_i}^2 \phi_{i,o}^1(\mathbf{x}_i, x_o) u_i^s = 0, \forall o\in \mathcal{O}_i$ by Assumption~\ref{assume:Lgi^2_is_0}, the set of control inputs that satisfy \eqref{eq:lem_compact_condition} must also satisfy \eqref{eq:lem_prob_statement}.
\end{proof}

%\vspace{-3ex}

We now describe the collaborative safety algorithm and how it may be used to communicate safety needs to neighboring agents in the formation control problem. See \cite{butler2023distributed} for a more detailed discussion on the construction of the collaborative safety algorithm with respect to a single safety condition for each agent. The central idea of this algorithm involves rounds of collaboration between agents, where each round of collaboration between agents, centered on an agent $i\in [n]$, involves the following steps:
\begin{enumerate}
    \item Receive (send) requests from (to) neighbors in $\mathcal{N}_i$
    \item Coordinate requests and determine needed compromises
    \item Send (receive) adjustments to (from) neighboring nodes in $\mathcal{N}_i$.
\end{enumerate}
This algorithm will return a set of constrained allowable filtered actions for each agent $\overline{\mathcal{U}}_i^s \subseteq \mathcal{U}_i^s$, where any safe action selected from this set will also be safe for all neighbors in $\mathcal{N}_i$.
In order to determine what requests should be made of neighbors, each agent must compute its \textit{maximum safety capability} with respect to the second-order safety condition as defined by \eqref{eq:phi3_separated}. However, since the safety capability of agent~$i$ with respect to multiple obstacles is represented as a vector, rather than a scalar value for a single condition \cite{butler2023distributed}, we must carefully define the maximum safety capability for agents in the context of formation control with multiple obstacles.

\subsection{Maximum Capability Given Multiple Obstacles} \label{sec:max_cap_comp}
To define the maximum capability of an agent~$i$ with respect to multiple obstacles, we begin by making the following assumption.
%\vspace{1ex}
\begin{assumption} \label{assume:nonempty_convex_contraints}
Let $\mathcal{U}_i^s$, defined in \eqref{eq:constraints_for_i}, be a non-empty convex set which is defined by $\mathcal{U}_i^s = \{u_i^s \in \mathbb{R}^{M_i}: G_iu_i^s - l_i \leq \mathbf{0}\}$,
\end{assumption}
\noindent
where $G_i \in \mathbb{R}^{Y_i \times M_i}, l_i \in \mathbb{R}^{Y_i}$, with $Y_i$ being the number of halfspaces whose intersection define the control constraint set for agent~$i \in [n]$, with $G_i$ and $l_i$ being defined by the given application and limitations of the controller for agent~$i$. In order to determine the ``safest" action agent~$i$ may take given multiple obstacles, we want to choose the action $u_i^s$ that maximizes the minimum entry of the vector $B_i u_i^s$ from \eqref{eq:multi-obs-compact}, which is defined by the following max-min optimization problem: 
\begin{equation}\label{eq:opt_capcity_input_action}
    \max_{u_i^s \in \mathcal{U}_i^s} \min_{1\leq k \leq |\mathcal{O}_i|} [B_i u_i^s]_{k}.
\end{equation}

%\vspace{-4ex}

This problem characterizes the optimal control strategy $u^*_i$ that attempts to satisfy the safety constraint \eqref{eq:phi3_separated}
% $[A_i(x_i)u_i]_{k^*}$ 
imposed on agent~$i$ for each obstacle $o \in \mathcal{O}_i$ that is at most risk of being violated (or being violated the worst).
We can rewrite \eqref{eq:opt_capcity_input_action} as a linear programming problem:
\begin{align}\label{eq:opt_capcity_input_action_LP}
    \min_{\xi_i}&~d^\top \xi_i \nonumber\\
    \text{s.t.}& \begin{bmatrix}
        \mathbf{0} & G_i\\
        \mathbf{1} & -B_i
    \end{bmatrix}\xi_i - \begin{bmatrix}
        l_i\\
        \mathbf{0}
    \end{bmatrix}\leq \mathbf{0},
\end{align}
where $d^\top = \begin{bmatrix}
    -1 & \mathbf{0}_{M_i}^\top
\end{bmatrix}$, $\xi_i^\top = \begin{bmatrix}
    \gamma_i & u_i^{s\top}
\end{bmatrix}$, and $\gamma_i \in \mathbb{R}$ is a scalar that captures the safety capability provided by the action~$u^*_i$.
The next proposition formally characterizes the equivalency of Problem~\eqref{eq:opt_capcity_input_action} and Problem~\eqref{eq:opt_capcity_input_action_LP}.
%\vspace{1ex}
\begin{proposition}\label{pp:equiv_opt_prob}
    Given Assumptions~\ref{assume:class_K_scalar}-\ref{assume:nonempty_convex_contraints}, the optimal solution of \eqref{eq:opt_capcity_input_action}:
    \begin{align*}
        u_i^* &= \arg\max_{u_i^s \in \mathcal{U}_i^s} \min_{1\leq k \leq |\mathcal{O}_i|} [B_i u_i^s]_{k}\\
        \gamma_i^* &= \max_{u_i^s \in \mathcal{U}_i^s} \min_{1\leq k\leq |\mathcal{O}_i|} [B_i u_i^s]_{k}
    \end{align*}
    exists if and only if there exists an optimal solution $\xi_i^{(*)} = \begin{bmatrix}
        \gamma_i^{(*)}& {u_i^{(*)}}^\top
    \end{bmatrix}^\top$,
    and 
    $\gamma_i^{(*)} = \gamma_i^*$, $u_i^{(*)} = u_i^*$, in \eqref{eq:opt_capcity_input_action_LP}.
\end{proposition}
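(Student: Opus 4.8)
The plan is to prove the two directions of the ``if and only if'' by exhibiting an explicit bijection between feasible (and optimal) points of the max-min problem \eqref{eq:opt_capcity_input_action} and the linear program \eqref{eq:opt_capcity_input_action_LP}, exploiting the standard epigraph reformulation of a max-min of linear functionals over a polytope. The key observation is that $\min_{1 \le k \le |\mathcal{O}_i|} [B_i u_i^s]_k \ge \gamma_i$ holds if and only if $[B_i u_i^s]_k \ge \gamma_i$ for every $k$, i.e. if and only if $\gamma_i \mathbf{1} - B_i u_i^s \le \mathbf{0}$, which is precisely the second block row of the constraint in \eqref{eq:opt_capcity_input_action_LP}; the first block row $G_i u_i^s - l_i \le \mathbf{0}$ is just $u_i^s \in \mathcal{U}_i^s$ by Assumption~\ref{assume:nonempty_convex_contraints}. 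So feasibility of $\xi_i = (\gamma_i, u_i^{s\top})^\top$ in the LP is equivalent to $u_i^s \in \mathcal{U}_i^s$ together with $\gamma_i \le \min_k [B_i u_i^s]_k$.

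First I would establish the forward direction. Suppose the max-min problem attains its optimum at $u_i^*$ with value $\gamma_i^* = \min_k [B_i u_i^*]_k$. Then $\xi_i = (\gamma_i^*, u_i^{*\top})^\top$ is feasible for the LP by the equivalence above, and its objective value is $d^\top \xi_i = -\gamma_i^*$. I would then argue this is optimal for the LP: for any feasible $\xi_i' = (\gamma_i', u_i'^{\top})^\top$ we have $\gamma_i' \le \min_k [B_i u_i']_k \le \max_{u \in \mathcal{U}_i^s} \min_k [B_i u]_k = \gamma_i^*$, so $d^\top \xi_i' = -\gamma_i' \ge -\gamma_i^*$, showing $\xi_i$ minimizes $d^\top \xi_i$. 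Hence an LP optimizer exists with $\gamma_i^{(*)} = \gamma_i^*$ and $u_i^{(*)} = u_i^*$. Conversely, suppose $\xi_i^{(*)} = (\gamma_i^{(*)}, u_i^{(*)\top})^\top$ solves the LP. Feasibility gives $u_i^{(*)} \in \mathcal{U}_i^s$ and $\gamma_i^{(*)} \le \min_k [B_i u_i^{(*)}]_k$. I would first note that at the LP optimum the bound is tight, $\gamma_i^{(*)} = \min_k [B_i u_i^{(*)}]_k$, since otherwise one could increase $\gamma_i^{(*)}$ slightly while keeping feasibility, strictly decreasing $d^\top \xi_i = -\gamma_i^{(*)}$, contradicting optimality. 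Then for any $u \in \mathcal{U}_i^s$, the point $(\min_k [B_i u]_k, u^\top)^\top$ is LP-feasible, so by optimality $-\min_k[B_i u]_k \ge -\gamma_i^{(*)}$, i.e. $\min_k [B_i u]_k \le \gamma_i^{(*)}$; combined with tightness this shows $u_i^{(*)}$ attains the max-min with value $\gamma_i^{(*)}$, so $(u_i^*, \gamma_i^*) = (u_i^{(*)}, \gamma_i^{(*)})$ exists.

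I would also address an existence subtlety so the ``if and only if'' is genuinely about attainment, not just value: since $\mathcal{U}_i^s$ is nonempty and convex (Assumption~\ref{assume:nonempty_convex_contraints}) and $u \mapsto \min_k [B_i u]_k$ is concave and upper semicontinuous, the max-min is attained whenever the LP is feasible and bounded, and conversely; boundedness of $-\gamma_i$ from below over the LP feasible set corresponds exactly to boundedness of $\min_k[B_iu]_k$ from above over $\mathcal{U}_i^s$. I expect the main obstacle to be purely expository rather than technical: making precise that the equivalence is not merely ``same optimal value'' but ``one optimizer exists iff the other does, with matching $\gamma$ and $u$ components,'' and handling the possibility that $\mathcal{U}_i^s$ is unbounded (so the supremum might fail to be attained) cleanly — this is where the tightness argument and the concavity/semicontinuity remark do the work. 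The Lie-derivative machinery and Assumptions~\ref{assume:u_dot_func_u}--\ref{assume:Lgi^2_is_0} play no role here beyond guaranteeing that $B_i$ is a well-defined matrix, so I would not invoke them in the proof.
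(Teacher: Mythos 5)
Your proof is correct and follows essentially the same route as the paper's: the epigraph reformulation replacing $\min_k [B_i u_i^s]_k$ with an auxiliary lower bound $\gamma_i$, identification of the block constraints with $u_i^s \in \mathcal{U}_i^s$ and $\gamma_i \mathbf{1} - B_i u_i^s \le \mathbf{0}$, and the sign flip $\argmax \gamma_i = \argmin(-\gamma_i)$. Your treatment of tightness at the LP optimum and of attainment is somewhat more explicit than the paper's, but it is the same argument.
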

%\vspace{-3ex}
\begin{proof}
    We first notice that the following two optimization problems are equivalent:
    \begin{align}
        &\begin{cases}\label{eq:pp:step1}
            \max_{u_i^s}\min_{1\leq k \leq |\mathcal{O}_i|}~&[B_i u_i^s]_{k}\\
            \text{s.t.}~& G_iu_i^s - l_i \leq \mathbf{0},\\
        \end{cases}\\
        &\begin{cases}\label{eq:pp:step2}
            \max_{u_i^s, \gamma_i}~&\gamma_i\\
            \text{s.t.}~&G_iu_i^s - l_i \leq \mathbf{0}\\
                       & \gamma_i \leq [B_i u_i^s]_{k}~\qquad 1\leq k \leq |\mathcal{O}_i|,
        \end{cases}
    \end{align}
    by substituting $\min_{1\leq k \leq |\mathcal{O}_i|}~[B_i(x_i)u_i^s]_{k}$ with an achievable lower bound $\gamma_i$ on each $[B_i(x_i)u_i^s]_{k}$, that is, $\gamma_i \leq [B_i(x_i)u_i^s]_{k}, 1\leq k \leq |\mathcal{O}_i|$.
    % Furthermore, $\gamma_i$ becomes one of the optimization variables.
    Furthermore, we can show that
    \eqref{eq:opt_capcity_input_action} is equivalent to \eqref{eq:pp:step1} by rewriting $u_i^s \in \mathcal{U}_i^s$ explicitly as an optimization constraint, and similarly, we can show that
    \eqref{eq:opt_capcity_input_action_LP} is equivalent to \eqref{eq:pp:step2} by setting $\xi_i = \begin{bmatrix}
        \gamma_i & u_i^{s\top}
    \end{bmatrix}^\top$, $d^\top = \begin{bmatrix}
        -1 & \mathbf{0}_{M_i}^\top
    \end{bmatrix}$, and realizing that $\argmax \gamma_i = \argmin -\gamma_i$.
    The transitivity of equivalence relations concludes the proof. 
\end{proof}

%\vspace{-3ex}

Thus, we have a method for computing a vector that represents the maximum capability of agent~$i$ with respect to multiple obstacles $\mathcal{O}_i$. If $\gamma_i$ is negative, then agent~$i$ will make a request to its neighboring agents that will limit their control actions $\overline{\mathcal{U}}_j^s$ to those that will satisfy $[\Phi_i]_k \geq 0, \forall k \in \mathcal{O}_i$, assuming agent~$i$ takes the action $u_i^*$. In the following section, we describe how our \textit{modified collaborative safety algorithm} incorporates this capability vector. 

\subsection{Collective Safety Through Collaboration} \label{sec:colab_safety_thm}
Given our addition to the collaborative safety algorithm from \cite{butler2023distributed} that incorporates multiple safety constraints, the computation steps of our algorithm remain unchanged in Algorithm~\ref{alg:colab_safety} since the communication of multiple safety constraints from one neighbor is equivalent to multiple neighbors communicating a single constraint in the computation of control restrictions. Thus, the Collaborate subroutine in Line 6 of Algorithm~\ref{alg:colab_safety} may be considered to be the same subroutine from \cite{butler2023distributed}.
Note that the major innovation for Algorithm~\ref{alg:colab_safety} with respect to \cite{butler2023distributed} is the inclusion of handling multiple requests for a single agent $i \in [n]$ as vectors $\bar{c}_i, \delta_i \in \mathbb{R}^{|\mathcal{O}_i|}$, which is accomplished by leveraging \eqref{eq:multi-obs-compact} and \eqref{eq:block_col_A_i}. Thus, we have the following result on the safety of collaborating agents under Algorithm~\ref{alg:colab_safety}.
%\vspace{1ex}

\begin{theorem} \label{thm:colab_safety_alg}
    Let Assumptions~\ref{assume:u_dot_func_u}-\ref{assume:nonempty_convex_contraints} hold for all $i \in [n]$. If Algorithm~\ref{alg:colab_safety} returns $\overline{\mathcal{U}}_i^s(x(t)) \neq \emptyset, \forall i \in [n], \forall t \in \mathcal{T}$, then \eqref{eq:problem_statement} yields $\bigcap_{o \in \mathcal{O}_i}\mathcal{C}_{i,o}^1 \cap \mathcal{C}_{i,o}^2$ forward invariant during $t \in \mathcal{T}$ for all $i \in [n]$.
\end{theorem}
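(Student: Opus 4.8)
The plan is to chain together the invariance results already established for a single obstacle (Lemmas~\ref{lem:ACBF} and \ref{lem:CCBF}) with the equivalence provided by Lemma~\ref{lem:equivalent_solution}, and then to argue that the nonemptiness of $\overline{\mathcal{U}}_i^s(x(t))$ returned by Algorithm~\ref{alg:colab_safety} is precisely the condition that guarantees the CCBF inequality \eqref{eq:CCBF_cond} can be met jointly. The target claim is that whenever the algorithm succeeds at every time $t \in \mathcal{T}$, solving \eqref{eq:problem_statement} keeps $\bigcap_{o \in \mathcal{O}_i}\mathcal{C}_{i,o}^1 \cap \mathcal{C}_{i,o}^2$ forward invariant for every agent. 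So the proof has three logical links: (i) algorithm success $\Rightarrow$ existence of a jointly feasible $(u_i^s, u_{\mathcal{N}_i}^s)$ satisfying $\Phi_i \geq \mathbf{0}$ for all $i$; (ii) that joint feasibility $\Rightarrow$ each $h_{i,o}$ is effectively acting as a CCBF along the trajectory, so $\phi_{i,o}^3 \geq 0$ holds with a compatible choice of inputs; (iii) apply Lemma~\ref{lem:CCBF} (together with Lemma~\ref{lem:equivalent_solution} to pass from the $\phi^3$-level condition back down to the $\phi^2$- and $\phi^1$-level conditions that define $\mathcal{C}_{i,o}^1 \cap \mathcal{C}_{i,o}^2$) to conclude forward invariance, and finally intersect over $o \in \mathcal{O}_i$.

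Concretely, first I would recall the design guarantee of Algorithm~\ref{alg:colab_safety}: by construction, the returned set $\overline{\mathcal{U}}_i^s \subseteq \mathcal{U}_i^s$ has the property that any $u_i^s \in \overline{\mathcal{U}}_i^s$, together with the neighbors' actions drawn from their own returned sets $\overline{\mathcal{U}}_j^s$, satisfies the stacked condition $\Phi_i(\mathbf{x}_i, \mathbf{x}_{\mathcal{O}_i}, u_i^s, u_{\mathcal{N}_i}^s) \geq \mathbf{0}$ — this is exactly what the Collaborate subroutine enforces via the capability vector $\bar c_i$ and the request/adjustment rounds, and it is the multi-obstacle generalization (using \eqref{eq:multi-obs-compact} and the block-column selection \eqref{eq:block_col_A_i}) of the single-constraint guarantee proved in \cite{butler2023distributed}. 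So nonemptiness of every $\overline{\mathcal{U}}_i^s(x(t))$ means there is a consistent global choice of filtered actions making every entry of every $\Phi_i$ nonnegative. Next I would invoke Lemma~\ref{lem:equivalent_solution}: under Assumptions~\ref{assume:u_dot_func_u}--\ref{assume:Lgi^2_is_0}, such actions satisfy $\dot\phi_{i,o}^1 + \alpha_i^1(\phi_{i,o}^1) \geq 0$ for all $i$ and all $o \in \mathcal{O}_i$, i.e. the constraints of \eqref{eq:problem_statement} are feasible. Since \eqref{eq:problem_statement} is a feasible convex QP at each state, its minimizer $u_i^s$ lies in $\mathcal{U}_i^s$ and satisfies $\phi_{i,o}^2(\mathbf{x}_i, x_o, u_i^s) \geq 0$ for all $o$.

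Then I would run the HOBF forward-invariance argument exactly as in the proof of Lemma~\ref{lem:ACBF}/Lemma~\ref{lem:CCBF}: the pointwise inequality $\phi_{i,o}^2 \geq 0$ means that whenever $\phi_{i,o}^1$ reaches zero there is an admissible action driving $\dot\phi_{i,o}^1 \geq 0$, so $\mathcal{C}_{i,o}^2$ is forward invariant; combined with Lemma~\ref{lem:NBF} this gives forward invariance of $\mathcal{C}_{i,o}^1 \cap \mathcal{C}_{i,o}^2$; and since the algorithm succeeds for every $t \in \mathcal{T}$ the closed-loop solution to \eqref{eq:problem_statement} realizes this action at each instant, so the invariance holds along the whole interval $\mathcal{T}$. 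Finally, because all of this holds simultaneously for every $o \in \mathcal{O}_i$ (the stacked $\Phi_i \geq \mathbf{0}$ handles all obstacles at once) and every $i \in [n]$ (the algorithm returns nonempty sets for all agents), intersecting over $o \in \mathcal{O}_i$ gives that $\bigcap_{o \in \mathcal{O}_i}\mathcal{C}_{i,o}^1 \cap \mathcal{C}_{i,o}^2$ is forward invariant for each agent.

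The main obstacle I anticipate is link (i): cleanly stating and justifying \emph{why} a nonempty output of Algorithm~\ref{alg:colab_safety} certifies global joint feasibility of $\Phi_i \geq \mathbf{0}$. This is really a property of the algorithm itself — that the Collaborate subroutine, when it does not report infeasibility, produces restriction sets $\overline{\mathcal{U}}_i^s$ that are mutually consistent across the network (a neighbor's restriction is honored by the restricted set it receives back) — and in the single-obstacle case it is the content of the convergence/correctness analysis in \cite{butler2023distributed}. Here I would need to argue the multi-obstacle bookkeeping (stacking $\bar c_i, \delta_i \in \mathbb{R}^{|\mathcal{O}_i|}$ and using $A_{ij,\mathcal{O}_i}$) does not break that consistency; the paper's own remark that "multiple safety constraints from one neighbor is equivalent to multiple neighbors communicating a single constraint" is the lever, so I would make that reduction explicit and then cite the established single-constraint correctness. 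The remaining steps (ii)--(iii) are essentially repackaging of Lemmas~\ref{lem:equivalent_solution} and \ref{lem:CCBF} and should be routine; one minor care point is to note that feasibility of the QP \eqref{eq:problem_statement} at each state, which follows from Lemma~\ref{lem:equivalent_solution}, is what lets us speak of "the solution to \eqref{eq:problem_statement}" being well-defined and safety-preserving throughout $\mathcal{T}$.
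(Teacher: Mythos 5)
Your proposal follows essentially the same route as the paper's proof: algorithm success certifies a jointly feasible choice of filtered actions with $\Phi_i \geq \mathbf{0}$ for all agents, Lemma~\ref{lem:equivalent_solution} (with Lemma~\ref{lem:CCBF}) transfers this to feasibility of the first-order constraints in \eqref{eq:problem_statement}, and Lemma~\ref{lem:ACBF} then yields forward invariance of $\bigcap_{o \in \mathcal{O}_i}\mathcal{C}_{i,o}^1 \cap \mathcal{C}_{i,o}^2$. Your added care about why nonempty returned sets certify network-wide consistency of $\Phi_i \geq \mathbf{0}$ (via the reduction of multiple obstacles to multiple single-constraint neighbors and the correctness result of \cite{butler2023distributed}) is a point the paper's own proof leaves implicit, but the logical skeleton is the same.
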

%\vspace{-3ex}
\begin{proof}
If Algorithm~\ref{alg:colab_safety} returns $\overline{\mathcal{U}}_i^s(x(t)) \neq \emptyset, \forall i \in [n], \forall t \in \mathcal{T}$, then by Lemmas~\ref{lem:CCBF} and \ref{lem:equivalent_solution} we have that any action taken by any agent from these constrained control sets must render $\bigcap_{o \in \mathcal{O}_i}\mathcal{C}_{i,o}^1 \cap \mathcal{C}_{i,o}^2 \cap \mathcal{C}_{i,o}^3$ forward invariant for all $i \in [n]$. Thus, by applying control constraints $\overline{\mathcal{U}}_i^s(x(t))$ to \eqref{eq:problem_statement} for each agent, we have, by Lemma~\ref{lem:ACBF}, that $\bigcap_{o \in \mathcal{O}_i}\mathcal{C}_{i,o}^1 \cap \mathcal{C}_{i,o}^2$ is also forward invariant during $t \in \mathcal{T}$ for all $i \in [n]$.
\end{proof}

%\vspace{-2ex}

\begin{algorithm}
    \caption{Collaborative Safety}\label{alg:colab_safety}
    \begin{algorithmic}[1]
        \Initialize{
            $i \gets i_0$;
            $\overline{\mathcal{U}}_i^s \gets \mathcal{U}_i^s; \tau_i \gets 0; B_i \gets \eqref{eq:capability_vector_expanded};$ \\
            $\bar c_{ij}\gets \mathbf{0}$ and $\bar c_{ji}\gets \mathbf{0}~\forall j\in \mathcal{N}_i$   
        }
        \Repeat
            \State $\tau_i \gets \tau_i + 1$
            \State $u_i^* \gets \arg\max_{u_i^s \in \overline{\mathcal{U}}_i^s} \min_{k \in \left[|\mathcal{O}_i| \right]} [B_i u_i^s]_{k}$
            \State $\bar{c}_i \gets B_i u_i^* + q_i$ 
            \State $\delta_i,\, \overline{\mathcal{U}}_i^s, \{\bar{c}_{ij}, \bar{c}_{ji}\}_{j \in \mathcal{N}_i}$ 
            \Statex \hspace{10ex} $\gets$ Collaborate\((\bar{c}_i, \overline{\mathcal{U}}_i^s, \{\bar{c}_{ij}, \bar{c}_{ji}\}_{j \in \mathcal{N}_i})\)
        \Until{$\delta_i \geq \mathbf{0}$}
        \State\Return $\overline{\mathcal{U}}_i^s$
    \end{algorithmic}
\end{algorithm}

In \cite{butler2023distributed}, we provide conditions for algorithm convergence for all agents given assumptions on the relationship between neighbors' safety constraints $h_j(x), \forall j \in \mathcal{N}_j$ for each node $i \in [n]$, which will be reintroduced in Section~\ref{sec:conflicting_safety}; however, we do not determine the rate at which a system will converge to a safe action for all agents if it exists, which is of critical importance in real-time applications such as formation control and obstacle avoidance. Therefore, in the following section, we build upon the analysis presented in \cite{butler2023distributed} to investigate the convergence rate for Algorithm~\ref{alg:colab_safety} 
to return either a feasible set of safe actions for all agents or verify that it is in a terminally infeasible state. To assist in the discussion on the convergence rate for Algorithm~\ref{alg:colab_safety}, we include the counter variable $\tau_i$ in the repeat loop, which is used to define collaborative rounds.
%\vspace{1ex}
\begin{definition} \label{def:round_com}
    In the Algorithm~\ref{alg:colab_safety}, one \textbf{round of collaboration}, counted by $\tau_i$, counts one completion of the Collaborate$(\bar{c}_i, \overline{\mathcal{U}}_i^s, \bar{c}_{ij}; \forall j \in \mathcal{N}_i)$ subroutine in Line 6 of Algorithm~\ref{alg:colaborate}.
\end{definition}

With the connection of Algorithm~\ref{alg:colab_safety} to forward invariance and safety established, we are prepared to discuss how our algorithm resolves the complexities of potentially conflicting safety needs between neighbors and how quickly such requests can be resolved under key assumptions described in the following sections.

\section{Resolving Conflicting Safety Needs}\label{sec:conflicting_safety}
In this section, we discuss key concepts and definitions for resolving scenarios in the execution of Algorithm~\ref{alg:colab_safety} where neighbors in $\mathcal{N}_i$ may make infeasible, or even conflicting, requests of a given agent $i \in [n]$. 
For completeness, we include the full descriptions of both subroutines for neighbor collaboration in Algorithm~\ref{alg:colaborate} and for coordinating safety requests in Algorithm~\ref{alg:coordinate}. 
By way of analogy, Algorithms~\ref{alg:colab_safety}-\ref{alg:coordinate} may be viewed as a type of distributed resource (safety) allocation process, where each agent has a finite control budget (represented by control constraints $\mathcal{U}_i^s$) with individual safety requirements (defined by $h_{i,o}(x_i,x_o), \forall o \in \mathcal{O}_i$) that inform the collaborative safety condition in \eqref{eq:lem_compact_condition}. Thus, in Algorithms~\ref{alg:colab_safety}-\ref{alg:coordinate}, each agent attempts to guarantee that \eqref{eq:lem_compact_condition} is satisfied by allotting safety deficits $\delta_{ij}$ to neighbors $j \in \mathcal{N}_i$, where neighbors can, in turn, communicate safety shortage adjustments $\varepsilon_{ij}$ if the requested allotment is not possible.

Thus, in Section~\ref{sec:interferring_constr}, we introduce the notion of weakly non-interfering constraints between neighbors in $\mathcal{N}_i$ for agent~$i$, which is used in \cite{butler2023distributed} to guarantee the asymptotic convergence of Algorithm~\ref{alg:colab_safety}.
A necessary addition for the finite time convergence of Algorithm~\ref{alg:colab_safety} is the definition of the protocol GetClosestPoint in Algorithm~\ref{alg:getClosestPoint} in Section~\ref{sec:get_closest_point}, a subroutine of Algorithm~\ref{alg:coordinate}, which is used to select control actions that balance the needs of jointly infeasible safety constraints between neighbors. Note that although the language used in this section refers to safety constraints relating to multiple obstacles, the principles can be readily generalized to any application with multiple safety constraints for dynamically coupled agents.

\begin{algorithm}
\caption{Collaborate}\label{alg:colaborate}
    \begin{algorithmic}[1]
        \Initialize{
            $i \gets i_0; w\gets w_0; \overline{\mathcal{N}}_i\gets \emptyset$
        }
        \Inputs{
            \(\bar{c}_i; \overline{\mathcal{U}}_i^s; \{\bar{c}_{ij}, \bar{c}_{ji}\}_{j \in \mathcal{N}_i}\)
        }
        \Repeat
            \State $\upsilon_i \gets \upsilon_i + 1$
            \State $\delta_i \gets \bar{c}_i - \sum_{j \in \mathcal{N}_i}\bar{c}_{ij}$
            \State $\{\delta_{ij}\}_{j\in \mathcal{N}_i} \gets \left\{ \frac{\delta_i w_{ij}}{\sum_{l \in \mathcal{N}_i \setminus \overline{\mathcal{N}}_i} w_{il}}\right\}_{j \in \mathcal{N}_i}$
            \State SEND to each $j \in \mathcal{N}_i \setminus \overline{\mathcal{N}}_i : \delta_{ij}$
            \State RECEIVE from all $j \in \mathcal{N}_i : \{\delta_{ji}\}_{j\in\mathcal{N}_i}$
            \State {$\overline{\mathcal{U}}_i^s, \{\bar{c}_{ji}, \varepsilon_{ji}\}_{j \in \mathcal{N}_i} \gets$ Coordinate$(\{\bar{c}_{ji}, \delta_{ji}\}_{j \in \mathcal{N}_i})$}
            \State SEND to each $j \in \mathcal{N}_i : \varepsilon_{ji}$
            \State RECEIVE from all $j \in \mathcal{N}_i \setminus \overline{\mathcal{N}}_i : \{\varepsilon_{ij}\}_{j\in\mathcal{N}_i}$

            \For{$j \in \mathcal{N}_i$}
                \State $\bar{c}_{ij} \gets \bar{c}_{ij} + \delta_{ij} + \varepsilon_{ij}$ 
                \If{$\exists k \in \left[|\mathcal{O}_i| \right] \text{ s.t. } [\varepsilon_{ij}]_k > 0$}
                    \State $\overline{\mathcal{N}}_i \gets \overline{\mathcal{N}}_i \cup \{ j\}$
                \EndIf
            \EndFor
        \Until{{$(\overline{\mathcal{N}}_i = \mathcal{N}_i) \lor (\varepsilon_{ij} = \mathbf{0}  \land \varepsilon_{ji} = \mathbf{0}; \forall j \in \mathcal{N}_i)$}}
        \State\Return $\delta_i,\, \overline{\mathcal{U}}_i^s, \{\bar{c}_{ij}, \bar{c}_{ji} \}_{j \in \mathcal{N}_i}$
    \end{algorithmic}
\end{algorithm}

\begin{algorithm}
\caption{Coordinate}\label{alg:coordinate}
    \begin{algorithmic}[1]
        \Initialize{
            $i \gets i_0; \varepsilon_{ji} \gets \mathbf{0}$ and $A_{ji,\mathcal{O}_j} \gets \eqref{eq:block_col_A_i}, \forall j \in \mathcal{N}_i$
        }
        \Inputs{
            $\{\bar{c}_{ji}, \delta_{ji}\}_{j \in \mathcal{N}_i}$
        }
        \For{$j \in \mathcal{N}_i$}
           \State $\overline{\mathcal{U}}_{ji} \gets \{ u_i \in \mathbb{R}^{M_i}: A_{ji,\mathcal{O}_j} u_i + \bar{c}_{ji} + \delta_{ji} \geq \mathbf{0} \}$
        \EndFor

        \State $\overline{\mathcal{U}}_{\mathcal{N}_i} \gets \bigcap_{j \in \mathcal{N}_i} \overline{\mathcal{U}}_{ji}$

        \If{$\mathcal{U}_i \cap \overline{\mathcal{U}}_{\mathcal{N}_i} \neq \emptyset$ }
            \State $\overline{\mathcal{U}}_i^s \gets \mathcal{U}_i \cap \overline{\mathcal{U}}_{\mathcal{N}_i}$

        \Else
            \State $\overline{u}_i \gets$ {\small GetClosestPoint$(\mathcal{U}_i,\overline{u}_i^{\upsilon}, \{\overline{\mathcal{U}}_{ji}, \delta_{ji}, \delta_{ji}^{\upsilon}\}_{j \in \mathcal{N}_i})$}
            % \EndIf
            \State $\overline{\mathcal{U}}_i^s \gets \{ \overline{u}_i \}$  
            \State $\overline{u}_i^{\upsilon} \gets \overline{u}_i$
            
            \For{$j \in \mathcal{N}_i$}
                \If{$\exists k \in \left[|\mathcal{O}_j| \right] \text{ s.t. } [A_{ji}\overline{u}_i + \bar{c}_{ji} + \delta_{ji}]_k < 0$}
                    \State $\varepsilon_{ji} \gets -(A_{ji}\overline{u}_i + \bar{c}_{ji} + \delta_{ji})$
                \EndIf
            \EndFor
        \EndIf
        \State $\{\delta_{ji}^{\upsilon}\}_{j \in \mathcal{N}_i} \gets \{\delta_{ji}\}_{j \in \mathcal{N}_i}$
        \State $\{\bar{c}_{ji}\}_{j \in \mathcal{N}_i} \gets \{\bar{c}_{ji} + \delta_{ji} + \varepsilon_{ji}\}_{j \in \mathcal{N}_i}$
        \State \Return $\overline{\mathcal{U}}_i^s,\{\bar{c}_{ji}, \varepsilon_{ji}\}_{j \in \mathcal{N}_i}$
    \end{algorithmic}
\end{algorithm}

\subsection{Weakly Non-Interfering Safety Constraints} \label{sec:interferring_constr}

In this section, we discuss the relationship of neighbors' constraints to each other for a given node $i \in [n]$ and define properties of constraint interference.
Similarly to the analysis performed in \cite{butler2023distributed}, we narrow the focus of our discussion to when neighboring requests are sufficiently non-interfering, defined as follows. 
%\vspace{1ex}
\begin{definition} \label{def:weakly_non_interfering}
     For a given agent $i \in [n]$, the set of neighbor constraints $h_{j}(x_j, x_{o})$ for $j \in \mathcal{N}_i$ and $o \in \mathcal{O}_j$ are said to be \textbf{weakly non-interfering} if there exists a vector $a \in \mathbb{R}^{M_i}$ such that $a \cdot a_{ji,o}(\mathbf{x}_j, x_{o}) > 0, \forall j \in \mathcal{N}_i, \forall o \in \mathcal{O}_j$, where $a_{ji,o}(\mathbf{x}_j, x_{o})$ is defined by \eqref{eq:neighbor_effects}.
\end{definition}

Using the definition of weakly non-interfering constraints, we obtain the following result regarding the feasibility of safe control actions in the unrestricted control space $\mathbb{R}^{M_i}$ for neighbors $j \in \mathcal{N}_i$ through the compositions of neighbors' requests 
\begin{equation} \label{eq:safe_actions_for_neighbors}
    \overline{\mathcal{U}}_{ji} = \{ u_i \in \mathbb{R}^{M_i}: A_{ji,\mathcal{O}_j} u_i + \bar{c}_{ji} + \delta_{ji} \geq \mathbf{0} \}.
\end{equation}

\begin{proposition}
    If the set of constraints $h_{j}(x_j, x_{o})$ for $j \in \mathcal{N}_i$ and $o \in \mathcal{O}_j$ are weakly non-interfering, then $\bigcap_{j \in \mathcal{N}_i} \overline{\mathcal{U}}_{ji} \neq \emptyset$.
\end{proposition}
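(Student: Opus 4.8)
The plan is to exploit the common direction $a$ furnished by the weakly non-interfering hypothesis and show that a large enough positive multiple of $a$ lands inside every $\overline{\mathcal{U}}_{ji}$ at once. First I would note that the rows of the matrix $A_{ji,\mathcal{O}_j}$ (the block defined analogously to \eqref{eq:block_col_A_i}) are exactly the vectors $a_{ji,o}(\mathbf{x}_j, x_o)^\top$ for $o \in \mathcal{O}_j$, so the weakly non-interfering condition of Definition~\ref{def:weakly_non_interfering}, namely $a \cdot a_{ji,o}(\mathbf{x}_j, x_o) > 0$ for all $j \in \mathcal{N}_i$ and $o \in \mathcal{O}_j$, says precisely that $A_{ji,\mathcal{O}_j}\, a > \mathbf{0}$ elementwise for every neighbor $j$.

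Next I would plug $u_i = t a$ with $t > 0$ into \eqref{eq:safe_actions_for_neighbors} and use linearity: $A_{ji,\mathcal{O}_j}(t a) + \bar{c}_{ji} + \delta_{ji} = t\,(A_{ji,\mathcal{O}_j} a) + \bar{c}_{ji} + \delta_{ji}$. Since $A_{ji,\mathcal{O}_j} a$ has strictly positive entries, each coordinate of this expression is an affine function of $t$ with positive slope, so there is a finite threshold $t_{ji,o} = \max\{0,\ -[\bar{c}_{ji} + \delta_{ji}]_o / (a \cdot a_{ji,o})\}$ beyond which that coordinate is nonnegative. Taking $t^\star = \max_{j \in \mathcal{N}_i} \max_{o \in \mathcal{O}_j} t_{ji,o}$, which is well defined because $\mathcal{N}_i$ and each $\mathcal{O}_j$ are finite, yields $A_{ji,\mathcal{O}_j}(t^\star a) + \bar{c}_{ji} + \delta_{ji} \geq \mathbf{0}$ for all $j \in \mathcal{N}_i$; that is, $t^\star a \in \bigcap_{j \in \mathcal{N}_i} \overline{\mathcal{U}}_{ji}$, so the intersection is nonempty.

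There is essentially no hard step here; the only points requiring care are (i) identifying the rows of $A_{ji,\mathcal{O}_j}$ with the $a_{ji,o}$ vectors so that the hypothesis becomes elementwise positivity of $A_{ji,\mathcal{O}_j} a$, and (ii) using finiteness of the neighbor set and of each obstacle set to pass from the per-constraint thresholds $t_{ji,o}$ to a single uniform scaling $t^\star$ valid for all constraints simultaneously. An equivalent packaging is a recession-cone argument — each $\overline{\mathcal{U}}_{ji}$ is a nonempty polyhedron whose recession cone contains the ray $\mathbb{R}_{\ge 0}\, a$, and a finite family of polyhedra sharing a common recession direction has nonempty intersection — but the explicit scaling argument is the most direct route and I would present that.
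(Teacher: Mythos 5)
Your proof is correct. The paper does not actually spell out an argument for this proposition --- it simply defers to Lemma~2 of \cite{butler2023distributed} --- and your scaling argument is the natural self-contained proof of the claim: the hypothesis gives $A_{ji,\mathcal{O}_j}\,a > \mathbf{0}$ for every $j$, and since each $\overline{\mathcal{U}}_{ji}$ in \eqref{eq:safe_actions_for_neighbors} is defined over the \emph{unrestricted} space $\mathbb{R}^{M_i}$ (no intersection with $\mathcal{U}_i$ at this stage), a sufficiently large positive multiple $t^\star a$ satisfies all finitely many affine constraints simultaneously. The one point worth being explicit about is exactly the one you flag: the argument would fail if the bounded control set $\mathcal{U}_i$ were intersected in, which is precisely why the paper separately introduces the notions of infeasible and jointly infeasible constraints immediately after this proposition.
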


This proof of this result follows directly from \cite[Lemma 2]{butler2023distributed}. We find a relationship between the dimension of the control space $M_i$ for a given agent $i\in [n]$ and the evaluation of constraint sets being weakly non-interfering based on the total number of requests received by agent~$i$. 

%\vspace{1ex}
\begin{proposition}
    Let $Z_i \in \mathbb{Z}_{\geq 0}$ be the total number of requests received by agent $i \in [n]$, i.e., $Z_i \leq \sum_{j \in \mathcal{N}_i} |\mathcal{O}_j|$. If $Z_i \leq M_i$, then any set of neighbor constraints $h_{j}(x_j, x_{o})$ for $j \in \mathcal{N}_i$ and $o \in \mathcal{O}_j$ are weakly non-interfering.
\end{proposition}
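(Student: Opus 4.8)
The plan is to construct the witness vector $a \in \mathbb{R}^{M_i}$ required by Definition~\ref{def:weakly_non_interfering} directly, exploiting the dimension bound $Z_i \le M_i$. Stack all the received request directions into a single matrix: let $\mathcal{A}_i \in \mathbb{R}^{Z_i \times M_i}$ be the matrix whose rows are the vectors $a_{ji,o}(\mathbf{x}_j, x_o)^\top$ ranging over $j \in \mathcal{N}_i$ and $o \in \mathcal{O}_j$ (there are exactly $Z_i$ of them). The condition ``weakly non-interfering'' is then precisely the statement that there exists $a \in \mathbb{R}^{M_i}$ with $\mathcal{A}_i a > \mathbf{0}$ (elementwise strict). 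So the goal reduces to: whenever $\mathcal{A}_i$ has at most $M_i$ rows, the system $\mathcal{A}_i a > \mathbf{0}$ is solvable.

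First I would dispose of the degenerate case: if any row of $\mathcal{A}_i$ is the zero vector, then no $a$ can make that entry strictly positive, so strictly speaking we must assume each $a_{ji,o} \neq \mathbf{0}$ — this should be stated as a (mild, generically satisfied) nondegeneracy hypothesis, or folded into the definition of a ``request'' being made at all. Granting that, the core argument is linear-algebraic. Since $\mathcal{A}_i$ has $Z_i \le M_i$ rows, its rows span a subspace of dimension $\rho := \operatorname{rank}(\mathcal{A}_i) \le Z_i$. Pick a maximal linearly independent subset of $\rho$ rows; call the corresponding submatrix $\hat{\mathcal{A}}_i \in \mathbb{R}^{\rho \times M_i}$, which has full row rank. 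Then $\hat{\mathcal{A}}_i$ is surjective onto $\mathbb{R}^{\rho}$, so there exists $a_0 \in \mathbb{R}^{M_i}$ with $\hat{\mathcal{A}}_i a_0 = \mathbf{1}_{\rho} > \mathbf{0}$. The remaining rows of $\mathcal{A}_i$ are linear combinations of the independent ones, so they need not automatically be positive on $a_0$; this is the one place requiring care.

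To handle the dependent rows, I would argue as follows: each dependent row $r$ satisfies $r^\top = \sum_{\ell} \lambda_\ell \hat r_\ell^\top$ for the independent rows $\hat r_\ell$. Rather than fight these coefficients, pass to a generic choice: the set of $a$ for which $\hat{\mathcal{A}}_i a > \mathbf{0}$ is an open, nonempty cone $\mathcal{K} \subseteq \mathbb{R}^{M_i}$. For each dependent row $r \neq \mathbf{0}$, the hyperplane $\{a : r^\top a = 0\}$ is a proper closed subset, hence its complement is open and dense, and $\{a : r^\top a > 0\}$ is a nonempty open halfspace. By scaling $a_0$ we may assume $\|a_0\|$ large; but more robustly, I would instead use the following trick: since $\hat{\mathcal{A}}_i$ has full row rank $\rho \le M_i$, the map $a \mapsto \hat{\mathcal{A}}_i a$ is onto, so we can prescribe the values on the independent rows to be any large positive vector $t\mathbf{1}_\rho$ while the dependent-row values are determined affinely; more directly, choose $a$ in the relative interior of the preimage cone and perturb within $\ker \hat{\mathcal{A}}_i$ (which has dimension $M_i - \rho \ge 0$) if needed. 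The cleanest route: the full-row-rank system $\hat{\mathcal{A}}_i a = b$ is solvable for \emph{every} $b \in \mathbb{R}^\rho$; picking $b$ so that the induced values on the dependent rows are also positive is a feasibility question about a finite set of strict linear inequalities in $b \in \mathbb{R}^\rho$, which, because the constraint ``$>0$'' carves out open halfspaces and the independent-row constraints can be met with slack, is nonempty by a standard convexity/Gordan-type argument.

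\textbf{Main obstacle.} The genuinely delicate step is the dependent-row case: when $Z_i > \rho$, making the independent rows positive does not by itself control the dependent rows, and one must either invoke a Gordan/Stiemke alternative theorem or an explicit perturbation argument to show the intersection of the $Z_i$ open halfspaces $\{r^\top a > 0\}$ is nonempty. I expect the paper sidesteps this by the cleaner observation that when $Z_i \le M_i$ one can \emph{also} arrange $\operatorname{rank}(\mathcal{A}_i) = Z_i$ generically, or by simply assuming the request directions are in ``general position'' so that $\mathcal{A}_i$ has full row rank $Z_i$; in that case $\mathcal{A}_i a = \mathbf{1}_{Z_i}$ is directly solvable and the proof is immediate. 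I would present the general-position version as the main line and remark that the rank-deficient case follows from a limiting/Gordan argument, noting that the degenerate case $a_{ji,o} = \mathbf{0}$ must be excluded.
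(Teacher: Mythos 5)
Your main line is essentially the paper's argument, but carried out with far more care than the paper itself provides. The paper's entire proof is the bare assertion that any $Z_i \le M_i$ vectors in $\mathbb{R}^{M_i}$ lie in a common open halfspace $\{u : a\cdot u > 0\}$, with no justification and no non-degeneracy hypothesis. You are right to be suspicious of that assertion: it is false as stated. Beyond the zero-vector case you flag, take $M_i = Z_i = 2$ with request directions $v$ and $-v$; no $a$ satisfies $a\cdot v > 0$ and $a\cdot(-v) > 0$ simultaneously. So the linear-independence (full row rank) hypothesis you isolate is not a cosmetic strengthening --- it is exactly what is needed to make the proposition true, and under it your argument (solve $\hat{\mathcal{A}}_i a = \mathbf{1}$ by surjectivity of a full-row-rank map) is complete and correct. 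In that sense your proposal both reproduces the paper's intended one-line geometric idea and exposes the gap in it.

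One correction to your closing remarks: the rank-deficient case does \emph{not} ``follow from a limiting/Gordan argument.'' Gordan's alternative says $\mathcal{A}_i a > \mathbf{0}$ is solvable if and only if no nonzero nonnegative combination of the rows vanishes; the antipodal example above is precisely a case where such a combination exists ($1\cdot v + 1\cdot(-v) = \mathbf{0}$), so the system is infeasible and no perturbation or limit rescues it. Gordan is the right tool here, but it characterizes when the dependent case \emph{fails} rather than showing it always succeeds. The honest statement of the proposition therefore requires either full row rank of the stacked request matrix or, equivalently, the Gordan condition that the request directions admit no vanishing nonnegative combination --- a hypothesis the paper omits.
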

%\vspace{-3ex}
\begin{proof}
For any set of $Z_i$ vectors $v_1, \dots, v_{Z_i} \in \mathbb{R}^{M_i}$, if $Z_i \leq M_i$ then there will always exist a halfspace in $\mathbb{R}^{M_i}$ defined by the vector $a \in \mathbb{R}^{M_i}$ 
\begin{equation*}
    \mathcal{U}^a = \{ u \in \mathbb{R}^{M_i}: a \cdot u > 0 \}, 
\end{equation*}
such that $v_1, \dots, v_{Z_i} \in \mathcal{U}^a$. Thus, every set of $Z_i$ vectors where $Z_i \leq M_i$ will be weakly non-interfering.
\end{proof}

%\vspace{-2ex}

Although requiring neighbors' requests to be weakly non-interfering guarantees that there will exist some jointly feasible control space in $\mathbb{R}^{M_i}$, incorporating control constraints $\mathcal{U}_i^s \subset \mathbb{R}^{M_i}$ may eliminate any jointly feasible control inputs. We define neighbors' constraints that cause such infeasibility as follows.
%\vspace{1ex}
\begin{definition}
    For a given agent $i \in [n]$, the neighbor constraint $h_{j}(x_j, x_{o}), j \in \mathcal{N}_i, o \in \mathcal{O}_j$ is \textbf{infeasible} if $\mathcal{U}_i \cap \overline{\mathcal{U}}_{ji} = \emptyset$. Further, the neighbor constraint pair $h_{j}(x_j, x_{o})$ and $h_{l}(x_l, x_{p})$ for $j,l \in \mathcal{N}_i, o \in \mathcal{O}_j, p \in \mathcal{O}_l$ are \textbf{jointly infeasible} if $\mathcal{U}_i \cap \overline{\mathcal{U}}_{ji} \cap \overline{\mathcal{U}}_{li} = \emptyset$.
\end{definition}

\noindent 
In Figure~\ref{fig:neighbor_req_diagram}, we illustrate an example of two jointly infeasible constraints for an agent $i\in [n]$ with $u_i \in \mathbb{R}^2$. Note that any infeasible constraint is simultaneously jointly infeasible with every other neighbor's constraint.

\begin{figure}
    \centering
    \begin{overpic}[width=.6\columnwidth]{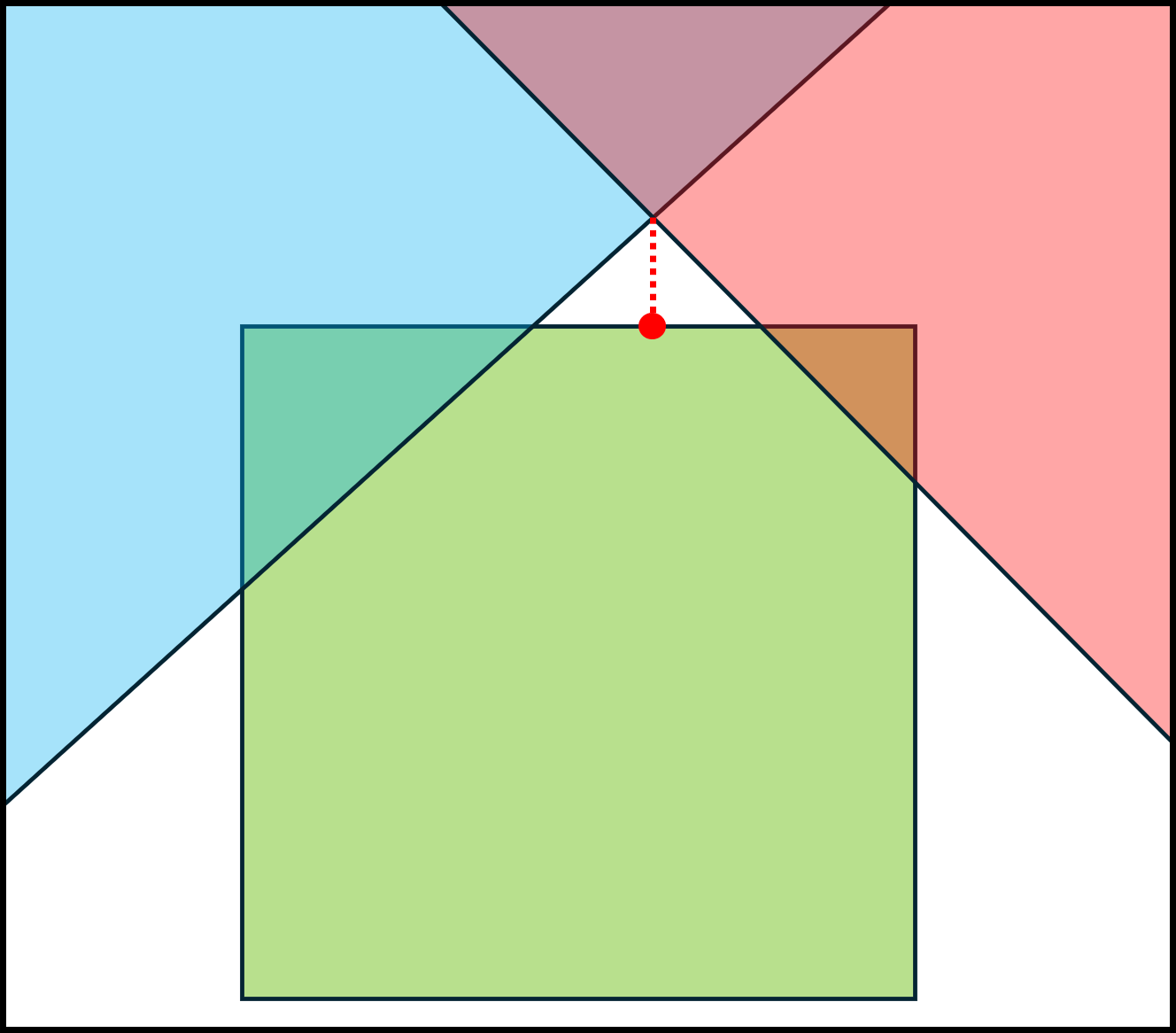}
    \put(-10,42){\parbox{0.75\linewidth}\normalsize \rotatebox{90}{$u_i^1$}}
    \put(48,-7){\parbox{0.75\linewidth}\normalsize{$u_i^2$}}
    \put(48,30){\parbox{0.75\linewidth}\normalsize{$\mathcal{U}_i^s$}}
    \put(54,53){\parbox{0.75\linewidth}\normalsize{$\overline{u}_i$}}
    \put(75,65){\parbox{0.75\linewidth}\normalsize{$\overline{\mathcal{U}}_{2i}$}}
    \put(20,65){\parbox{0.75\linewidth}\normalsize{$\overline{\mathcal{U}}_{1i}$}}
    \put(44.3,81.5){\parbox{0.75\linewidth}\normalsize{\scriptsize $\overline{\mathcal{U}}_{1i} \cap \overline{\mathcal{U}}_{2i}$}}
    \end{overpic}
    %\vspace{2ex}
    \caption{
        An example of when agent $i \in [n]$ is constrained by two neighbors, where $u_i \in \mathbb{R}^2$. The constrained control space for agent~$i$, $\mathcal{U}_i^s$, is shaded green, with the feasibly safe control actions for neighbors 1 and 2 shaded in blue and red, respectively. Both $\overline{\mathcal{U}}_{1i}$ and $\overline{\mathcal{U}}_{2i}$ are individually feasible, but jointly infeasible, with the set of feasibly safe control actions for neighbors 1 and 2 shown in the purple-shaded region. The compromise-seeking action $\overline{u}_i \in \mathbb{R}^2$ chosen by Algorithm~\ref{alg:getClosestPoint} is marked on the boundary of $\mathcal{U}_i^s$, which is the closest action in $\mathcal{U}_i^s$ to the feasibly safe control actions for both neighbors $\overline{\mathcal{U}}_{1i} \cap \overline{\mathcal{U}}_{2i}$.
    }
    \label{fig:neighbor_req_diagram}
\end{figure}
%\vspace{1ex}
\begin{lemma}\label{lem:infeas_implies_joint_infeas}
    If $h_{j}(x_j, x_{o}), j \in \mathcal{N}_i, o \in \mathcal{O}_j,$ is infeasible, then $h_{j}(x_j, x_{o})$ is jointly infeasible with $h_{l}(x_l, x_{p}), \forall l \in \mathcal{N}_i, p \in \mathcal{O}_l$.
\end{lemma}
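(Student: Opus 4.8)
The plan is to observe that this is an immediate set-theoretic consequence of the definitions of \emph{infeasible} and \emph{jointly infeasible}, relying only on the monotonicity of set intersection: for any sets $P, Q, R$, if $P \cap Q = \emptyset$ then $P \cap Q \cap R = \emptyset$ as well, since $P \cap Q \cap R \subseteq P \cap Q$.

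First I would invoke the hypothesis: $h_{j}(x_j, x_{o})$ being infeasible means, by definition, that $\mathcal{U}_i \cap \overline{\mathcal{U}}_{ji} = \emptyset$, where $\overline{\mathcal{U}}_{ji}$ is the set of neighbor-safe controls defined in \eqref{eq:safe_actions_for_neighbors}. Then I would fix an arbitrary $l \in \mathcal{N}_i$ and $p \in \mathcal{O}_l$ and consider the associated set $\overline{\mathcal{U}}_{li} \subseteq \mathbb{R}^{M_i}$. Intersecting the empty set $\mathcal{U}_i \cap \overline{\mathcal{U}}_{ji}$ with $\overline{\mathcal{U}}_{li}$ yields $\mathcal{U}_i \cap \overline{\mathcal{U}}_{ji} \cap \overline{\mathcal{U}}_{li} \subseteq \mathcal{U}_i \cap \overline{\mathcal{U}}_{ji} = \emptyset$, which is precisely the condition for $h_{j}(x_j,x_{o})$ and $h_{l}(x_l,x_{p})$ to be jointly infeasible. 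Since $l$ and $p$ were arbitrary, this holds for all $l \in \mathcal{N}_i$ and $p \in \mathcal{O}_l$, completing the argument.

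There is no substantive obstacle here; the statement is essentially a restatement of the remark immediately preceding it (``any infeasible constraint is simultaneously jointly infeasible with every other neighbor's constraint''), and the only thing to be careful about is matching the index bookkeeping in the definition of joint infeasibility to the sets in \eqref{eq:safe_actions_for_neighbors}. I would keep the write-up to two or three sentences.
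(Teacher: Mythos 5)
Your proposal is correct and follows exactly the same route as the paper's proof: unpack the definition of infeasibility as $\mathcal{U}_i \cap \overline{\mathcal{U}}_{ji} = \emptyset$ and note that intersecting an empty set with any $\overline{\mathcal{U}}_{li}$ remains empty, which is the definition of joint infeasibility. No differences worth noting.
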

%\vspace{-3ex}
\begin{proof}
    If $\mathcal{U}_i \cap \overline{\mathcal{U}}_{ji} = \emptyset$, then its intersection with any constraint set $\overline{\mathcal{U}}_{li}$ will also be empty, i.e. $\mathcal{U}_i \cap \overline{\mathcal{U}}_{ji} \cap \overline{\mathcal{U}}_{li} = \emptyset, \forall l \in \mathcal{N}_i, p \in \mathcal{O}_l$. Thus, $h_{j}(x_j, x_{o})$ is jointly infeasible with $h_{l}(x_l, x_{p}), \forall l \in \mathcal{N}_i, p \in \mathcal{O}_l$.
\end{proof}

%\vspace{-2ex}

Jointly infeasible constraints create the potential for no allowable control action from each agent in the system to satisfy all safety constraints.
%\vspace{1ex}
\begin{definition} \label{def:terminally_infeasible}
    We say that $(\mathbf{x}_i,\mathbf{x}_{\mathcal{O}_i}), \forall i \in [n]$ is a \textbf{terminally infeasible state} for a network of cooperating agents if there does not exist a set of control inputs $u_1^s, \dots, u_n^s \in \mathcal{U}_1^s \times \cdots \times \mathcal{U}_n^s$ such that 
    \begin{equation*}
        \Phi_i(\mathbf{x}_i,\mathbf{x}_{\mathcal{O}_i}, u_i^s, u_{\mathcal{N}_i}^s) \geq 0, \forall i \in [n].
    \end{equation*}
\end{definition}

\noindent
Thus, for a given state $(\mathbf{x}_i,\mathbf{x}_{\mathcal{O}_i}), \forall i \in [n]$, Algorithm~\ref{alg:colab_safety} will either terminate with at least one safe action for all agents, or $(\mathbf{x}_i,\mathbf{x}_{\mathcal{O}_i}), \forall i \in [n]$ will be a terminally infeasible state.
Since real-world applications in formation control will require reliably fast decision-making for all agents, we wish to investigate the rate at which Algorithm~\ref{alg:colab_safety} will either terminate with a safe action or identify in finite-time a terminally infeasible state according to agent safety requirements. 

\subsection{Compromise-Seeking and Maximally Beneficial Action} \label{sec:get_closest_point}

We define an additional term for counting the number of iterations carried out by Algorithm~\ref{alg:colaborate}, where we again define a counting variable $\upsilon_i$ to assist in our convergence discussions.
%\vspace{1ex}
\begin{definition} \label{def:negotiation_round}
    One \textbf{round of negotiation} is counted as a single iteration of the repeat loop of Algorithm~\ref{alg:colaborate}.
\end{definition}

\noindent
To encourage linear-time convergence of Algorithm~\ref{alg:colab_safety}, we define a function for choosing the closest action between a jointly infeasible set of constraints and the control constraints $\mathcal{U}_i$ of a given agent~$i$. We begin by defining the problem of finding the closest point between two disjoint, nonempty, polytopic convex hulls in $\mathbb{R}^{M_i}$ as
%\vspace{-2ex}
\begin{equation} \label{eq:closest_point}
    \begin{aligned}
        \min_{z_1,z_2} \quad & {\frac{1}{2}\left\Vert z_1 - z_2 \right\Vert}_2 \\
        \text{s.t.} \quad & G_1 z_1 - l_1 \leq 0 \\
        & G_2 z_2 - l_2 \leq 0,
    \end{aligned}
\end{equation}

%\vspace{-2.5ex}

where $G_1 \in \mathbb{R}^{Z_1 \times M_i}, l_1 \in \mathbb{R}^{Z_1}$ and $G_2 \in \mathbb{R}^{Z_2 \times M_i}, l_2 \in \mathbb{R}^{Z_2}$ are the matrix-vector pairs that encode the collection of $Z_1, Z_2 > 1$ halfspaces whose intersection defines the first and second convex hulls, respectively. We can restructure \eqref{eq:closest_point} in the form of a quadratic program as

%\vspace{-2ex}

\begin{equation} \label{eq:closest_point_qp}
    \begin{aligned}
        \min_{\xi} \quad &
        \frac{1}{2}
        \left\Vert
        \begin{bmatrix}
        I & -I    
        \end{bmatrix} \xi \right\Vert_2 \\
        \text{s.t.} \quad &
        \begin{bmatrix}
            G_1 & \mathbf{0} \\
            \mathbf{0} & G_2
        \end{bmatrix}
        \xi -
        \begin{bmatrix}
            b_1 \\ b_2
        \end{bmatrix}
        \leq 0,
    \end{aligned}
\end{equation}

\noindent
where $\xi = [z_1, z_2]^\top$. Note that
\begin{equation*}
    \left\Vert 
    \begin{bmatrix}
        I & -I    
    \end{bmatrix} \xi \right\Vert_2
    = 
    \xi^\top
    \begin{bmatrix}
        I & -I \\
        -I & I
    \end{bmatrix}
    \xi,
\end{equation*}
which yields the quadratic term. 

Given this formulation, we define an algorithm for selecting the closest point between two convex hulls that also considers requests from previous rounds of negotiation.

\begin{algorithm}
\caption{GetClosestPoint}\label{alg:getClosestPoint}
    \begin{algorithmic}[1]
        \Initialize{
            $i \gets i_0$
        }
        \Inputs{
        $\mathcal{U}_i,\overline{u}_i^{\upsilon}, \{\overline{\mathcal{U}}_{ji}, \delta_{ji}, \delta_{ji}^{\upsilon}\}_{j \in \mathcal{N}_i}$
        }
        \State $\mathcal{N}_i^{\upsilon} \gets \left\{j \in \mathcal{N}_i: \exists k \in \left[|\mathcal{O}_j| \right] \text{ s.t. } [\delta_{ji}^{\upsilon}]_k < 0 \right\}$
        \State $\mathcal{N}_i^{\upsilon+1} \gets \left\{j \in \mathcal{N}_i: \exists k \in \left[|\mathcal{O}_j| \right] \text{ s.t. }[\delta_{ji}]_k < 0 \right\}$
        \If{$\mathcal{N}_i^{\upsilon} = \emptyset$}
            \State $\overline{u}_i \gets$ Solve \eqref{eq:closest_point_qp} for $\left(\mathcal{U}_i,\bigcap_{j \in \mathcal{N}_i} \overline{\mathcal{U}}_{ji} \right)$ 
        \Else
            \If{$\mathcal{U}_i \cap \left(\bigcap_{j \in \mathcal{N}_i^{\upsilon+1}} \overline{\mathcal{U}}_{ji} \right) = \emptyset$}
                \State $\overline{u}_i \gets$ Solve \eqref{eq:closest_point_qp} for $\left(\mathcal{U}_i,\bigcap_{j \in \mathcal{N}_i^{\upsilon+1}} \overline{\mathcal{U}}_{ji} \right)$
            \Else
                \State $\overline{U}_i \gets \partial \mathcal{U}_i \cap \left(\bigcap_{j \in \mathcal{N}_i^{\upsilon+1}} \overline{\mathcal{U}}_{ji} \right)$ 
                \State $\overline{u}_i \gets \min_{u_i \in \overline{U}_i} \Vert u_i - \overline{u}_i^\upsilon \Vert_2$
            \EndIf
        \EndIf
        \State \Return $\overline{u}_i$
    \end{algorithmic}
\end{algorithm}

\noindent
We can describe the procedure of Algorithm~\ref{alg:getClosestPoint} in words as follows:
\begin{enumerate}
    \item If there are no requests from the previous negotiation round, then compute the closest point using \eqref{eq:closest_point_qp}.
    \item Otherwise, compute the closest point to the current set of requests from this negotiation round using \eqref{eq:closest_point_qp}.
    \item If the current set of requests are feasible, then project the previous action $\overline{u}_i^\upsilon$ to the curve that is found by taking the intersection of the current requests with $\partial \mathcal{U}_i$.
    % (which can happen when the number of current requests is less than $M_i$) 
\end{enumerate}

\noindent
This protocol leverages a pseudo-greedy approach to computing a compromise-seeking action for infeasible neighbors by complying fully with the most recent requests made by neighbors. Thus, rather than converging asymptotically (as was the case for \cite[Algorithm 1]{butler2023distributed}), Algorithm~\ref{alg:getClosestPoint} enables finite-time convergence for Algorithm~\ref{alg:colab_safety}, which is demonstrated in Section~\ref{sec:lin_alg_convergence}.

Given that Algorithm~\ref{alg:getClosestPoint} is responsible for selecting an action $\overline{u}_i \in \partial \mathcal{U}_i$ that is a suitable compromise between neighbors with jointly infeasible requests for any given round of collaboration, we define the term compromise-seeking action as follows.
%\vspace{1ex}
\begin{definition}
    For a given agent $i \in [n]$, a \textbf{compromise-seeking action} $\overline{u}_i \in \mathbb{R}^{M_i}$ for a subset of jointly infeasible neighbor constraints in $\mathcal{I} \subseteq \mathcal{N}_i$ is the closest point on $\partial \mathcal{U}_i$ to the non-empty, disjoint, convex hull defined by $\bigcap_{j \in \mathcal{I}}\overline{\mathcal{U}}_{ji}.$  
\end{definition}
\noindent
See Figure~\ref{fig:neighbor_req_diagram} for an example of a compromise-seeking action that would be selected by Algorithm~\ref{alg:getClosestPoint} for two jointly infeasible constraints. Note that by Algorithms~\ref{alg:coordinate} and \ref{alg:getClosestPoint}, any potential adjustment $\varepsilon_{ji}$, computed and sent by agent~$i$, is a function of the compromise-seeking action  $\overline{u}_i \in \partial \mathcal{U}_i$ as
\begin{equation}\label{eq:eps_funct_of_comp_pt}
    \varepsilon_{ji}(\overline{u}_i) = -(A_{ji,\mathcal{O}_j}\overline{u}_i + \bar{c}_{ji} + \delta_{ji})
\end{equation}
where an adjustment is sent only if $\exists k \in [|\mathcal{O}_j|]$ such that $[\varepsilon_{ji}(\overline{u}_i)]_{k} > 0$.
We define a few additional terms related to the relative benefit of a given control input $\overline{u}_i$ towards assisting in the safety of each neighbor $j \in \mathcal{N}_i$.
%\vspace{1ex}
\begin{definition}
    For a given agent $i\in [n]$, the control action $\overline{u}_i \in \partial \mathcal{U}_i$ is \textbf{maximally beneficial} for neighbor $j \in \mathcal{N}_i$ with respect to obstacle $o \in \mathcal{O}_j$ if
    \begin{equation}\label{eq:max_benifit_neighbor}
        \overline{u}_i = \argmax_{u_i \in \mathcal{U}_i} a_{ji,o}(\mathbf{x}_j, x_{o})u_i.
    \end{equation}
\end{definition}

%\vspace{-3ex}

When an agent~$i$ selects an action that is maximally beneficial to 
help neighbor $j$ to avoid an obstacle $o$,
% \humph{avoiding} an obstacle of neighbor $j$, 
any other action taken by $i$ will be equally or less helpful, i.e. $a_{ji,o}(\mathbf{x}_j, x_{o}) \overline{u}_i \geq a_{ji,o}(\mathbf{x}_j, x_{o})u_i, \forall u_i \in \mathcal{U}_i$. This may cause agent~$i$ to become fully constrained with respect to the requests of neighbor~$j$, defined as follows.
%\vspace{1ex}
\begin{definition} \label{def:fully_constrained}
    Agent $i \in [n]$ is \textbf{fully constrained} with respect to neighbor $j \in  \mathcal{N}_i$, at negotiation round $\upsilon \geq 1$, if $\delta_{ji} + \varepsilon_{ji}(\overline{u}_i) =~\mathbf{0}$.
\end{definition}
\noindent
To assist in the proof of finite-time algorithm convergence, we provide the following lemma on maximally beneficial actions of an agent~$i$ for a single request from neighbor $j$.
%\vspace{1ex}
\begin{lemma} \label{lem:max_ben_iff_full_cons}
    Let agent $i \in [n]$ receive only one request $\delta_{ji} \leq 0$ from neighbor $j \in \mathcal{N}_i$ with $|\mathcal{O}_j| = 1$. If $\varepsilon_{ji}(\overline{u}_i) > 0$, as computed in Line 15 of Algorithm~\ref{alg:coordinate}, then $\overline{u}_i \in \partial \mathcal{U}_i$ is maximally beneficial for neighbor $j \in \mathcal{N}_i$ and agent~$i$ will be fully constrained with respect to agent~$j$ for all subsequent rounds of negotiation. 
\end{lemma}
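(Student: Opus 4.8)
The plan is to argue that when only a single neighbor $j$ with a single obstacle is involved, the geometry of GetClosestPoint forces $\overline{u}_i$ to lie at the extreme point of $\mathcal{U}_i$ in the direction dictated by $a_{ji,o}$, and that once this happens the residual deficit cannot be reduced further, so agent~$i$ stays fully constrained. First I would unpack the hypothesis: with $|\mathcal{O}_j| = 1$, the request set $\overline{\mathcal{U}}_{ji} = \{u_i : a_{ji,o}^\top u_i + \bar c_{ji} + \delta_{ji} \geq 0\}$ is a single halfspace with inward normal $a_{ji,o}$. The condition $\varepsilon_{ji}(\overline{u}_i) > 0$ means, by \eqref{eq:eps_funct_of_comp_pt}, that $a_{ji,o}^\top \overline{u}_i + \bar c_{ji} + \delta_{ji} < 0$, i.e.\ $\overline{u}_i \notin \overline{\mathcal{U}}_{ji}$, so $\mathcal{U}_i \cap \overline{\mathcal{U}}_{ji} = \emptyset$ (the single halfspace does not meet $\mathcal{U}_i$), which puts us in the \textbf{else} branch of Algorithm~\ref{alg:coordinate}, and correspondingly in the branch of Algorithm~\ref{alg:getClosestPoint} where $\overline{u}_i$ is obtained by solving the closest-point QP \eqref{eq:closest_point_qp} between $\mathcal{U}_i$ and $\overline{\mathcal{U}}_{ji}$ (whichever sub-branch, since the infeasibility holds for the current request too).

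The key step is the claim that the closest point in the convex polytope $\mathcal{U}_i$ to a disjoint halfspace $\{a^\top u \geq b\}$ is exactly $\argmax_{u \in \mathcal{U}_i} a^\top u$, which is \eqref{eq:max_benifit_neighbor}. I would prove this by a standard separating/supporting-hyperplane argument: if $\mathcal{U}_i$ and the halfspace $H = \{u : a^\top u \geq b\}$ are disjoint, then $\max_{u \in \mathcal{U}_i} a^\top u < b$; let $u^\star$ attain this maximum. For any $u \in \mathcal{U}_i$ and any $w \in H$, the distance $\|u - w\|_2 \geq \frac{1}{\|a\|_2}(a^\top w - a^\top u) \geq \frac{1}{\|a\|_2}(b - a^\top u) \geq \frac{1}{\|a\|_2}(b - a^\top u^\star)$, with the last bound attained by $u = u^\star$ and $w$ the orthogonal projection of $u^\star$ onto $\partial H$; hence $u^\star$ is the closest point in $\mathcal{U}_i$ to $H$, i.e.\ $\overline{u}_i = u^\star = \argmax_{u_i \in \mathcal{U}_i} a_{ji,o}^\top u_i$, which is precisely the definition of $\overline{u}_i$ being maximally beneficial for neighbor $j$ with respect to obstacle $o$. (I should note $u^\star \in \partial \mathcal{U}_i$ since the objective is linear and nonconstant over $\mathcal{U}_i$, consistent with the claim $\overline{u}_i \in \partial \mathcal{U}_i$.)

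For the ``fully constrained for all subsequent rounds'' part, I would argue that in later negotiation rounds the deficit passed to agent~$i$ can only become more negative or stay the same: by Line 14 of Algorithm~\ref{alg:colaborate}, $\bar c_{ij}$ is updated additively, and the relevant quantity governing the halfspace, $\bar c_{ji} + \delta_{ji}$, shifts in a way that moves $\overline{\mathcal{U}}_{ji}$ further from $\mathcal{U}_i$ (or leaves it fixed), because once $\varepsilon_{ji} > 0$ was sent, neighbor $j$ has been added to $\overline{\mathcal{N}}_i$ and stops receiving fresh allotments while the accumulated request only tightens. Since $\overline{u}_i$ already maximizes $a_{ji,o}^\top u_i$ over $\mathcal{U}_i$, the best agent~$i$ can do at any future round is repeat this same maximally beneficial action, so $a_{ji,o}^\top \overline{u}_i + \bar c_{ji} + \delta_{ji}$ remains $\leq 0$ and $\delta_{ji} + \varepsilon_{ji}(\overline{u}_i) = \delta_{ji} - (a_{ji,o}^\top \overline{u}_i + \bar c_{ji} + \delta_{ji}) = -(a_{ji,o}^\top \overline{u}_i + \bar c_{ji})$; tracking the bookkeeping update of $\bar c_{ji}$ shows this equals $\mathbf{0}$ in the sense of Definition~\ref{def:fully_constrained}, giving the claim.

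The main obstacle I anticipate is the last part: pinning down exactly how $\bar c_{ji}$, $\delta_{ji}$, and $\varepsilon_{ji}$ evolve across rounds of negotiation through the interplay of Algorithms~\ref{alg:colaborate} and~\ref{alg:coordinate} (in particular the role of $\overline{\mathcal{N}}_i$ and the accumulation in Line 14), so that ``fully constrained'' genuinely persists rather than merely holding at the round where $\varepsilon_{ji} > 0$ is first triggered. The closest-point-equals-argmax geometric lemma is routine; the algorithmic invariant is where the care is needed.
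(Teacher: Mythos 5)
Your proposal is correct and follows essentially the same route as the paper's own proof: both observe that $\varepsilon_{ji}(\overline{u}_i)>0$ forces the closest-point computation between $\mathcal{U}_i$ and the single halfspace $\overline{\mathcal{U}}_{ji}$, reduce that computation to the linear maximization \eqref{eq:max_benifit_neighbor} (maximal benefit), and then invoke the Line~20 update of $\bar{c}_{ji}$ to conclude the fully-constrained condition persists in all later negotiation rounds. The only difference is one of detail: you actually prove the closest-point-to-disjoint-halfspace-equals-argmax claim and write out the identity $\delta_{ji}+\varepsilon_{ji}(\overline{u}_i)=-(a_{ji,o}^\top\overline{u}_i+\bar{c}_{ji})$ together with the bookkeeping that drives it to zero, both of which the paper merely asserts.
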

%\vspace{-3ex}
\begin{proof}
    If $\varepsilon_{ji}(\overline{u}_i) > 0$, then by Algorithms~\ref{alg:coordinate} and \ref{alg:getClosestPoint}, node $i$ will have selected $\overline{u}_i$ by solving \eqref{eq:closest_point_qp} for $\left(\mathcal{U}_i, \overline{\mathcal{U}}_{ji} \right)$, where $\overline{\mathcal{U}}_{ji}$ is defined by \eqref{eq:safe_actions_for_neighbors}. Since $|\mathcal{O}_j| = 1$, solving \eqref{eq:closest_point_qp} for $\overline{u}_i \in \partial \mathcal{U}_i$ simplifies to \eqref{eq:max_benifit_neighbor}. Thus, the compromise-seeking action $\overline{u}_i \in \partial \mathcal{U}_i$ is maximally beneficial for neighbor $j \in \mathcal{N}_i$. Further, since agent~$i$ has already selected the action $\overline{u}_i$ that is most helpful for agent~$j$ (i.e., since $a_{jk,o}(\mathbf{x}_j, x_{o}) \overline{u}_k \geq a_{jk,o}(\mathbf{x}_j, x_{o})u_k^s, \forall u_k^s \in \mathcal{U}_k^s$), then, by Line 20 of Algorithm~\ref{alg:coordinate}, $\bar{c}_{ji}$ must remain constant at every negotiation round, implying that $\delta_{ji} = \varepsilon_{ji}(\overline{u}_i)$, making agent~$i$ fully constrained with respect to agent~$j$ by Definition~\ref{def:fully_constrained}.
\end{proof}

\section{Linear-Time Algorithm Convergence}\label{sec:lin_alg_convergence}
In this section, we investigate the sufficient conditions under which Algorithm~\ref{alg:colab_safety} will converge in linear time to viable safe actions for all neighbors if they exist. We describe three conditions that are sufficient for the linear time complexity of Algorithm~\ref{alg:colab_safety}. These conditions are, from the most general to the most specific in our application: Synchronization, a tree-like graph structure, and limited 1-hop initial interaction, which are described by Assumptions~\ref{assume:syncronized}-\ref{assume:only_1hop_requests}, respectively.

\subsection{Communication Synchronization} \label{sec:com_sync}
Note that by defining the passage of time through rounds of communication in Definitions~\ref{def:round_com} and \ref{def:negotiation_round} there is an implied property of synchronization that must hold for all cooperating agents. 
%\vspace{1ex}
\begin{assumption} \label{assume:syncronized}
     For any given agent $i \in [n]$, let $\tau_i = \tau_j$ and $\upsilon_i = \upsilon_j, \forall j \in \mathcal{N}_i$.
\end{assumption}

\noindent
In other words, we require that, before any agent proceeds with a new round of collaboration (i.e. before entering Line 6 of  Algorithm~\ref{alg:colab_safety}), all other neighbors must have also completed a round of collaboration. This assumption also recursively implies synchronization across the entire coupled cooperating system. To simplify notation, under Assumption~\ref{assume:syncronized}, we denote the collaborative round as $\tau$ and negotiation round as $\upsilon$ for the entire system.

\subsection{Tree-Structured Communication Graph} \label{sec:case_study}
As an initial study into how interfering safety constraints resolve in Algorithm~\ref{alg:colab_safety}, we begin by considering a simplified version of the communication structure within a multi-agent system. We can describe the communication graph of a network with a binary adjacency matrix $G(\mathcal{E}) \in \{0,1\}^{n \times n}$, where a communication link from agent~$j$ to agent~$i$ is represented with the entry $G_{ij} = 1$ such that if $(i,j) \in \mathcal{E}$, then $G_{ij} = G_{ji} = 1$, otherwise $G_{ij} = G_{ji} = 0$. 
In other words, we assume the graph describing the dynamic coupling between agents is equivalent to the communication graph and that the communication graph is undirected. 

To simplify this initial analysis of the consequences of infeasible constraints, we impose the following assumption of the flow of communication in the graph. 
%\vspace{1ex}
\begin{assumption} \label{assume:tree_structure}
    The communication graph $G$ is an undirected graph in which any two vertices $i,j \in [n]$ are connected by exactly one path.
\end{assumption}
\noindent
In other words, Assumption~\ref{assume:tree_structure} requires that $G$ has a tree structure. While this assumption is fairly restrictive on the structure of the communication graph, it allows us to consider the propagation of infeasible requests through the network with respect to a single root node without requiring consideration of cascading infeasible requests that may propagate or cycle infinitely in a generic network as a result of a single infeasible request. In the remainder of our discussion, we use the following terminology leveraging the structure imposed by Assumption~\ref{assume:tree_structure}. A \textit{root node} of interest denotes a node $i\in [n]$ which may be receiving requests from \textit{1-hop neighbors}, which includes all nodes $j \in \mathcal{N}_i$, i.e., neighbors with a direct communication link to the root node. Note that by Assumption~\ref{assume:tree_structure} there cannot exist any communication links between 1-hop neighbors. Further, we refer to \textit{2-hop neighbors} with respect to a given root node as simply the 1-hop neighbors of the root node's 1-hop neighbors, which includes all nodes $k \in \mathcal{N}_j \setminus \{i\}$ for nodes $j \in \mathcal{N}_i$. 

\subsection{Limited 1-Hop Neighborhood Interaction} \label{sec:only-1-hop-neighborhood-req}
In this section, we define a condition that limits the initial interaction of collaborating neighbors such that the convergence rate analysis of Algorithm~\ref{alg:colab_safety} is simplified.
%\vspace{1ex}
\begin{assumption} \label{assume:only_1hop_requests}
    At $\tau = \upsilon = 1$, there exists only one node $i \in  [n]$ such that $\delta_{ji} \leq \mathbf{0}~\forall j \in \mathcal{N}_{i}$, and $\delta_{kl} = \mathbf{0}~\forall k \in [n] \setminus (\mathcal{N}_i \cup \{ i \})\text{ and } l \in \mathcal{N}_k$. Further, let $|\mathcal{O}_j| = 1~\forall j \in \mathcal{N}_i \cup \{ i \}$.
\end{assumption}

\noindent
In other words, at the first round of collaboration for a group of collaborating nodes using Algorithm~\ref{alg:colab_safety}, safety requests are sent only by nodes in the 1-hop neighborhood of node $i$ with exactly one request each (mandated by $| \mathcal{O}_i| = 1)$. 
This assumption simplifies analyzing the convergence of Algorithm~\ref{alg:colab_safety} by allowing only one set of (potentially) interfering requests at a single node $i\in [n]$. One may consider Assumption~\ref{assume:only_1hop_requests} to be a specific initial condition for the system at the start of collaboration using Algorithm~\ref{alg:colab_safety}, as illustrated by Figure~\ref{fig:tree_case_diagram}.

\begin{figure}
    \centering
    \begin{overpic}[width=.8\columnwidth]{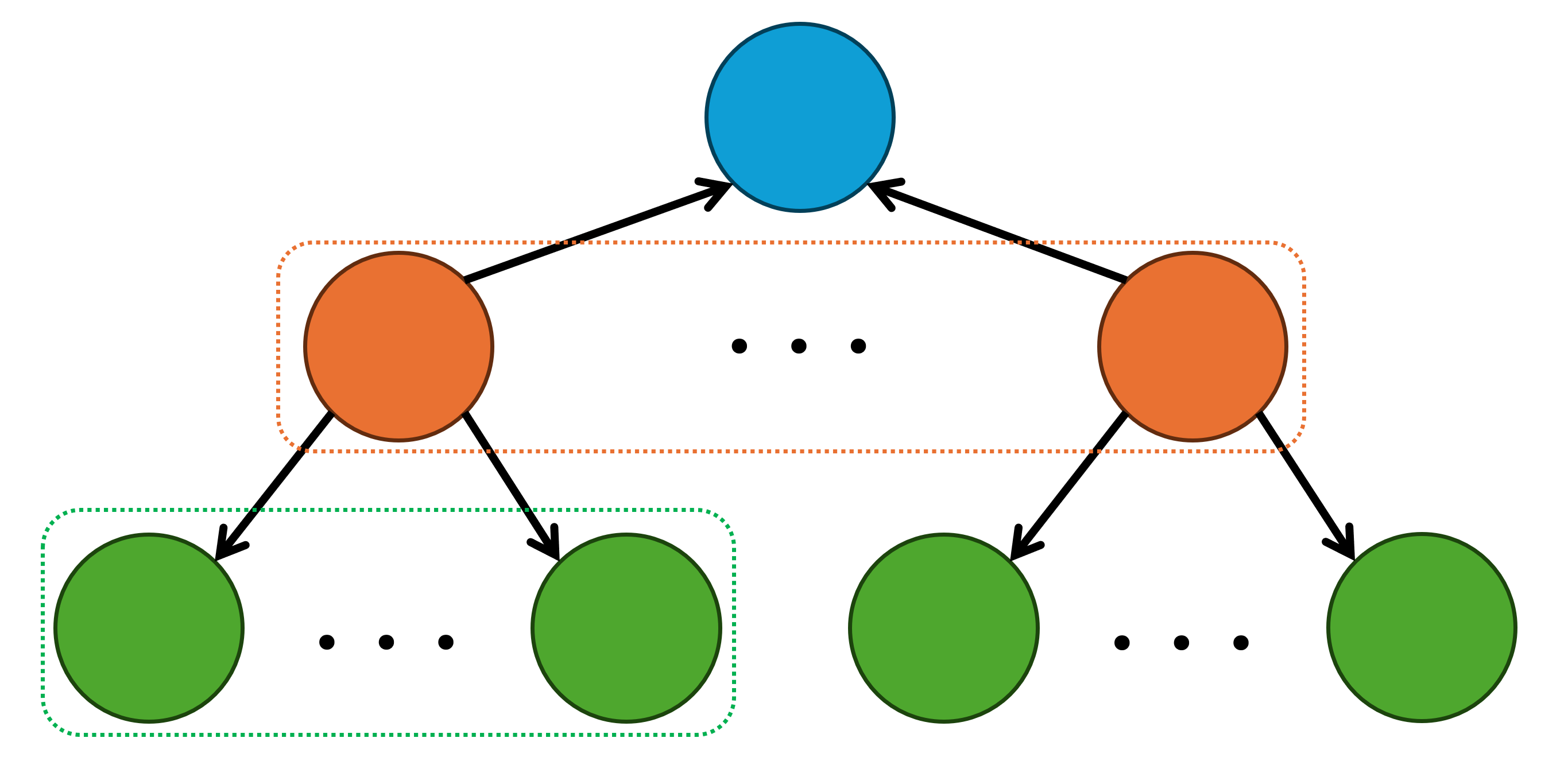}
    \put(50,39.5){\parbox{0.75\linewidth}\normalsize{\Large $i$}}
    \put(8,24.5){\parbox{0.75\linewidth}\normalsize{\Large $\mathcal{N}_i$}}
    \put(24,25){\parbox{0.75\linewidth}\normalsize{\Large $j$}}
    % \put(-7,7){\parbox{0.75\linewidth}\normalsize{\Large $\mathcal{N}_j$}}
    \put(7.5,7){\parbox{0.75\linewidth}\normalsize{\Large $k$}}
    \put(32,36){\parbox{0.75\linewidth}\normalsize{$\delta_{ji}$}}
    \put(10.5,18.5){\parbox{0.75\linewidth}\normalsize{$\delta_{jk}$}}
    \end{overpic}
    %\vspace{2ex}
    \caption{
        An illustration of Assumptions~\ref{assume:tree_structure} and \ref{assume:only_1hop_requests} for a given system, where, at the start of the first round of negotiation, requests are sent by the 1-hop neighborhood $\mathcal{N}_i$~(orange) of only one root agent $i \in [n]$~(blue), which are received by both $i$ and the 2-hop neighborhood~(green) of $i$.
    }
    \label{fig:tree_case_diagram}
\end{figure}

We establish the following lemmas regarding conditions for determining if $(\mathbf{x}_i, \mathbf{x}_{\mathcal{O}_i}), \forall i \in [n]$, is a terminally infeasible state under the above assumptions. 
%\vspace{1ex}
\begin{lemma} \label{lem:tau_geq2_term_infeas}
    Let Assumptions~\ref{assume:nonempty_convex_contraints}-\ref{assume:only_1hop_requests} hold.  
    For a given node $i \in [n]$, if $\exists j, \in \mathcal{N}_i, \, o \in \mathcal{O}_j$ such that $h_{j}(x_j, x_{o})$ is infeasible, or $\exists j, l \in \mathcal{N}_i, \, o \in \mathcal{O}_j, \, p \in \mathcal{O}_l$ such that $h_{j}(x_j, x_{o})$ and $h_{l}(x_l, x_{p})$ are jointly infeasible, for any collaborative round $\tau > 1$,  then $(\mathbf{x}_i, \mathbf{x}_{\mathcal{O}_i}), \forall i \in [n]$ is a terminally infeasible state. 
\end{lemma}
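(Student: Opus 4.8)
The plan is to leverage the tree structure (Assumption~\ref{assume:tree_structure}) together with the limited-interaction initial condition (Assumption~\ref{assume:only_1hop_requests}) to show that an infeasible or jointly-infeasible neighbor constraint at node $i$ cannot be resolved by any downstream adjustment, so that the corresponding safety deficit persists forever and no globally feasible control tuple can exist. First I would note that under Assumption~\ref{assume:only_1hop_requests}, at $\tau = \upsilon = 1$ only node $i$ receives requests, each neighbor $j \in \mathcal{N}_i$ sends a single request $\delta_{ji} \le \mathbf{0}$ (since $|\mathcal{O}_j| = 1$), and all other nodes send $\delta_{kl} = \mathbf{0}$. I would then invoke Lemma~\ref{lem:infeas_implies_joint_infeas} to fold the ``infeasible'' case into the ``jointly infeasible'' case: if some $h_j(x_j,x_o)$ is infeasible for node $i$, it is jointly infeasible with every other neighbor constraint, so it suffices to treat the jointly-infeasible pair $h_j(x_j,x_o)$, $h_l(x_l,x_p)$ with $j,l \in \mathcal{N}_i$ and argue that $\mathcal{U}_i \cap \overline{\mathcal{U}}_{ji} \cap \overline{\mathcal{U}}_{li} = \emptyset$ is preserved across all rounds.

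The core of the argument is to track how $\overline{\mathcal{U}}_{ji}$ evolves. Since $|\mathcal{O}_j| = 1$, when $\varepsilon_{ji}(\overline{u}_i) > 0$ Lemma~\ref{lem:max_ben_iff_full_cons} applies: node $i$ becomes \emph{fully constrained} with respect to $j$, meaning $\delta_{ji} + \varepsilon_{ji}(\overline{u}_i) = \mathbf{0}$, so by Line~20 of Algorithm~\ref{alg:coordinate} the running request total $\bar c_{ji}$ stays constant and the set $\overline{\mathcal{U}}_{ji}$ (defined by \eqref{eq:safe_actions_for_neighbors}) is unchanged in every subsequent negotiation round. Applying this to both $j$ and $l$ shows that once the jointly-infeasible configuration is detected (which happens at $\upsilon = 1$ since $\mathcal{U}_i \cap \overline{\mathcal{U}}_{ji} \cap \overline{\mathcal{U}}_{li}$ is already empty), the sets $\overline{\mathcal{U}}_{ji}$ and $\overline{\mathcal{U}}_{li}$ are frozen, so the emptiness of their intersection with $\mathcal{U}_i$ persists for all $\tau$. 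Next I would use the tree structure to rule out any rescue from the 2-hop neighborhood: because $G$ is a tree, the only path from a 2-hop neighbor $k$ to node $i$ passes through a unique $j \in \mathcal{N}_i$, and node $i$'s request $\delta_{ji}$ to $j$ is determined entirely by $i$'s own deficit; neighbors of $j$ other than $i$ cannot directly alter $\overline{\mathcal{U}}_{ji}$, and the propagation of $i$'s request downstream cannot feed back to change $a_{ji,o}$ or $\mathcal{U}_i$. Hence the obstruction at $i$ is structurally irreparable.

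Finally I would close the argument by connecting persistent infeasibility at the algorithm level to the definition of a terminally infeasible state (Definition~\ref{def:terminally_infeasible}). Since $\mathcal{U}_i \cap \overline{\mathcal{U}}_{ji} \cap \overline{\mathcal{U}}_{li} = \emptyset$ with frozen sets, there is no $u_i^s \in \mathcal{U}_i^s$ and admissible neighbor actions $u_{\mathcal{N}_i}^s$ making $\Phi_i(\mathbf{x}_i, \mathbf{x}_{\mathcal{O}_i}, u_i^s, u_{\mathcal{N}_i}^s) \ge \mathbf{0}$ — any feasible tuple would have to lie in $\overline{\mathcal{U}}_{ji} \cap \overline{\mathcal{U}}_{li}$ while also respecting $\mathcal{U}_i^s \subseteq \mathcal{U}_i$, a contradiction — so by Definition~\ref{def:terminally_infeasible} the state $(\mathbf{x}_i, \mathbf{x}_{\mathcal{O}_i}), \forall i \in [n]$ is terminally infeasible. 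I expect the main obstacle to be the bookkeeping in the middle step: rigorously arguing that the $\tau > 1$ hypothesis guarantees the jointly-infeasible detection has already triggered the ``fully constrained'' lock-in (so that I can legitimately freeze $\overline{\mathcal{U}}_{ji}$), and carefully checking that the tree structure genuinely prevents any cyclic adjustment from reopening those sets — this requires tracing through Algorithms~\ref{alg:colaborate}, \ref{alg:coordinate}, and \ref{alg:getClosestPoint} to confirm no branch modifies $A_{ji,\mathcal{O}_j}$, $\bar c_{ji}$, or $\mathcal{U}_i$ once the compromise-seeking action is maximally beneficial.
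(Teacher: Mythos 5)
There is a genuine gap, and it shows up in the fact that your argument never actually uses the hypothesis $\tau > 1$ — which should be a warning sign, because joint infeasibility at round $\tau = 1$ is \emph{not} terminal (a 1-hop neighbor $j$ that receives an adjustment $\varepsilon_{ji} > 0$ from the root can still offload that deficit to its 2-hop neighbors $k \in \mathcal{N}_j \setminus \{i\}$, after which $j$'s request $\delta_{ji}$ shrinks, $\overline{\mathcal{U}}_{ji}$ enlarges, and the joint infeasibility at $i$ can dissolve). Your claim that ``neighbors of $j$ other than $i$ cannot directly alter $\overline{\mathcal{U}}_{ji}$'' misses exactly this indirect channel: $\overline{\mathcal{U}}_{ji}$ depends on $\bar c_{ji} + \delta_{ji}$, and $\delta_{ji}$ is determined by how much of $j$'s deficit the nodes in $\mathcal{N}_j \setminus \{i\}$ absorb. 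The paper's proof closes this channel by applying Lemma~\ref{lem:max_ben_iff_full_cons} to the \emph{2-hop} neighbors $k \in \mathcal{N}_j \setminus \{i\}$ (each of which, by Assumption~\ref{assume:only_1hop_requests}, receives exactly one request and hence is legitimately covered by that lemma): by the time the system reaches $\tau > 1$, every such $k$ has already chosen its unique maximally beneficial action and is fully constrained with respect to $j$, so $j$ has no remaining slack, the same deficit $\delta_j < 0$ recurs in every subsequent round, and terminal infeasibility follows.

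Your proposal instead applies Lemma~\ref{lem:max_ben_iff_full_cons} to node $i$ itself to ``freeze'' $\overline{\mathcal{U}}_{ji}$ and $\overline{\mathcal{U}}_{li}$, but the hypothesis of that lemma is that the agent receives \emph{only one} request with $|\mathcal{O}_j| = 1$; in the jointly infeasible scenario node $i$ receives at least two requests, and the compromise-seeking action returned by Algorithm~\ref{alg:getClosestPoint} (the closest point of $\partial\mathcal{U}_i$ to $\overline{\mathcal{U}}_{ji} \cap \overline{\mathcal{U}}_{li}$) is in general maximally beneficial for neither $j$ nor $l$, so the lemma does not apply and the lock-in at $i$ is not established. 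The fix is to redirect the argument the way the paper does: use $\tau > 1$ to conclude (via Lemma~\ref{lem:max_ben_iff_full_cons} at the 2-hop level and the tree structure, which guarantees those nodes have no other requesters) that the 1-hop neighbors can no longer reallocate anything downstream, and then observe that the adjustment $\varepsilon_{ji}(\overline{u}_i) > 0$ generated by the joint infeasibility at $i$ can therefore never be absorbed, so no admissible $(u_i^s, u_{\mathcal{N}_i}^s)$ achieves $\Phi_i \geq \mathbf{0}$ and Definition~\ref{def:terminally_infeasible} applies. Your use of Lemma~\ref{lem:infeas_implies_joint_infeas} to fold the infeasible case into the jointly infeasible case matches the paper and is fine.
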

%\vspace{-3ex}
\begin{proof}
    By Algorithm~\ref{alg:colab_safety} and Assumptions~\ref{assume:syncronized} and \ref{assume:only_1hop_requests}, the system will enter a second round of collaboration (i.e., $\tau = 2$) if $\exists j \in \mathcal{N}_i \cup \{ i\}$ such that $\delta_j < 0$ by the end of the first round of collaboration. 
    By Algorithm~\ref{alg:colab_safety}, $\delta_j < 0$ implies that node~$j$ received an adjustment $\varepsilon_{kj}(\overline{u}_k) > 0, \forall k \in \mathcal{N}_j$, thereby necessitating an additional round of collaboration.
    
    By Assumptions~\ref{assume:tree_structure}-\ref{assume:only_1hop_requests} and Lemma~\ref{lem:max_ben_iff_full_cons}, if node~$k \in \mathcal{N}_j\setminus \{i\}$  sends an adjustment $\varepsilon_{kj}(\overline{u}_k) > 0$ to node~$j \in \mathcal{N}_i$, then node~$k$ will have selected a compromise-seeking action $\overline{u}_k \in \partial \mathcal{U}_k^s$ that is maximally beneficial to node~$j$ (since node~$k$ will have received requests from no other neighbors by Assumption~\ref{assume:only_1hop_requests}). Additionally, by convexity from Assumption~\ref{assume:nonempty_convex_contraints}, we have that $\max_{u_k^s \in \mathcal{U}_k^s}a_{jk,o}(\mathbf{x}_j, x_o)u_k^s$ is unique. 
    Thus, for all subsequent negotiation rounds, every node~$k \in \mathcal{N}_j$ will be fully constrained. 
    
    Note that if $\exists j \in \mathcal{N}_i, o \in \mathcal{O}_j$ such that $h_{j}(x_j, x_{o})$ is infeasible, then $\varepsilon_{ji}(\overline{u}_i) > 0$ also holds by Lemma~\ref{lem:infeas_implies_joint_infeas}.
    Therefore, if $\exists j, l \in \mathcal{N}_i, o \in \mathcal{O}_j, p \in \mathcal{O}_l$ such that $h_{j}(x_j, x_{o})$ and $h_{l}(x_l, x_{p})$ are jointly infeasible, then, by Algorithms~\ref{alg:coordinate} and \ref{alg:getClosestPoint}, node $i$ will send an adjustment $\varepsilon_{ji}(\overline{u}_i) > 0$ to node $j$ that must be reallocated to neighbors in $\mathcal{N}_j$. 
    However, since every node~$k \in \mathcal{N}_j \setminus \{i\}$ is fully constrained with respect to neighbor~$j$, every negotiation round will end with the same deficit $\delta_j < 0$ and node~$j$ will repeat the same requests $\delta_{ji}$ at each subsequent round by Line 20 of Algorithm~\ref{alg:coordinate}. 
    Thus, $\nexists u_i^s , u_{\mathcal{N}_i}^s \in \mathcal{U}_i^s \times \mathcal{U}_{\mathcal{N}_i}^s$ that will satisfy $\Phi_i(\mathbf{x}_i, \mathbf{x}_{\mathcal{O}_i}, u_i^s, u_{\mathcal{N}_i}^s) \geq 0$, making $(\mathbf{x}_i, \mathbf{x}_{\mathcal{O}_i})$ a terminally infeasible state for the system by Definition~\ref{def:terminally_infeasible}.
\end{proof}

\begin{lemma} \label{lem:at_least_one_constrained}
    Let Assumptions~\ref{assume:nonempty_convex_contraints}-\ref{assume:only_1hop_requests} hold and let the constraints $h_{j}(x_j, x_{o})$ for all $j \in \mathcal{N}_i$ and $o \in \mathcal{O}_j$ be weakly non-interfering. 
    % \edit{Further, let the initial state of the system satisfy Assumption~\ref{assume:only_1hop_requests} at $\tau = \upsilon = 1$}. 
    At any collaborative round $\tau>1$, if the system does not terminate with at least one safe action for each node, then node $i$ will become fully constrained with respect to at least one additional node $j \in \mathcal{N}_i$.
\end{lemma}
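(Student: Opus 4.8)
The plan is to track, across collaborative rounds $\tau$, the set $\mathcal{F}_i^{\tau} \subseteq \mathcal{N}_i$ of neighbors with respect to which node $i$ is fully constrained (Definition~\ref{def:fully_constrained}), and to show that if round $\tau > 1$ does not end with every node reporting $\delta_j \geq \mathbf{0}$, then $\mathcal{F}_i^{\tau} \supsetneq \mathcal{F}_i^{\tau-1}$. The first ingredient is that the part of the network strictly downstream of node $i$ freezes after the first round. By Assumption~\ref{assume:only_1hop_requests} and the tree structure (Assumption~\ref{assume:tree_structure}), at $\tau = 1$ the only requests present are those sent by the $1$-hop neighbors $j \in \mathcal{N}_i$, and each is received only by $i$ and by the corresponding $2$-hop neighbors $k \in \mathcal{N}_j \setminus \{i\}$; moreover every such $k$ receives exactly one request and $|\mathcal{O}_j| = 1$. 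Hence Lemma~\ref{lem:max_ben_iff_full_cons} applies to each $k$: if $k$ returns a positive adjustment it has committed the maximally beneficial action for $j$, which by convexity (Assumption~\ref{assume:nonempty_convex_contraints}) is unique, and $k$ remains fully constrained with respect to $j$ forever after. Therefore the help available to each $j$ from its own children is constant from $\tau = 2$ on, so $j$'s residual deficit, and hence the request $\delta_{ji}$ it forwards to $i$ whenever that deficit is unmet, is re-sent unchanged at every subsequent collaborative round by Line~20 of Algorithm~\ref{alg:coordinate}.

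Next, suppose round $\tau > 1$ does not terminate, so some $j$ still reports $\delta_j < \mathbf{0}$; by the above that deficit can only be routed through node $i$, so $i$ again faces a jointly infeasible family $\{\overline{\mathcal{U}}_{ji}\}_{j \in \mathcal{N}_i}$ with $\mathcal{U}_i \cap \bigcap_{j} \overline{\mathcal{U}}_{ji} = \emptyset$, and Coordinate invokes GetClosestPoint, which returns a compromise-seeking action $\overline{u}_i \in \partial \mathcal{U}_i$ and a positive adjustment $\varepsilon_{ji}(\overline{u}_i) > 0$ to at least one neighbor. The crux is to show that at least one such neighbor $j^{\star}$ satisfies $\delta_{j^{\star}i} + \varepsilon_{j^{\star}i}(\overline{u}_i) = \mathbf{0}$ with $j^{\star} \notin \mathcal{F}_i^{\tau-1}$. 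For the equality I would use the optimality conditions of \eqref{eq:closest_point_qp}: writing $z^{\star}$ for the point of $\bigcap_{j \in \mathcal{N}_i^{\upsilon+1}} \overline{\mathcal{U}}_{ji}$ nearest $\overline{u}_i$, the vector $\overline{u}_i - z^{\star}$ lies both in the normal cone of that hull at $z^{\star}$ (a nonnegative combination of the outward normals $-a_{ji,o}$ of the constraints active at $z^{\star}$) and in the normal cone $N_{\mathcal{U}_i}(\overline{u}_i)$; when a single constraint, that of $j^{\star}$, is active at $z^{\star}$, this forces $a_{j^{\star}i,o} \in N_{\mathcal{U}_i}(\overline{u}_i)$, so $\overline{u}_i$ is maximally beneficial for $j^{\star}$ and, by the argument of Lemma~\ref{lem:max_ben_iff_full_cons}, $i$ is fully constrained with respect to $j^{\star}$. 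That $j^{\star} \notin \mathcal{F}_i^{\tau-1}$ follows because $i$ could still help $j^{\star}$ in earlier rounds: otherwise $h_{j^{\star}}(x_{j^{\star}}, x_o)$ would have been infeasible, contradicting the assumed non-terminal-infeasibility via Lemmas~\ref{lem:infeas_implies_joint_infeas} and \ref{lem:tau_geq2_term_infeas}. Finally, once fully constrained with respect to $j^{\star}$, node $i$ stays so, since its best attainable help to $j^{\star}$ is fixed while $j^{\star}$ re-sends the identical request, so $\mathcal{F}_i$ is monotone and round $\tau$ enlarges it strictly.

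The main obstacle I anticipate is the case in which several of the $\overline{\mathcal{U}}_{ji}$ are simultaneously active at $z^{\star}$: then the normal-cone identity yields only that a nonnegative combination of the corresponding $a_{ji,o}$ lies in $N_{\mathcal{U}_i}(\overline{u}_i)$, not that any single one is maximally beneficial. Resolving this requires exploiting the pseudo-greedy rule of Algorithm~\ref{alg:getClosestPoint} -- which fully complies with the most recent requests and then projects the previous action onto $\partial \mathcal{U}_i \cap \bigcap_{j \in \mathcal{N}_i^{\upsilon+1}} \overline{\mathcal{U}}_{ji}$ -- to argue that, over the negotiation rounds inside collaborative round $\tau$, the set of mutually active, fully-complied constraints can only grow and that Algorithm~\ref{alg:colaborate} cannot meet its termination test without some new constraint pinning $\overline{u}_i$ on $\partial \mathcal{U}_i$; carefully propagating the $\bar{c}_{ji}$ updates through these negotiation rounds, and verifying that each newly pinning constraint corresponds to a neighbor not already in $\mathcal{F}_i^{\tau-1}$, is the delicate part of the argument.
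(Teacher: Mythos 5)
Your overall skeleton matches the paper's: the 2-hop neighborhood freezes after the first round via Lemma~\ref{lem:max_ben_iff_full_cons}, unsatisfied neighbors re-send identical requests by Line~20 of Algorithm~\ref{alg:coordinate}, and the set of neighbors with respect to which node $i$ is fully constrained must grow by one per collaborative round. The problem is that the crux --- that some \emph{new} neighbor pins node $i$ at round $\tau$ --- is not actually established. Your normal-cone argument on \eqref{eq:closest_point_qp} only delivers the conclusion when a single constraint is active at the projection point, and you explicitly defer the generic multi-active case as ``the delicate part''; that is precisely the case that matters (several jointly infeasible neighbors is the whole reason the algorithm is negotiating), so the central step of the lemma remains open in your plan.

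There is also a structural mismatch with the algorithm. Under the lemma's hypotheses, Lemma~\ref{lem:tau_geq2_term_infeas} rules out any infeasible or jointly infeasible request set at $\tau>1$ (that already yields a terminally infeasible state), so in the surviving case the current requests are feasible and Algorithm~\ref{alg:getClosestPoint} does \emph{not} solve the disjoint-hulls QP \eqref{eq:closest_point_qp} that your KKT analysis targets: it takes the projection branch, placing $\overline{u}_i$ in $\partial\mathcal{U}_i \cap \bigl(\bigcap_{j\in\mathcal{N}_i^{\upsilon+1}}\overline{\mathcal{U}}_{ji}\bigr)$ so that node $i$ complies fully with the requesters, and the positive adjustments $\varepsilon_{li}(\overline{u}_i)$ flow to the \emph{non-requesting} neighbors $l$. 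The paper's proof runs through this compliance: a neighbor forces entry into round $\tau+1$ only if node $i$ and all of its frozen 2-hop neighbors can give it nothing more, which by the proof of Lemma~\ref{lem:tau_geq2_term_infeas} makes node $i$ fully constrained with respect to that additional neighbor; and the persistence of full constraint (``once pinned, always pinned'') is obtained from the contraction of the viable compromise-seeking action set, imported from \cite[Theorem 3]{butler2023distributed} together with $\delta_{ji}^{\tau}\leq\delta_{ji}^{\tau+1}\leq 0$ --- an ingredient your plan replaces with an informal ``best attainable help is fixed'' assertion. To repair the proposal you would need either to complete the multi-active-constraint normal-cone argument, or to switch to the feasible/projection branch and invoke the contraction property as the paper does.
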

%\vspace{-3ex}
\begin{proof}
    If the constraints $h_{j}(x_j, x_{o_j})$ for all $j \in \mathcal{N}_i$ and $o \in \mathcal{O}_j$ are weakly non-interfering,
    then, from \cite[Theorem 3]{butler2023distributed} and by Assumption~\ref{assume:only_1hop_requests}, we can define the set of viable compromise-seeking actions at collaborative round $\tau \geq 1$ as 
    \begin{equation*}
        \begin{aligned}
            \overline{\underline{\partial \mathcal{U}}}_i^{\tau} &= \{u_i^s \in \partial \mathcal{U}_i^s: a_{ji,o} u_i^s  + \bar{c}_{ji}^\tau + \delta_{ji}^\tau  < 0 , \forall j \in \mathcal{N}_i\} \\
            & \quad \cap \{ u_i^s \in \partial \mathcal{U}_i^: a_{ji}^{\perp} u_i \geq 0, \forall j \in \mathcal{N}_i \},
        \end{aligned}
     \end{equation*}
    where $a_{ji}^{\perp}$ are the vectors orthogonal to $a_{ji}$ such that $a \cdot a_{ji}^{\perp} \geq 0$, with $a \in \mathbb{R}^{M_i}$ being any vector that satisfies the property of weakly non-interfering for $h_{j}(x_j, x_{o_j})$ for all $j \in \mathcal{N}_i$ and $o \in \mathcal{O}_j$.
    
    By \cite[Theorem 3]{butler2023distributed}, $\overline{\underline{\partial \mathcal{U}}}_i^{\tau}$
    is contracting for jointly infeasible neighbors at every round $\tau > 1$ since $\delta_{ji}^{\tau} \leq \delta_{ji}^{\tau + 1} \leq 0$.
    Therefore, if node $i$ is fully constrained with respect to any neighbor $j \in \mathcal{N}_i$, it will remain fully constrained for all subsequent rounds of negotiation. 
    
    Further, by Algorithm~\ref{alg:colab_safety} and Assumptions~\ref{assume:syncronized} and \ref{assume:only_1hop_requests}, at any round of negotiation where the collaborative round is $\tau>1$, node $i$ will choose a compromise-seeking action $\overline{u}_i \in \partial \mathcal{U}_i$ according to Algorithm~\ref{alg:getClosestPoint}, where by Lemma~\ref{lem:tau_geq2_term_infeas} all requests must be feasible, otherwise $(\mathbf{x}_i, \mathbf{x}_{\mathcal{O}_i}), \forall i \in [n]$, is a terminally infeasible state. Thus, node $i$ will comply fully with the current feasible requests and adjust non-requesting nodes $j \in \mathcal{N}_i$ by $\varepsilon_{ji}(\overline{u}_i)$, if $\varepsilon_{ji}(\overline{u}_i) > 0$. All nodes $j \in \mathcal{N}_i$ will then attempt to allocate the adjustment $\varepsilon_{ji}(\overline{u}_i)$ to neighbors $k \in \mathcal{N}_j \setminus \{ i\}$ by making requests $\delta_{kj}$, where $\delta_{j} = \sum_{k \in \mathcal{N}_j} \delta_{kj} = -\varepsilon_{ji}(\overline{u}_i)$. 
    
    Therefore, for the system to enter round $\tau+1$, at least one additional neighbor $j \in \mathcal{N}_i$ must have received adjustments $\varepsilon_{jk}(\overline{u}_k) > 0, \forall k \in \mathcal{N}_j$ such that $\delta_j < 0$, which by the proof of Lemma~\ref{lem:tau_geq2_term_infeas}, through Assumptions~\ref{assume:tree_structure}-\ref{assume:only_1hop_requests} and Lemma~\ref{lem:max_ben_iff_full_cons}, implies that every node $k \in \mathcal{N}_j$ is fully constrained with respect to neighbor $j$
    % (i.e., $\overline{u}_k$ will be unchanging $\forall k \in \mathcal{N}_j$) 
    and the system will start a new round of collaboration.  
    Thus, if the system does not terminate with at least one safe action for each node, node $i$ will become fully constrained with respect to at least one additional node $j \in \mathcal{N}_i$ at each collaborative round.
\end{proof}

\subsection{Linear-Time Convergence Result}
We now show that there is an upper bound on the number of collaborative rounds needed to find at least one safe action for all neighbors if it exists. 
%\vspace{1ex}
\begin{theorem} \label{thm:finite_time_converge_tree_net}
    Let Assumptions~\ref{assume:nonempty_convex_contraints}-\ref{assume:only_1hop_requests} hold and let the constraints $h_{j}(x_j, x_{o})$ for all $j \in \mathcal{N}_i$ and $o \in \mathcal{O}_j$ be weakly non-interfering. If $\tau > |\mathcal{N}_i|$, then $(\mathbf{x}_i, \mathbf{x}_{\mathcal{O}_i}), \forall i \in [n]$, is a terminally infeasible state.
\end{theorem}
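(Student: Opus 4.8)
The plan is to prove the statement directly from the monotone ``full-constraint'' bookkeeping already developed in Lemmas~\ref{lem:max_ben_iff_full_cons}--\ref{lem:at_least_one_constrained}: each collaborative round that fails to terminate with a jointly safe action permanently costs the root node $i$ of Assumption~\ref{assume:only_1hop_requests} at least one more neighbor it can no longer help, so after $|\mathcal{N}_i|$ such rounds node $i$ is exhausted against every neighbor and the system is frozen in a deficit state, which is exactly terminal infeasibility by the mechanism of the proof of Lemma~\ref{lem:tau_geq2_term_infeas}.

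First I would fix the monotonicity. By Lemma~\ref{lem:max_ben_iff_full_cons} and the contracting-set argument in the proof of Lemma~\ref{lem:at_least_one_constrained}, once node $i$ is fully constrained with respect to a neighbor $j\in\mathcal{N}_i$ (Definition~\ref{def:fully_constrained}) it remains fully constrained with respect to $j$ at every later negotiation round; hence the number $F(\tau)$ of neighbors with respect to which node $i$ is fully constrained by the end of collaborative round $\tau$ is nondecreasing in $\tau$. Then I would invoke Lemma~\ref{lem:at_least_one_constrained}: since the neighbor constraints are weakly non-interfering, any collaborative round $\tau>1$ that ends without the system terminating with a safe action for every node satisfies $F(\tau)\ge F(\tau-1)+1$. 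For the first round I would argue $F(1)\ge1$: if round~$1$ does not already terminate, then under Assumption~\ref{assume:only_1hop_requests} node $i$ receives a single request from each neighbor, the half-space sets $\overline{\mathcal{U}}_{ji}$ of \eqref{eq:safe_actions_for_neighbors} are jointly infeasible with $\mathcal{U}_i$, and GetClosestPoint (Algorithm~\ref{alg:getClosestPoint}) returns some $\overline{u}_i\in\partial\mathcal{U}_i$ with $\varepsilon_{ji}(\overline{u}_i)>0$ for at least one $j$, which by the contracting compromise set in the proof of Lemma~\ref{lem:at_least_one_constrained} leaves node $i$ fully constrained with respect to that $j$ from then on. Combining, if the algorithm ever reaches a collaborative round $\tau>|\mathcal{N}_i|$, then rounds $1,\dots,|\mathcal{N}_i|$ all completed without termination, so $F(|\mathcal{N}_i|)\ge 1+(|\mathcal{N}_i|-1)=|\mathcal{N}_i|$; that is, node $i$ is fully constrained with respect to \emph{every} neighbor in $\mathcal{N}_i$, and moreover the system is still reporting a deficit (else node~$i$'s loop in Algorithm~\ref{alg:colab_safety} would have exited).

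It remains to convert this into terminal infeasibility, which I would do by reusing the proof of Lemma~\ref{lem:tau_geq2_term_infeas} essentially verbatim down the tree: by Assumptions~\ref{assume:tree_structure}--\ref{assume:only_1hop_requests} and Lemma~\ref{lem:max_ben_iff_full_cons}, any $2$-hop node $k\in\mathcal{N}_j\setminus\{i\}$ that ever returns a positive adjustment to its unique upstream neighbor $j$ is already fully constrained with respect to $j$, and this propagates recursively, so no node can ever increase its contribution to \eqref{eq:lem_compact_condition}. Consequently every negotiation round ends with the same deficit, node $i$ (and each affected $j$) reissues identical requests at every subsequent collaborative round (Line~20 of Algorithm~\ref{alg:coordinate}), and no global assignment $u_1^s,\dots,u_n^s\in\mathcal{U}_1^s\times\cdots\times\mathcal{U}_n^s$ can make $\Phi_i(\mathbf{x}_i,\mathbf{x}_{\mathcal{O}_i},u_i^s,u_{\mathcal{N}_i}^s)\ge\mathbf{0}$ for all $i\in[n]$; hence $(\mathbf{x}_i,\mathbf{x}_{\mathcal{O}_i})$, $\forall i\in[n]$, is a terminally infeasible state by Definition~\ref{def:terminally_infeasible}.

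The step I expect to be the main obstacle is the $\tau=1$ base case that pins down $F(1)\ge1$. Lemma~\ref{lem:at_least_one_constrained} is stated only for $\tau>1$ and Lemma~\ref{lem:max_ben_iff_full_cons} only for a node receiving a \emph{single} request, whereas at the first round node $i$ may face several simultaneous neighbor requests, so the compromise point returned by \eqref{eq:closest_point_qp} need not be maximally beneficial for any individual neighbor. Closing this gap requires showing that the round-$1$ restriction $\overline{\mathcal{U}}_i^s\gets\{\overline{u}_i\}$ already freezes node $i$ against whichever neighbor(s) received $\varepsilon_{ji}(\overline{u}_i)>0$, i.e.\ that the ensuing contracting compromise set never again admits a point strictly more helpful to such a neighbor; I would also need to check, using the fact that weakly non-interfering constraints keep $\bigcap_{j\in\mathcal{N}_i}\overline{\mathcal{U}}_{ji}$ nonempty at every round, that GetClosestPoint's quadratic program is always posed over a nonempty, disjoint target set so that the Lemma~\ref{lem:max_ben_iff_full_cons}-type reasoning used at rounds $\tau\ge2$ stays valid throughout.
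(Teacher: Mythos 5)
Your proposal is correct and follows essentially the same route as the paper: the paper's proof is likewise a round-by-round counting argument in which each non-terminating collaborative round leaves the root node fully constrained with respect to at least one additional 1-hop neighbor (via Lemma~\ref{lem:at_least_one_constrained}), so that by round $\tau=|\mathcal{N}_i|$ all neighbors are exhausted and the terminal-infeasibility mechanism of Lemma~\ref{lem:tau_geq2_term_infeas} applies. Your explicit monotone counter $F(\tau)$ and the flagged base case at $\tau=1$ are simply a more formal rendering of the paper's Round~1/Round~2 case analysis, which handles that same initial step descriptively rather than as a separate lemma.
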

%\vspace{-3ex}
\begin{proof}
In this proof, we detail cases for each collaborative round of the system and show that if $\tau > |\mathcal{N}_i|$, then $(\mathbf{x}_i, \mathbf{x}_{\mathcal{O}_i}), \forall i \in [n]$, must be a terminally infeasible state.

\textbf{\underline{Round 1:}} According to Assumption~\ref{assume:only_1hop_requests}, at $\tau = \upsilon = 1$ node $i$ will receive at most $|\mathcal{N}_i|$ requests $(A_{ji,\mathcal{O}_j}, \delta_{ji})$, where $A_{ji,\mathcal{O}_j} \in \mathbb{R}^{K_j \times M_i}$ and $\delta_{ji} \in \mathbb{R}^{K_j}$, where $K_j \leq |\mathcal{O}_j|$ is the number of obstacles agent~$j$ is requesting assistance from agent~$i$ to avoid, with $K_j = 1$ by Assumption~\ref{assume:only_1hop_requests}.
If $\exists j,l \in \mathcal{N}_i, \, o \in \mathcal{O}_j, \, p \in \mathcal{O}_l$ such that $h_{j}(x_j, x_{o})$ and $h_{l}(x_l, x_{p})$ are jointly infeasible for node $i$, then the root node sends adjustments $\varepsilon_{ji}$ to all jointly infeasible 1-hop neighbors. 

All 1-hop neighbors $j \in \mathcal{N}_i$ then make secondary requests $\delta_{jk}$ to 2-hop neighbors $k \in \mathcal{N}_j \setminus \{ i\}$ in subsequent rounds of negotiation to attempt to reallocate the adjustment sent by the root node $i$. If no 2-hop neighbors send adjustments $\varepsilon_{jk}>0$ in response, then the algorithm terminates with a safe control action for all nodes. Otherwise, if there exists a 1-hop neighbor where all 2-hop neighbors send adjustments, then the algorithm enters a second round of collaboration.

\textbf{\underline{Round 2:}}
At collaboration round $\tau = 2$, all 1-hop neighbors that were unable to reallocate their safety needs to 2-hop neighbors during the first round make another request to the root node. 

\textbf{Case 1:} If there exists a single infeasible request, or jointly infeasible requests, from 1-hop neighbors in the second round, then $(\mathbf{x}_i, \mathbf{x}_{\mathcal{O}_i}), \forall i \in [n]$, is a terminally infeasible state by Assumptions~\ref{assume:nonempty_convex_contraints}-\ref{assume:only_1hop_requests} and Lemma~\ref{lem:tau_geq2_term_infeas}.

\textbf{Case 2:} If the requests are feasible, the root node follows Algorithm~\ref{alg:getClosestPoint} and sends adjustments $\varepsilon_{ji}$ to non-requesting 1-hop neighbors from this round. Then, 1-hop neighbors $j \in \mathcal{N}_i$ that receive adjustments from the root make additional requests $\delta_{jk}$ of the 2-hop neighbors.

\textbf{Case 2.1:} If each 1-hop neighbor can successfully allocate the adjustment $\varepsilon_{ji}$ from the root node to 2-hop neighbors $k \in \mathcal{N}_j$, then the system terminates with a safe action for all nodes.

\textbf{Case 2.2:} If there exists a 1-hop neighbor where all 2-hop neighbors send back adjustments $\varepsilon_{jk} > 0, \forall k \in \mathcal{N}_j$, then the algorithm enters another round of collaboration with the same series of cases as detailed in Round 2.

For $\tau > 2$, by Assumptions~\ref{assume:nonempty_convex_contraints}-\ref{assume:only_1hop_requests} and Lemma~\ref{lem:at_least_one_constrained}, if the constraints $h_{j}(x_j, x_{o_j})$, for all $j \in \mathcal{N}_i$ and $o_j \in \mathcal{O}_j$, are weakly non-interfering, then node $i$ will be fully constrained for at least one additional 1-hop neighbor at each round. Thus, if the algorithm has not terminated with a safe action for all nodes by round $\tau = |\mathcal{N}_i|$, then the set of neighbors $\mathcal{N}_j, \forall j \in \mathcal{N}_i$ must be fully constrained with respect to each agent~$j$ and $(\mathbf{x}_i, \mathbf{x}_{\mathcal{O}_i}), \forall i \in [n]$ is a terminally infeasible state.
\end{proof}

This analysis illustrates some of the difficulties in analyzing the finite-time convergence of Algorithm~\ref{alg:colab_safety}. Note that if we expand our analysis by relaxing Assumption~\ref{assume:only_1hop_requests} to include the resolution of multiple requests from multiple neighborhoods, then Lemma~\ref{lem:tau_geq2_term_infeas} would no longer hold since multiple requests at both the 1-hop and 2-hop neighborhood levels may incur compromises of their own that must be resolved in rounds $\tau > 1$. We conjecture that determining the worst-case convergence rate of Algorithm~\ref{alg:colab_safety} will continue to be a function of the degree of each node (i.e. $|\mathcal{N}_i|$), but this analysis is left for future work.
Note, however, that this result not only shows a sufficient termination condition of Algorithm~\ref{alg:colab_safety}, but also sheds light on potential ways of simplifying networks for better performance of distributed algorithms.

\section{Mass-Spring Formation Dynamics} \label{sec:formation_dynamics}

We now illustrate the application of our collaborative safety algorithm to safe cooperative formation control of a simplified two-dimensional agent system and simulate a multi-obstacle avoidance scenario.
\subsection{Virtual Mass-Spring Formation Model}
Consider a two-dimensional multi-agent system with distributed formation control dynamics defined by a virtual mass-spring model, with $x_i = [p_i^{\vec{x}}, p_i^{\vec{y}}, v_i^{\vec{x}}, v_i^{\vec{y}}]^\top$
\begin{equation} \label{eq:formation_dynamics_mass_spring}
    \dot{x}_i =
    \begin{bmatrix}
        v_i^{\vec{x}} \\ v_i^{\vec{y}} \\ 0 \\ 0
    \end{bmatrix}
    +
    \begin{bmatrix}
        0 & 0 \\
        0 & 0 \\
        1 & 0 \\
        0 & 1 
    \end{bmatrix}
    \left( u_i^f(x) - u_i^s \right)
\end{equation}
where
\begin{equation} \label{eq:formation_controller_mass_spring}
   u_i^f(x) =
   \begin{bmatrix}
    u_i^{f_{\vec{x}}} \\
    u_i^{f_{\vec{y}}}
   \end{bmatrix}
   =
   \frac{1}{m_i}
   \begin{bmatrix}
    -\sum_{j \in \mathcal{N}_i} k_{ij} s_{ij} \sin \theta_{ij} - b_{ij}v_i^{\vec{x}} \\
    -\sum_{j \in \mathcal{N}_i} k_{ij} s_{ij} \cos \theta_{ij} - b_{ij}v_i^{\vec{y}} \\
   \end{bmatrix}
\end{equation}
describes the desired formation behavior of the system, where agents behave as if coupled by mass-less springs with $k_{ij}$ and $b_{ij}$ being the spring and dampening constants for the virtual spring from agent~$j$ to agent~$i$, respectively, and
\begin{equation*}
    s_{ij} = L_{ij} - R_{ij}
\end{equation*}
denoting the stretch length of a given spring connection with resting length $R_{ij}$ and
\begin{equation*}
    L_{ij} = \Vert p_i - p_j\Vert_2
\end{equation*}
being the current length of the spring. We compute the $\vec{x}$ and $\vec{y}$ components of the stretched spring as
\begin{equation*}
    \sin \theta_{ij} = \frac{p_i^{\vec{x}} - p_j^{\vec{x}}}{L_{ij}}, \;\; \cos \theta_{ij} = \frac{p_i^{\vec{y}} - p_j^{\vec{y}}}{L_{ij}}.
\end{equation*}
%\vspace{-3ex}

Thus, our induced coupling model then becomes
\begin{equation}
    \bar{f}_i(x) =
    \begin{bmatrix}
        v_i^{\vec{x}} \\ v_i^{\vec{y}} \\ u_i^{f_{\vec{x}}} \\ u_i^{f_{\vec{y}}}
    \end{bmatrix}
    , \;\;
    \bar{g}_i = 
    \begin{bmatrix}
        0 & 0 \\
        0 & 0 \\
        -1 & 0 \\
        0 & -1 
    \end{bmatrix}
\end{equation}
with the first-order safety condition for a given obstacle using the barrier function candidate \eqref{eq:rel_dist_barrierfunc} computed as
\begin{equation}
    \begin{aligned}
        \phi_{i,o}^1(x_i, x_o) &= 2\left[v_i^{\vec{x}}(p_i^{\vec{x}} - p_o^{\vec{x}}) + v_i^{\vec{y}}(p_i^{\vec{y}} - p_o^{\vec{y}}) \right] \\ 
        & \quad + \alpha_i^0(h_{i,o}(x_i, x_o))
    \end{aligned}
\end{equation}
which yields the Lie derivatives of the safety condition with respect to the formation dynamics as
\begin{equation}
    \begin{aligned}
        \mathcal{L}_{\bar{f}_i}\phi_{i,o}^1(\mathbf{x}_i, x_o) &= 2 v_i^{\vec{x}}(v_i^{\vec{x}} + \alpha_i^0(p_i^{\vec{x}}-p_o^{\vec{x}})) \\ 
        & \quad + v_i^{\vec{y}}(v_i^{\vec{y}} + \alpha_i^0(p_i^{\vec{y}}-p_o^{\vec{y}})) \\
        &\quad + u_i^{f_{\vec{x}}}(p_i^{\vec{x}} - p_o^{\vec{x}}) + u_i^{f_{\vec{y}}}(p_i^{\vec{y}} - p_o^{\vec{y}}) 
    \end{aligned}
\end{equation}
and
\begin{equation}
    \mathcal{L}_{\bar{g}_i}\phi_{i,o}^1(x_i, x_o) = 2 
    \begin{bmatrix}
        p_i^{\vec{x}} - p_o^{\vec{x}} & p_i^{\vec{y}} - p_o^{\vec{y}}
    \end{bmatrix}.
\end{equation}

It should be noted that given this mass-spring network formation control law, when computing the effect of control by agent~$j$ on the safety conditions of agent~$i$
% , assuming both agents apply control through acceleration, 
yields
\begin{equation}
    \mathcal{L}_{\bar{g}_j} \mathcal{L}_{\bar{f}_i} \phi_{i,o}^1(x_i, x_o) = \mathbf{0}^{M_j}
\end{equation}
since the control input of agent~$j$ does not appear until the next derivative of $\Phi_i$. In order to avoid unnecessary computations of additional partial derivatives, each agent computes the effect of neighboring control as if neighbors directly control their velocities, i.e.,
\begin{equation}
    \bar{g}_j = 
    \begin{bmatrix}
        -1 & 0 \\
        0 & -1 \\
        0 & 0 \\
        0 & 0 
    \end{bmatrix}, \forall j \in \mathcal{N}_i.
\end{equation}
This assumption is non-physical since it would require infinite acceleration for neighbors to achieve such a discontinuous instantaneous jump in velocity. However, if we assume a finite time interval $\tau >0$ during which our acceleration controller might achieve such a change in velocity, we can approximate the necessary acceleration constraints during that time. In other words, since these terms are used to communicate action limitations on neighbors, we may approximate acceleration limits over a given time interval by simply dividing the velocity constraints by the appropriate time window length. Therefore, if the velocity constraints communicated are
\begin{equation}
    \mathcal{U}^v = \{ u \in \mathbb{R}^M: Gu +l \leq \mathbf{0} \}
\end{equation}
then we may compute acceleration constraints for a given time interval $\tau > 0$ as
\begin{equation}
    \mathcal{U}^a = \left\{ u \in \mathbb{R}^M: \frac{1}{\tau} Gu +l \leq \mathbf{0} \right\}.
\end{equation}
In application, this reliance on a known time interval may cause challenges when accounting for communication delays and inconsistent processing and actuation time intervals.

\subsection{Simulations} \label{sec:simulations}

In this section, we demonstrate the performance of our collaborative safety algorithm in the special case of a tree-structured formation network avoiding one obstacle and show that collaborative rounds do not exceed the number of neighbors for a given agent. We then demonstrate the performance of the algorithm in the unproven general case with a fully-connected formation network with multiple obstacles, both static and dynamic. To view our simulation code, see~\cite{butlerRepo2023}.

\subsubsection{Tree Network with Single Obstacles}
We construct an example for a tree-structured networked with 7 agents where the parameters of the virtual mass-spring system are $m_i = 0.5$, $r_{i,o} = 1$, $K_{ij}=3$, $R_{ij} = 3$, and $b_{ij} = 1$ for all $i \in [n]$, $j \in \mathcal{N}_j$. We design the network structure such that each agent has at most 2 children in the tree network. Further, we set control magnitude limits for each agent~$i$ as $\mathcal{U}_i = \{ u_i \in \mathbb{R}^2: \Vert u_i \Vert_{\infty} \leq 20 \}$. We then apply a constant control signal to each agent which leads the formation towards a single obstacle, where the initial and final positions of each agent and their respective trajectories through the obstacle field are shown in Figure~\ref{fig:obs_trajectory_7a}. Each agent uses the modified collaborative safety algorithm described in Algorithm~\ref{alg:colab_safety} to communicate its safety needs and accommodate safety requests to and from neighbors, respectively. Each agent then implements a first-order safety filter on their control actions as described by \eqref{eq:problem_statement} while incorporating the control constraints $\overline{\mathcal{U}}_i^s$ computed using Algorithm~\ref{alg:colab_safety}. We plot the safety-filtered control signal including the constant control signal for each agent in Figure~\ref{fig:control_safety_filter_7a}, which shows the safety-filtered control signal $u_i^s$ for both the $\vec{x}$ and $\vec{y}$ components over time. 

Further, we also see in Figure~\ref{fig:obs_trajectory_7a} that the number of collaborative rounds never exceeds the number of children for each agent, consistent with the results of Theorem~\ref{thm:finite_time_converge_tree_net}. For a video of this simulation, see \href{https://youtube.com/watch/TwnFhhScSOk}{https://youtube.com/watch/TwnFhhScSOk}. 
Note that since the spring network is not fully collected, the system allows for inter-agent collision (as seen by the final position of the agents); however, we can prevent inter-agent collisions by either incorporating other agents as obstacles to be avoided in the CBF formulation, or use a fully connected formation network which incorporates inter-agent collision avoidance naturally in the spring dynamics, as illustrated in the following simulation.  

\begin{figure}
    \centering
    \includegraphics[width=.8\columnwidth]{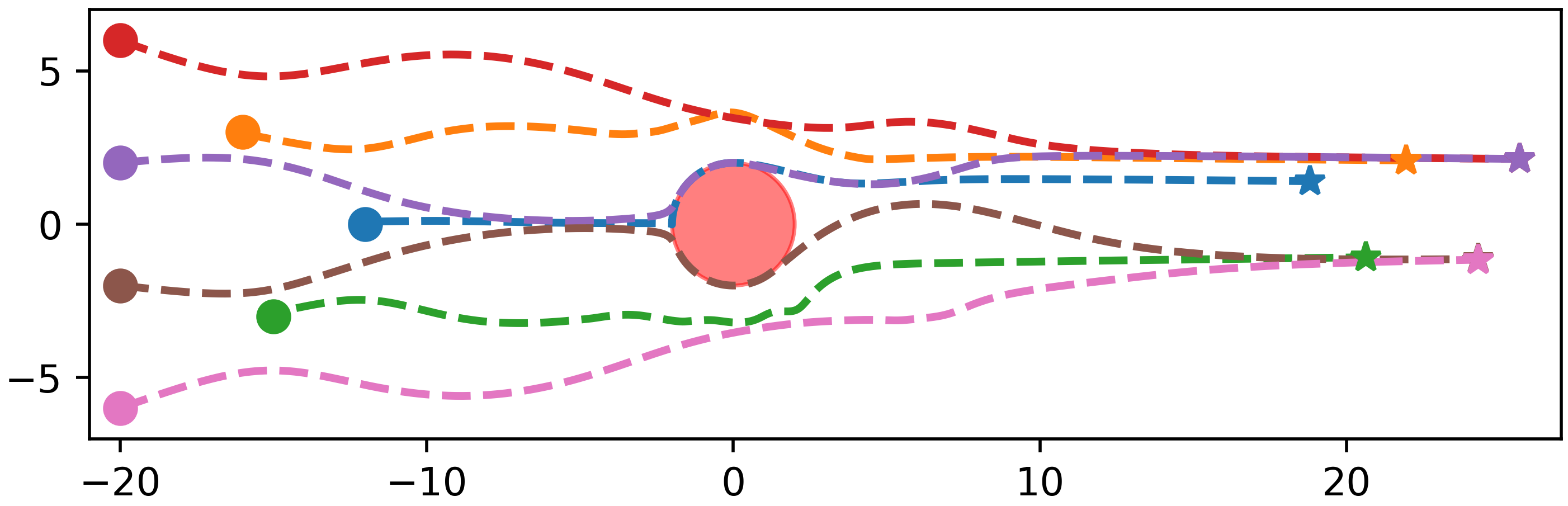}
    \caption{The trajectories of a 7-agent tree-structure formation network avoiding a single obstacle, where each agent is given a constant control signal directing it in the positive $x$ direction. Each agent implements safety filtering according to Algorithm~\ref{alg:colab_safety} and \eqref{eq:problem_statement} to avoid the obstacle while maintaining a formation behavior, according to \eqref{eq:formation_dynamics_mass_spring} and \eqref{eq:formation_controller_mass_spring}.}
    \label{fig:obs_trajectory_7a}
\end{figure}

\begin{figure}
    \centering
    \begin{subfigure}[b]{0.35\textwidth}
         \centering
         \begin{overpic}[width=\textwidth]{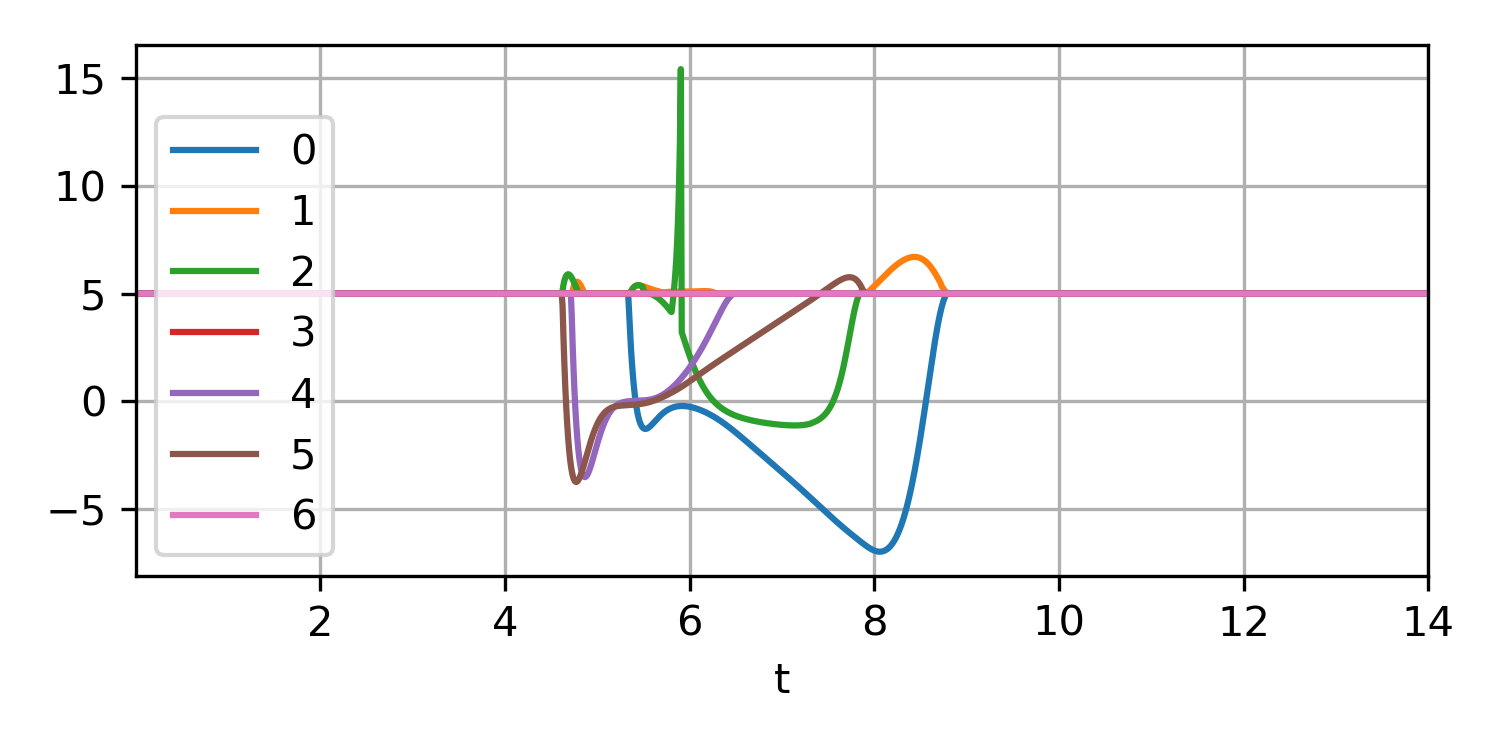}
             \put(-6.5,26){\parbox{0.75\linewidth}\normalsize \rotatebox{90}{$u_i^{s,\vec{x}}$}}
         \end{overpic} 
    \end{subfigure}
    \begin{subfigure}[b]{0.35\textwidth}
         \centering
         \begin{overpic}[width=\textwidth]{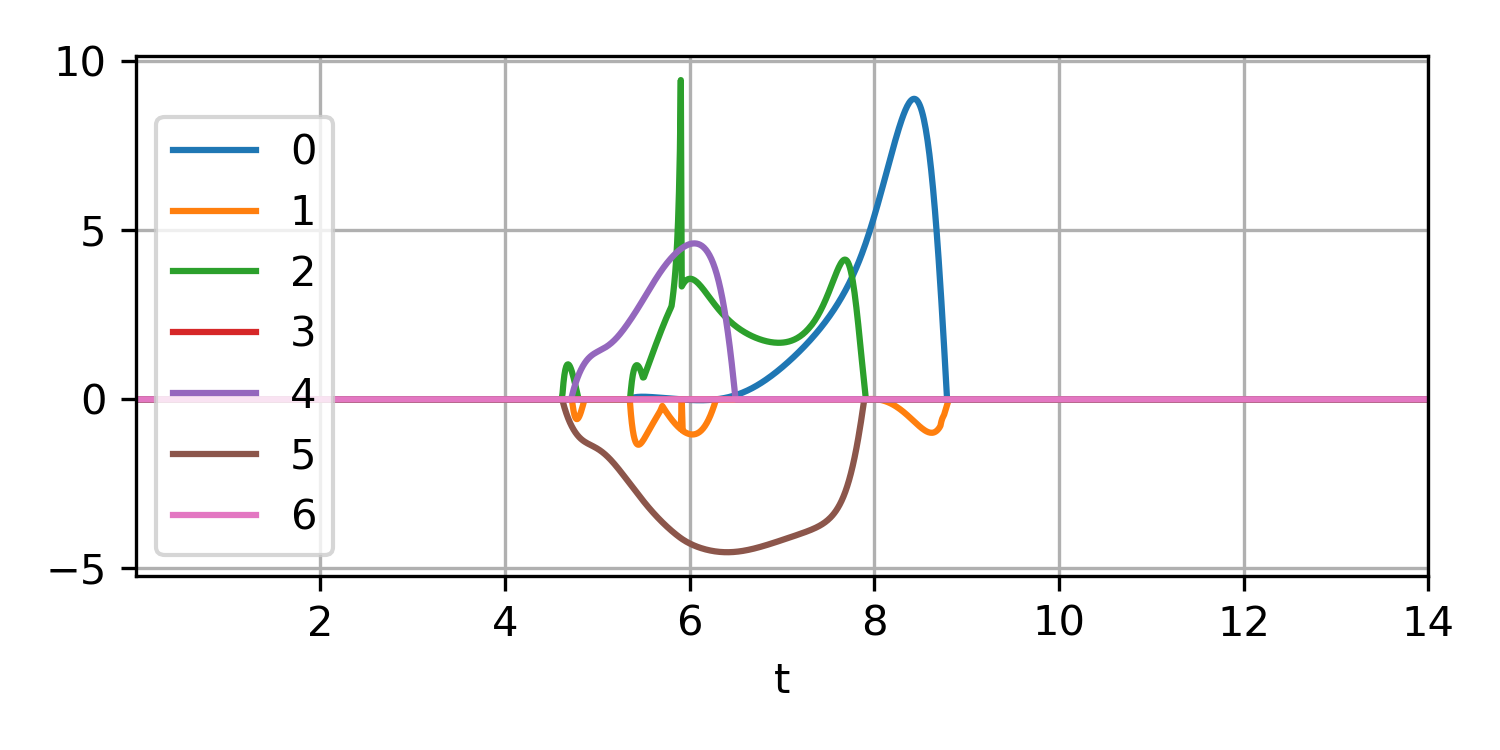}
             \put(-6.5,26){\parbox{0.75\linewidth}\normalsize \rotatebox{90}{$u_i^{s,\vec{y}}$}}
         \end{overpic}
    \end{subfigure}
    \begin{subfigure}[b]{0.35\textwidth}
         \centering
         \begin{overpic}[width=\textwidth]{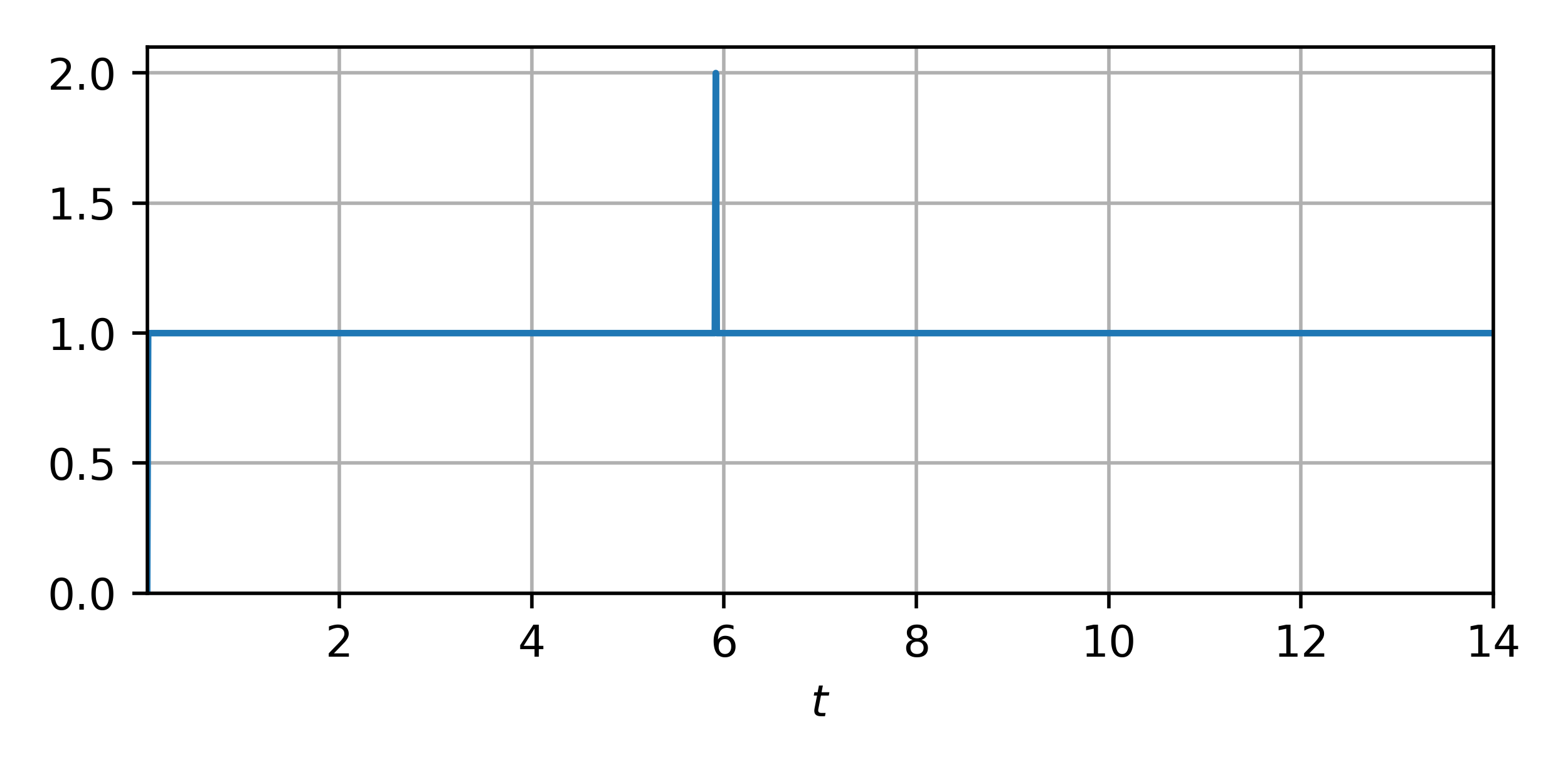}
             \put(-4,29){\parbox{0.75\linewidth}\normalsize \rotatebox{90}{$\tau$}}
         \end{overpic}
    \end{subfigure}
    \caption{The safety-filtered control signals for each agent in the $\vec{x}$ component (top) and $\vec{y}$ component (middle) of $u_i^s$, which are computed using Algorithm~\ref{alg:colab_safety} and \eqref{eq:problem_statement}, during the traversal of the formation around the obstacle shown in Figure~\ref{fig:obs_trajectory_7a}. We also show the number of collaborative rounds $\tau$ (bottom) at each time-step of the simulation, where the sampling rate is 100 Hz.}
    \label{fig:control_safety_filter_7a}
\end{figure}

\subsubsection{Fully Connected Networked with Multiple Obstacles}
We test our collaborative safety algorithm in the case of a fully connected formation of 8 agents with the same parameters as the tree network case, with the exception that the control magnitude limits for each agent~$i$ are $\mathcal{U}_i = \{ u_i \in \mathbb{R}^2: \Vert u_i \Vert_{\infty} \leq 20 \}$. We similarly apply a constant signal to each agent that leads the formation through an obstacle field as shown in Figure~\ref{fig:obs_trajectory_8a}, with the safety-filtered signals over time being shown in Figure~\ref{fig:control_safety_filter_8a}. Note in Figure~\ref{fig:control_safety_filter_8a} that even in the fully-connected case with multiple obstacles the network returns a safe action for all agents at a rate that is rapid enough for limited communication (a maximum of 13 rounds of communication). For a video of this simulation, see \href{https://youtube.com/watch/XZLdFNK7MfE}{https://youtube.com/watch/XZLdFNK7MfE}. We also use this same case to test obstacle avoidance in a dynamic environment where the formation successfully avoids the moving obstacles despite obstacle velocity not being incorporated into the design of the safety conditions, which can be viewed at \href{https://youtube.com/watch/HAUGE882PlE}{https://youtube.com/shorts/HAUGE882PlE}. However, once the obstacles' or agents' speeds are increased significantly, we find that the system reaches an unsafe state, as each agent is unable to react quickly enough to the changing environment.

\begin{figure}
    \centering
    \includegraphics[width=.8\columnwidth]{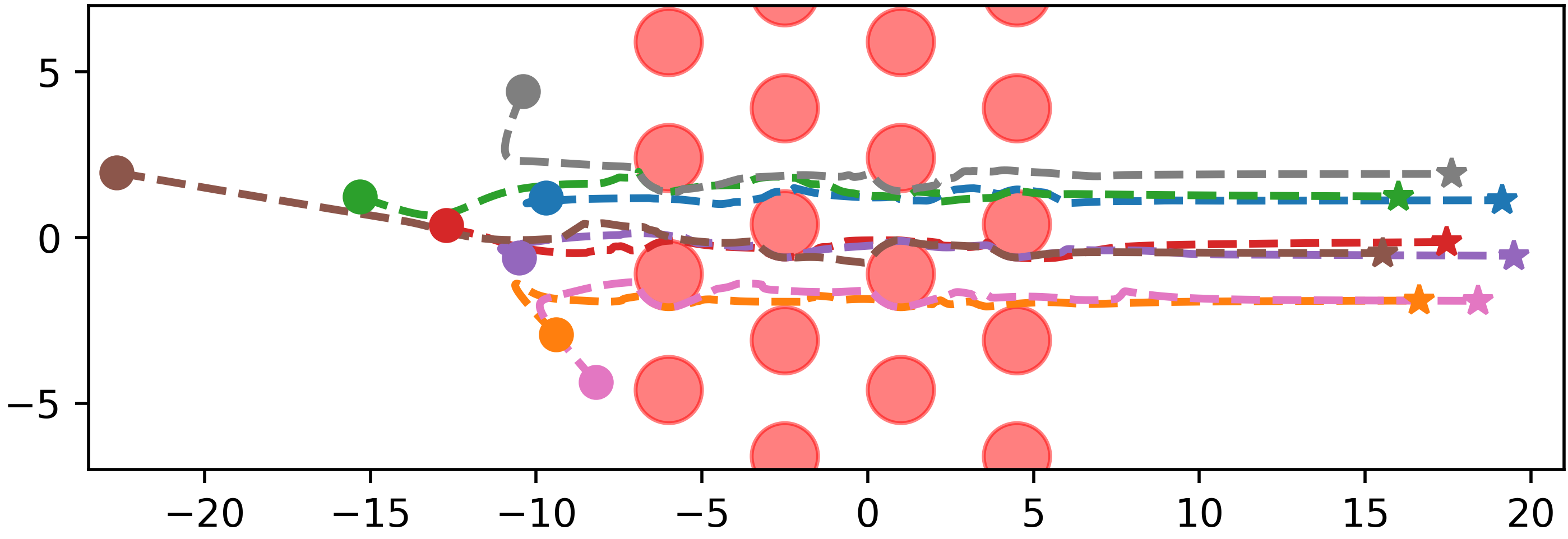}
    \caption{The trajectories of an 8-agent fully-connected formation network avoiding an obstacle field, where each agent is given a constant control signal directing it in the positive $x$ direction. Each agent implements safety filtering according to Algorithm~\ref{alg:colab_safety} and \eqref{eq:problem_statement} to avoid obstacles while maintaining a formation behavior, according to \eqref{eq:formation_dynamics_mass_spring} and \eqref{eq:formation_controller_mass_spring}.}
    \label{fig:obs_trajectory_8a}
\end{figure}

\begin{figure}
    \centering
    \begin{subfigure}[b]{0.35\textwidth}
         \centering
         \begin{overpic}[width=\textwidth]{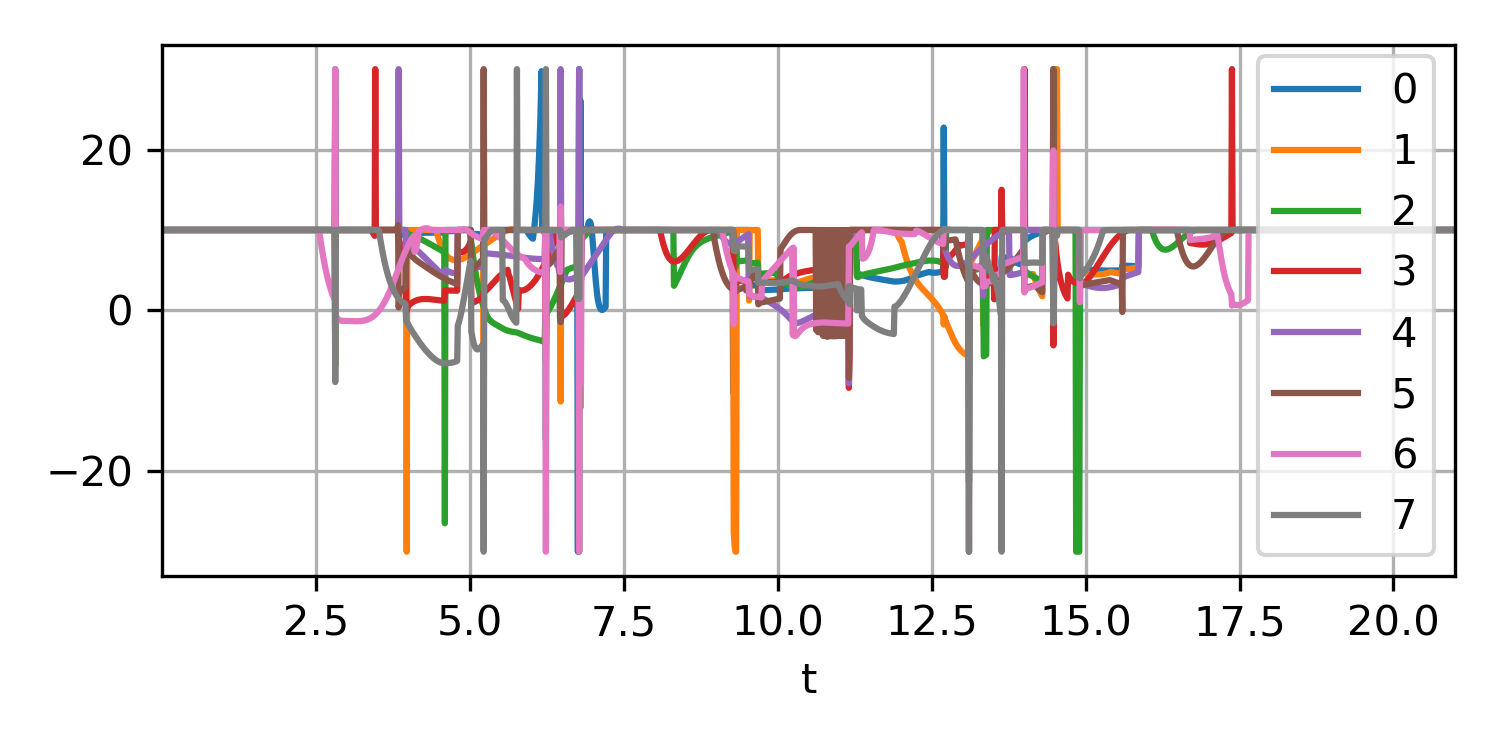}
             \put(-6,26){\parbox{0.75\linewidth}\normalsize \rotatebox{90}{$u_i^{s,\vec{x}}$}}
         \end{overpic} 
    \end{subfigure}
    \begin{subfigure}[b]{0.35\textwidth}
         \centering
         \begin{overpic}[width=\textwidth]{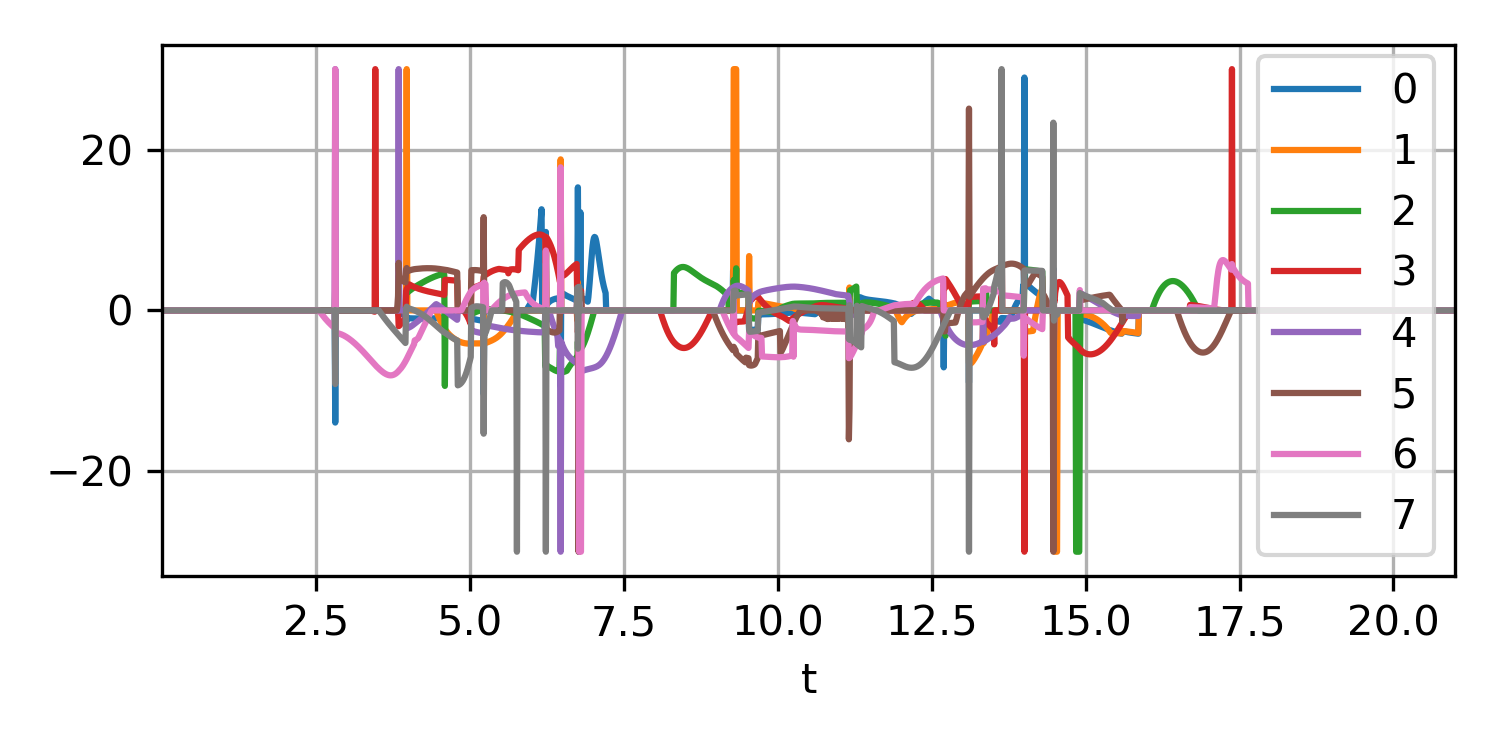}
             \put(-5,26){\parbox{0.75\linewidth}\normalsize \rotatebox{90}{$u_i^{s,\vec{y}}$}}
         \end{overpic}
    \end{subfigure}
    \begin{subfigure}[b]{0.35\textwidth}
         \centering
         \begin{overpic}[width=\textwidth]{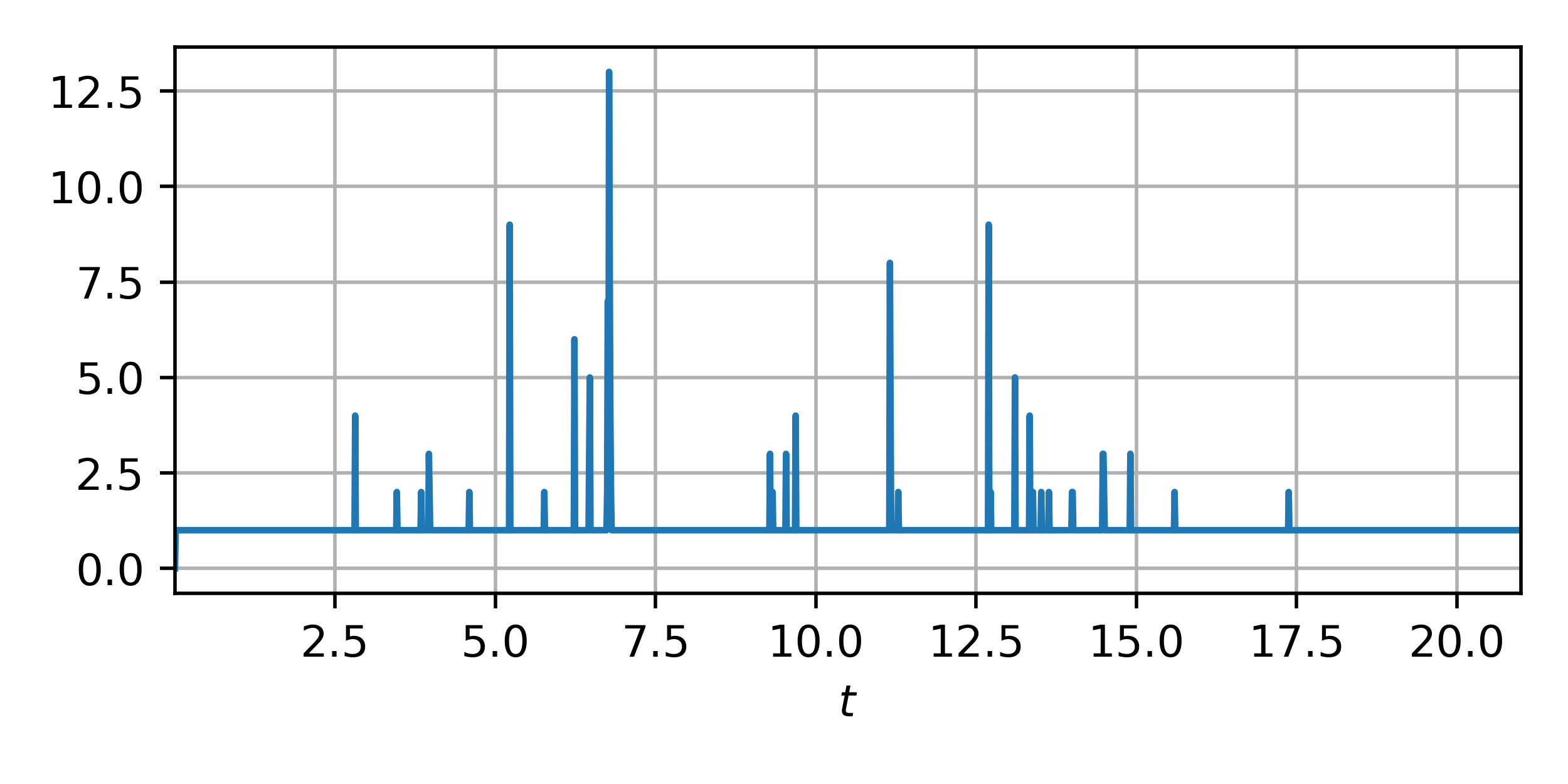}
             \put(-4,29){\parbox{0.75\linewidth}\normalsize \rotatebox{90}{$\tau$}}
         \end{overpic}
    \end{subfigure}
    \caption{The safety-filtered control signals for each agent in the $\vec{x}$ component (top) and $\vec{y}$ component (middle) of $u_i^s$, which are computed using Algorithm~\ref{alg:colab_safety} and \eqref{eq:problem_statement}, during the traversal of the formation through the obstacle field shown in Figure~\ref{fig:obs_trajectory_8a}. We also show the number of collaborative rounds $\tau$ (bottom) at each time-step of the simulation, where the sampling rate is 100 Hz.}
    \label{fig:control_safety_filter_8a}
\end{figure}

\section{Conclusion}\label{sec:conclusion}
In this work, we have presented a method for implementing a collaborative safety algorithm on formation control problems. We utilize barrier function methods to describe the network effects on virtually coupled agents via high-order barrier functions, which provide a way to encode the relative effects of neighbors' actions on each other's safety conditions. We provided an initial analysis of the worst-case convergence rate of our algorithm to terminate with a safe action in the special case of a tree-structured communication network for a single obstacle. Important areas of future work include extending our analysis to handle the general case of multiple requests in a generic network structure, as well as testing the viability of such a communication scheme to be carried out by real agents with constrained communication speed and bandwidth.

\bibliographystyle{IEEEtran}        % Include this if you use bibtex 
\bibliography{references} 

% Generated by IEEEtran.bst, version: 1.14 (2015/08/26)
\begin{thebibliography}{10}
\providecommand{\url}[1]{#1}
\csname url@samestyle\endcsname
\providecommand{\newblock}{\relax}
\providecommand{\bibinfo}[2]{#2}
\providecommand{\BIBentrySTDinterwordspacing}{\spaceskip=0pt\relax}
\providecommand{\BIBentryALTinterwordstretchfactor}{4}
\providecommand{\BIBentryALTinterwordspacing}{\spaceskip=\fontdimen2\font plus
\BIBentryALTinterwordstretchfactor\fontdimen3\font minus \fontdimen4\font\relax}
\providecommand{\BIBforeignlanguage}[2]{{%
\expandafter\ifx\csname l@#1\endcsname\relax
\typeout{** WARNING: IEEEtran.bst: No hyphenation pattern has been}%
\typeout{** loaded for the language `#1'. Using the pattern for}%
\typeout{** the default language instead.}%
\else
\language=\csname l@#1\endcsname
\fi
#2}}
\providecommand{\BIBdecl}{\relax}
\BIBdecl

\bibitem{beard2001coordination}
R.~W. Beard, J.~Lawton, and F.~Y. Hadaegh, ``A coordination architecture for spacecraft formation control,'' \emph{IEEE Transactions on Control Systems Technology}, vol.~9, no.~6, pp. 777--790, 2001.

\bibitem{reynolds1987flocks}
C.~W. Reynolds, ``Flocks, herds and schools: A distributed behavioral model,'' in \emph{Proceedings of the 14th Annual Conference on Computer Graphics and Interactive Techniques}, 1987, pp. 25--34.

\bibitem{oh2015survey}
K.-K. Oh, M.-C. Park, and H.-S. Ahn, ``A survey of multi-agent formation control,'' \emph{Automatica}, vol.~53, pp. 424--440, 2015.

\bibitem{ren2010distributed}
W.~Ren and Y.~Cao, \emph{Distributed Coordination of Multi-agent Networks: Emergent Problems, Models, and Issues}.\hskip 1em plus 0.5em minus 0.4em\relax Springer Science \& Business Media, 2010.

\bibitem{pare2020modeling}
P.~E. Par{\'e}, C.~L. Beck, and T.~Ba{\c{s}}ar, ``Modeling, estimation, and analysis of epidemics over networks: An overview,'' \emph{Annual Reviews in Control}, vol.~50, pp. 345--360, 2020.

\bibitem{butler2023optimal}
B.~A. Butler and P.~E. Par{\'e}, ``Optimal safety-critical control of epidemics,'' \emph{IEEE Control Systems Letters}, 2023.

\bibitem{tuballa2016review}
M.~L. Tuballa and M.~L. Abundo, ``A review of the development of smart grid technologies,'' \emph{Renewable and Sustainable Energy Reviews}, vol.~59, pp. 710--725, 2016.

\bibitem{tahir2019swarms}
A.~Tahir, J.~B{\"o}ling, M.-H. Haghbayan, H.~T. Toivonen, and J.~Plosila, ``Swarms of unmanned aerial vehicles—a survey,'' \emph{Journal of Industrial Information Integration}, vol.~16, p. 100106, 2019.

\bibitem{nagumo1942lage}
M.~Nagumo, ``{\"U}ber die lage der integralkurven gew{\"o}hnlicher differentialgleichungen,'' \emph{Proceedings of the Physico-Mathematical Society of Japan. 3rd Series}, vol.~24, pp. 551--559, 1942.

\bibitem{blanchini1999set}
F.~Blanchini, ``Set invariance in control,'' \emph{Automatica}, vol.~35, no.~11, pp. 1747--1767, 1999.

\bibitem{ames2016control}
A.~D. Ames, X.~Xu, J.~W. Grizzle, and P.~Tabuada, ``Control barrier function based quadratic programs for safety critical systems,'' \emph{IEEE Transactions on Automatic Control}, vol.~62, no.~8, pp. 3861--3876, 2016.

\bibitem{ames2019control}
A.~D. Ames, S.~Coogan, M.~Egerstedt, G.~Notomista, K.~Sreenath, and P.~Tabuada, ``Control barrier functions: Theory and applications,'' in \emph{Proceedings of the 2019 18th European Control Conference (ECC)}.\hskip 1em plus 0.5em minus 0.4em\relax IEEE, 2019, pp. 3420--3431.

\bibitem{chung2018survey}
S.-J. Chung, A.~A. Paranjape, P.~Dames, S.~Shen, and V.~Kumar, ``A survey on aerial swarm robotics,'' \emph{IEEE Transactions on Robotics}, vol.~34, no.~4, pp. 837--855, 2018.

\bibitem{wang2017safety}
L.~Wang, A.~D. Ames, and M.~Egerstedt, ``Safety barrier certificates for collisions-free multirobot systems,'' \emph{IEEE Transactions on Robotics}, vol.~33, no.~3, pp. 661--674, 2017.

\bibitem{santillo2021collision}
M.~Santillo and M.~Jankovic, ``Collision free navigation with interacting, non-communicating obstacles,'' in \emph{Proceedings of the 2021 American Control Conference (ACC)}.\hskip 1em plus 0.5em minus 0.4em\relax IEEE, 2021, pp. 1637--1643.

\bibitem{jankovic2021collision}
M.~Jankovic and M.~Santillo, ``Collision avoidance and liveness of multi-agent systems with {CBF}-based controllers,'' in \emph{Proceedings of the 2021 60th IEEE Conference on Decision and Control (CDC)}.\hskip 1em plus 0.5em minus 0.4em\relax IEEE, 2021, pp. 6822--6828.

\bibitem{lewis2013cooperative}
F.~L. Lewis, H.~Zhang, K.~Hengster-Movric, and A.~Das, \emph{Cooperative Control of Multi-agent Systems: Optimal and adaptive design approaches}.\hskip 1em plus 0.5em minus 0.4em\relax Springer Science \& Business Media, 2013.

\bibitem{li2017cooperative}
Z.~Li and Z.~Duan, \emph{Cooperative Control of Multi-agent Systems: A Consensus Region Approach}.\hskip 1em plus 0.5em minus 0.4em\relax CRC Press, 2017.

\bibitem{wang2017cooperative}
Y.~Wang, E.~Garcia, D.~Casbeer, and F.~Zhang, \emph{Cooperative Control of Multi-agent Systems: Theory and Applications}.\hskip 1em plus 0.5em minus 0.4em\relax John Wiley \& Sons, 2017.

\bibitem{yu2017distributed}
W.~Yu, G.~Wen, G.~Chen, and J.~Cao, \emph{Distributed Cooperative Control of Multi-agent Systems}.\hskip 1em plus 0.5em minus 0.4em\relax John Wiley \& Sons, 2017.

\bibitem{huang2010adaptive}
C.-L. Huang, Y.~P. Fallah, R.~Sengupta, and H.~Krishnan, ``Adaptive intervehicle communication control for cooperative safety systems,'' \emph{IEEE Network}, vol.~24, no.~1, pp. 6--13, 2010.

\bibitem{li2021robust}
C.~Li, J.~Wang, J.~Shan, A.~Lanzon, and I.~R. Petersen, ``Robust cooperative control of networked train platoons: A negative-imaginary systems’ perspective,'' \emph{IEEE Transactions on Control of Network Systems}, vol.~8, no.~4, pp. 1743--1753, 2021.

\bibitem{qu2010cooperative}
Z.~Qu, ``Cooperative control of networked nonlinear systems,'' in \emph{Proceedings of the 49th IEEE Conference on Decision and Control (CDC)}.\hskip 1em plus 0.5em minus 0.4em\relax IEEE, 2010, pp. 3200--3207.

\bibitem{qu2012analytic}
Z.~Qu and M.~Simaan, ``An analytic solution to the optimal design of information structure and cooperative control in networked systems,'' in \emph{Proceedings of the 51st IEEE Conference on Decision and Control (CDC)}.\hskip 1em plus 0.5em minus 0.4em\relax IEEE, 2012, pp. 4015--4022.

\bibitem{anand2022small}
M.~Anand, A.~Lavaei, and M.~Zamani, ``From small-gain theory to compositional construction of barrier certificates for large-scale stochastic systems,'' \emph{IEEE Transactions on Automatic Control}, vol.~67, no.~10, pp. 5638--5645, 2022.

\bibitem{butler2023distributed}
B.~A. Butler and P.~E. Par{\'e}, ``Distributed collaborative safety-critical control for networked dynamic systems,'' \emph{arXiv preprint arXiv:2310.03289}, 2023.

\bibitem{butler2023collaborative}
B.~A. Butler, C.~H. Leung, and P.~E. Par{\'e}, ``Collaborative safe formation control for coupled multi-agent systems,'' \emph{arXiv preprint arXiv:2311.11156}, 2023.

\bibitem{robenack2008computation}
K.~R{\"o}benack, ``Computation of multiple {L}ie derivatives by algorithmic differentiation,'' \emph{Journal of Computational and Applied Mathematics}, vol. 213, no.~2, pp. 454--464, 2008.

\bibitem{xiao2019control}
W.~Xiao and C.~Belta, ``Control barrier functions for systems with high relative degree,'' in \emph{Proceedings of the 58th Conference on Decision and Control (CDC)}, 2019, pp. 474--479.

\bibitem{xiao2021high}
------, ``High-order control barrier functions,'' \emph{IEEE Transactions on Automatic Control}, vol.~67, no.~7, pp. 3655--3662, 2021.

\bibitem{breeden2022compositions}
J.~Breeden and D.~Panagou, ``Compositions of multiple control barrier functions under input constraints,'' in \emph{Proceedings of the American Control Conference (ACC)}, 2023.

\bibitem{gurriet2020scalable}
T.~Gurriet, M.~Mote, A.~Singletary, P.~Nilsson, E.~Feron, and A.~D. Ames, ``A scalable safety critical control framework for nonlinear systems,'' \emph{IEEE Access}, vol.~8, pp. 187\,249--187\,275, 2020.

\bibitem{butlerRepo2023}
B.~A. Butler, ``Collaborative safe formation control,'' \url{https://github.com/brooksbutler/safe-formation-control}, 2023.

\end{thebibliography}

\end{document}